\def\calS{\mathcal{S}}
\def\calD{\mathcal{D}}
\def\sign{\mathrm{sign}}
\def\tanh#1{\mathrm{tanh}}
\def\prox{\mathrm{prox}}
\newtheorem{lemma}[theorem]{Lemma}
\newcommand\meanop{\bm{\mu}}
\newcommand\R{\mathbb{R}}
\newcommand\spl{\textsc{spl}}
\newcommand\lol{\textsc{lol}}
\newcommand\dotp{\langle \bm{\theta}, \bm{x} \rangle}
\newcommand\dotpi{\langle \bm{\theta}, \bm{x}_i \rangle}
\newcommand\dotmu{\langle \bm{\theta}, \bm{\mu} \rangle}
\title{Loss factorization, weakly supervised learning \\ and label noise robustness}
\author{
\textbf{Giorgio Patrini} \\
Australian National University, NICTA \\
\texttt{giorgio.patrini@anu.edu.au}
\and
\textbf{Frank Nielsen} \\
\'Ecole Polytechnique, Sony Computer Science Laboratories Inc. \\
\texttt{nielsen@lix.polytechnique.fr}
\and
\textbf{Richard Nock} \\
NICTA, Australian National University \\
\texttt{richard.nock@nicta.com.au}
\and
\textbf{Marcello Carioni} \\
Max Planck Institute for Mathematics in the Sciences \\
\texttt{marcello.carioni@mis.mpg.de}}
\begin{document} 
\maketitle

\begin{abstract}
We prove that
the empirical risk of most well-known loss functions factors into a linear term
aggregating all labels with a term that is label free,
and can further be expressed by sums of the loss.
This holds true even for 
non-smooth, non-convex losses and in any
\textsc{rkhs}. The first term is a 
(kernel) mean operator --the focal quantity of this work-- which we characterize as the sufficient statistic for the labels.
The result tightens known generalization bounds and sheds new light on their interpretation.

Factorization has a direct application on weakly supervised learning.
In particular, we demonstrate that algorithms like \textsc{sgd} and proximal methods can be adapted with minimal
effort to handle weak supervision, once the mean operator has been estimated.
We apply this idea to learning with asymmetric noisy labels, connecting and extending prior work.
Furthermore, we show that most losses enjoy a
data-dependent (by the mean operator) form of noise robustness,
in contrast with known negative results.
\end{abstract}

\section{Introduction}

Supervised learning is by far the most effective application of the machine learning paradigm. However, there is a growing need of decoupling the success of the field from its topmost framework, often unrealistic in practice. In fact while the amount of available data grows continuously, its relative training labels --often derived by human effort-- become rare, and hence learning is performed with partially missing, aggregate-level and/or noisy labels. For this reason, \emph{weakly supervised learning} (\textsc{wsl}) has attracted much research. In this work, we focus on binary classification under weak supervision. Traditionally, \textsc{wsl} problems are attacked by designing \emph{ad-hoc} loss functions and optimization algorithms tied to the particular learning setting. Instead, we advocate to ``do not reinvent the wheel" and present an unifying treatment. In summary, we show that, under a mild decomposability assumption,

\emph{Any loss admitting a minimizing algorithm over fully labelled data, can also be minimized in \textsc{wsl} setting with provable generalization and noise robustness guarantees. Our proof is constructive: we show that a simple change in the input and of one line of code is sufficient.}

\begin{figure}[t]
\centering
\subfigure[logistic loss]{\label{fig:1-logistic} \includegraphics[width=.30\textwidth]{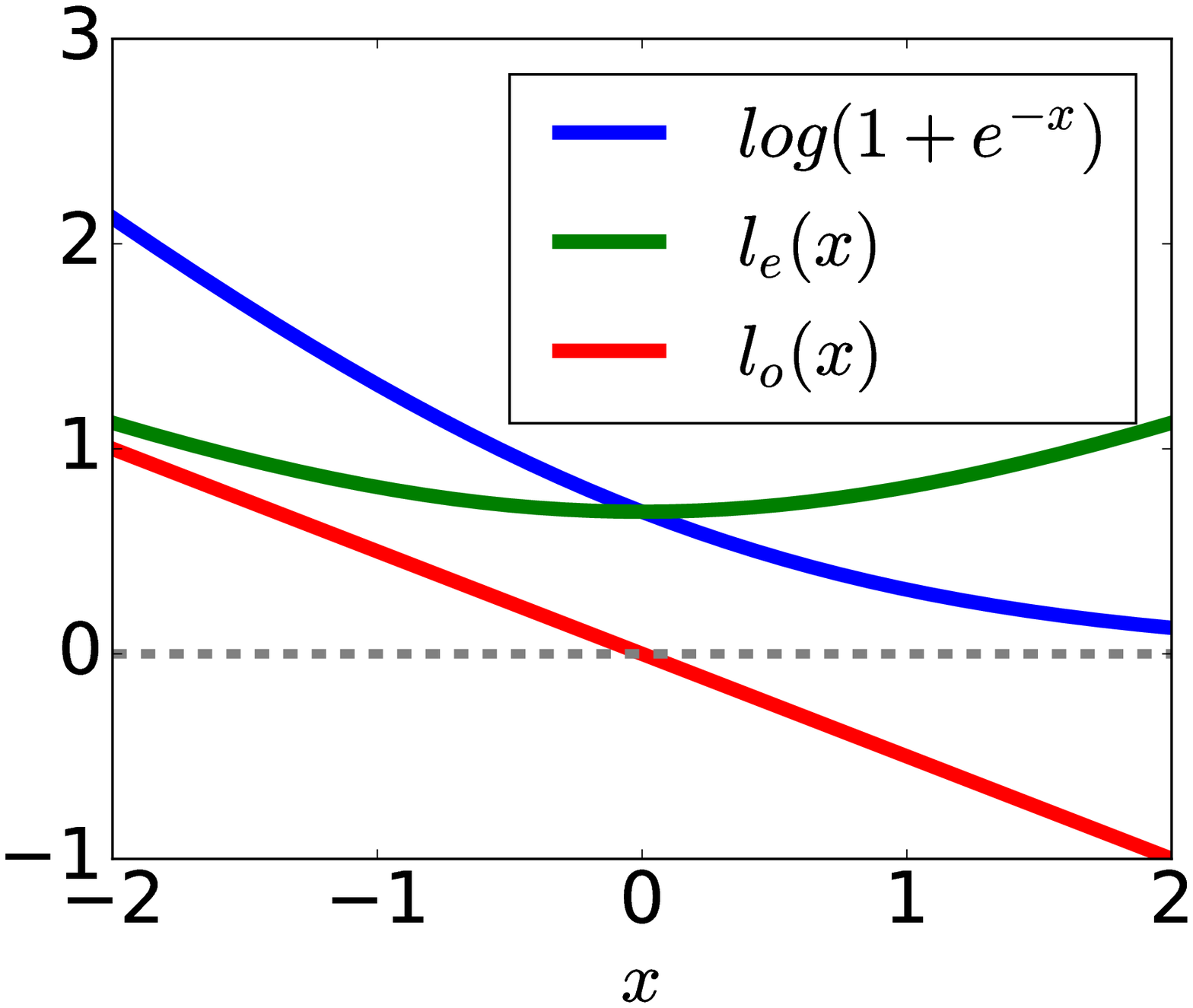}}
\subfigure[square loss]{\label{fig:1-square} \includegraphics[width=.30\textwidth]{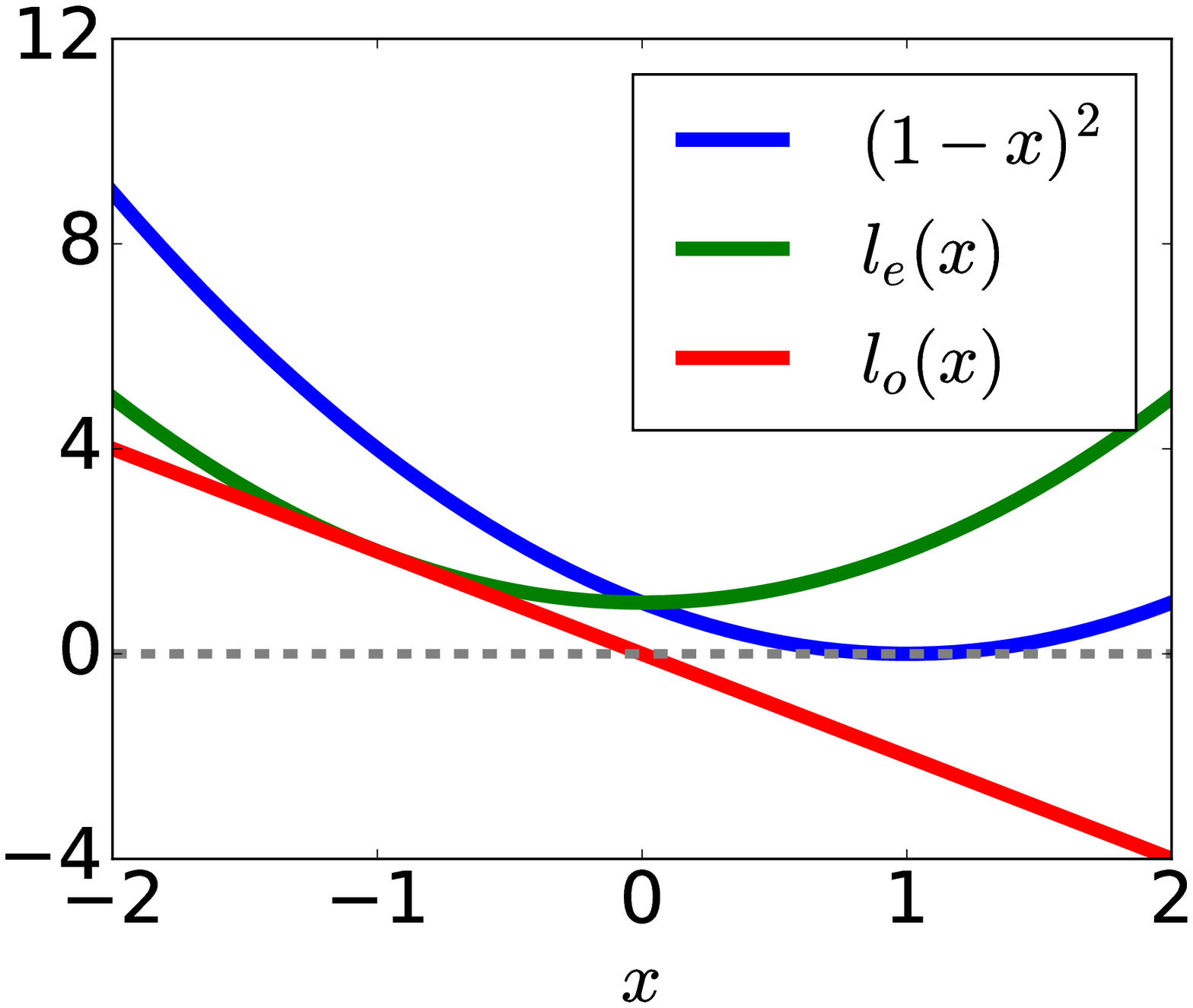}}
\caption{Loss factorization: $l(x) = l_e(x) + l_o(x)$.}\label{fig:1}
\end{figure}

\subsection{Contribution and related work}
We introduce \emph{linear-odd losses} (\lol s), a definition that not does demand smoothness or convexity, but that a loss $l$ is such that $l(x) - l(-x)$ is linear. Many losses of practical interest are such, \emph{e.g.} logistic and square. We prove a theorem reminiscent of Fisher-Neyman's factorization \citep{lcTO} of the exponential family which lays the foundation of this work: it shows how empirical $l$-risk factors (Figure \ref{fig:1}) in a label free term with another incorporating a \emph{sufficient statistic of the labels}, the mean operator. The interplay of the two components is still apparent on newly derived generalization bounds, that also improve on known ones \citep{kstOT}. Aside from factorization, the above linearity is known to guarantee convexity to certain losses used in learning on \emph{positive and unlabeled data} (\textsc{pu}) in the recent \citep{pnsCF}.

An advantage of isolating labels comes from applications on \textsc{wsl}. In this scenario, training labels are only partially observable due to an unknown noise process \citep{ggwDO, hgilWS}. For example, labels may be \emph{noisy} \citep{ndrtLW}, missing as with \emph{semi-supervision} \citep{cbzSS} and \textsc{pu} \citep{pnsCF}, or aggregated as it happens in \emph{multiple instance learning} \citep{dllST} and \emph{learning from label proportions} (\textsc{llp}) \citep{qsclEL}. As the success of those areas shows, labels are not strictly needed for learning. However, most \textsc{wsl} methods implicitly assumes that  labels must be recovered in training, as pointed out by \citep{jbAC}. Instead, sufficiency supports a more principled two-step approach: (1) estimate the mean operator from weakly supervised data and (2) plug it into any \lol~and resort to known procedures for empirical risk minimization (\textsc{erm}). Thus, (1) becomes the only technical obstacle in adapting an algorithm, although often easy to surpass. Indeed, this approach unifies a growing body of literature \citep{qsclEL, pnrcAN, rmwLW, gwlzRM}. As a showcase, we implement (2) by adapting stochastic gradient descent (\textsc{sgd}) to \textsc{wsl}. The upgrade only require to transform input by a ``double sample trick" and to sum the mean operator in the model update.

Finally, we concentrate on learning with asymmetric label noise. We connect and extend previous work of \citep{ndrtLW, rmwLW} by designing an unbiased mean operator estimator, for which we derive generalization bounds independent on the chosen \lol~and algorithm. Recent results \citep{lsRC} have shown that requiring the strongest form of robustness --on any possible noisy sample-- rules out most losses commonly used, and have drifted research focus on non-convex \citep{srLS, mmvOT, dvTL} or linear losses \citep{rmwLW}. Our approach is more pragmatic, as we show that \emph{any} \lol s enjoy an \emph{approximate} form of noise robustness which converges, on a data-dependent basis, to the strongest one. The mean operator is still central here, being the data-dependent quantity that shapes the bound. The theory is validated with experimental analysis, for which we call the adapted \textsc{sgd} as a black box.

Elements of this work appeared in an early version \citep{pnnBW}, mostly interested in elucidating the connection between loss factorization and $\alpha$-label differential privacy \citep{khSC}.

Next, Section \ref{sec:setting} settles notations and background. Section \ref{sec:factorization} states the Factorization Theorem. Sections \ref{sec:weakly} and \ref{sec:noisy} focus on \textsc{wsl} and noisy labels. Section \ref{sec:concl} discusses the paper. Proofs and additional results are given in Appendix.

\section{Learning setting and background}\label{sec:setting}

We denote vectors in bold as $\bm{x}$ and $1\{p\}$ the indicator of $p$ being $\textsc{true}$. We define $[m] \defeq \{1, \dots, m\}$ and $[x]_+ \defeq \max(0, x)$. In binary classification, a learning sample $\mathcal{S} = \{(\bm{x}_i, y_i), i \in [m] \}$ is a sequence of (observation, label) pairs, the examples, drawn from an unknown distribution $\calD$ over $\mathcal{X} \times \mathcal{Y}$, with $\mathcal{X} \subseteq \mathbb{R}^d$ and $\mathcal{Y} = \{-1,1\}$. Expectation (or average) over $(\bm{x}, y)\sim \calD$ ($\calS$) is denoted as $\expect_{\calD}$ ($\expect_{\calS}$).

Given a hypothesis $h \in \mathcal{H}, h: \mathcal{X} \rightarrow \mathbb{R}$, a loss is any function $l: \mathcal{Y} \times \mathbb{R} \rightarrow \mathbb{R}$. A loss gives a penalty $l(y, h(\bm{x}))$ when predicting the value $h(\bm{x})$ and the observed label is $y$. We consider \emph{margin losses}, \emph{i.e.} $l(y,h(\bm{x})) = l(yh(\bm{x}))$ \citep{rwCB}, which are implicitly symmetric: $l(yh(\bm{x}))=l(-y \cdot (-h(\bm{x})))$. For notational convenience, we will often use a generic scalar argument $l(x)$. Examples are $01$ loss $1\{x <0\}$, logistic loss $\log(1 + e^{-x})$, square loss $(1 - x)^2$ and hinge loss $[1-x]_+$.

The goal of binary classification is to select a hypothesis $h \in \mathcal{H}$ that generalizes on $\calD$. That is, we aim to minimize the \emph{true risk} on the $01$ loss $R_{\calD, 01}(h) \defeq \expect_{\calD}[1\{yh(\bm{x}) <0\}]$. In practice, we only learn from a finite learning sample $\calS$ and thus minimize the \emph{empirical $l$-risk} $R_{\calS, l} (h) \defeq \expect_{\mathcal{S}} [l(yh(\bm{x}))] = \frac{1}{m}\sum_{i \in [m]} l(y_ih(\bm{x}_i))$, where $l$ is a tractable upperbound of $01$ loss.

Finally, we discuss the meaning of \textsc{wsl} --and in particular of weakly supervised \emph{binary classification}. The difference with the above is at training time: we learn on a sample $\tilde{\calS}$ drawn from a noisy distribution $\tilde{\calD}$ that may flip, aggregate or suppress labels, while observations are the same. Still, the purpose of learning is unchanged: to minimize the true risk. A rigorous definition will not be relevant in our study.

\subsection{Background: exponential family and logistic loss}\label{sec:exp-fam}

Some background on the exponential family is to come.
We can learn a binary classifier fitting a model in the conditional exponential family parametrized by $\bm{\theta}$: $p_{\bm{\theta}}(y | \bm{x}) = \exp (\langle \bm{\theta}, y \bm{x} \rangle - \log \sum_{y \in \mathcal{Y}} \exp \langle \bm{\theta}, y \bm{x} \rangle)$, with $y$ random variable. The two terms in the exponent are the log-partition function and the sufficient statistic $y_i\bm{x}_i$, which fully summarizes one example $(\bm{x}, y)$. The Fisher-Neyman theorem \citep[Theorem 6.5]{lcTO} gives a sufficient and necessary condition for sufficiency of a statistic $T(y)$: the probability distribution factors in two functions, such that ${\bm{\theta}}$ interacts with the $y$ only through $T$:
$$p_{\bm{\theta}}(y) = g_{\bm{\theta}}(T(y)) g'(y)\:\:.$$
 In our case, it holds that $g'(y| \bm{x}) = 1$, $T(y| \bm{x}) = y\bm{x}$ and $g_{\bm{\theta}}(\cdot| \bm{x}) = \exp (\langle \bm{\theta}, \cdot \rangle\ - \log \sum_{y \in \mathcal{Y}} \exp (\langle \bm{\theta}, y \bm{x} \rangle)$, since the value of $y$ is not needed to compute $g_{\bm{\theta}}$. This shows how $y\bm{x}$ is indeed sufficient (for $y$). Now, under the \emph{i.i.d.} assumption, the log-likelihood of $\bm{\theta}$ is (the negative of)
\begin{align}
&\sum_{i=1}^m \log \sum_{y \in \mathcal{Y}} e^{y \dotpi}
- \sum_{i=1}^m \langle \bm{\theta}, y_i\bm{x}_i \rangle \label{eq:first}\\
&= \sum_{i=1}^m \log \sum_{y \in \mathcal{Y}} e^{y \dotpi} - 
\sum_{i=1}^m \log e^{ y_i \dotpi} \nonumber \\
&= \sum_{i=1}^m \log \left( \frac{e^{\dotpi} + e^{-\dotpi}} 
{e^{y_i\dotpi}} \right) \nonumber \\
&= \sum_{i=1}^m \log \left( 1 + e^{-2y_i\dotpi}\right)\:\:. \label{eq:second-last}
\end{align}
 Step (\ref{eq:second-last}) is true because $y \in \mathcal{Y}$. At last, by At last, by re-parameterizing $\bm{\theta}$ and normalizing, we obtain the empirical risk of logistic loss. Equation (\ref{eq:first}) shows how the loss splits into a linear term aggregating the labels and another, label free term. We translate this property for classification with \textsc{erm}, by transferring the ideas of sufficiency and factorization to a wide set of losses including the ones of \citep{pnrcAN}.

\section{Loss factorization and sufficiency}\label{sec:factorization}

The  linear term just encountered in logistic loss integrates a well-studied statistical object.
\begin{definition} The (empirical) \emph{mean operator} of a learning sample $\mathcal{S}$ is
$\meanop_{\calS} \defeq \expect_{\calS} \left[ y \bm{x} \right] \:\:.$
\end{definition}
We drop the $\calS$ when clear by the context. The name \emph{mean operator}, or mean map, is borrowed from the theory of Hilbert space embedding \citep{qsclEL}\footnote{We decide to keep the lighter notation of linear classifiers, but nothing should prevent the extension to non-parametric models, exchanging $\bm{x}$ with an implicit feature map $h(\bm{x})$.}. Its importance is due to the injectivity of the map --under conditions on the kernel-- which is used in applications such as two-sample and independence tests, feature extraction and covariate shift \citep{sgssAH}. Here, $\meanop$ will play the role of sufficient statistic for labels \emph{w.r.t.} a set of losses.
\begin{definition}
A function $T(\mathcal{S})$ is said to be a sufficient statistic for a variable $z$ \emph{w.r.t.} a set of losses $\mathcal{L}$ and a hypothesis space $\mathcal{H}$ when for any $l \in \mathcal{L}$, any $h \in \mathcal{H}$ and any two samples $\mathcal{S}$ and $\mathcal{S}'$ the empirical $l$-risk is such that
$$
R_{\calS, l}(h) - R_{\calS', f}(h) \text{~does not depend on~} z \Leftrightarrow T(\mathcal{S}) = T(\mathcal{S}')\:\:.
$$
\end{definition}
The definition is motivated by the one in Statistics, taking log-odd ratios \citep{pnrcAN}. We aim to establish sufficiency of mean operators for a large set of losses.  The next theorem is our main contribution. 

\begin{theorem}[Factorization]\label{th:factorization}
Let $\mathcal{H}$ be the space of linear hypotheses. Assume that a loss $l$ is such that $l_o(x) \defeq (l(x) - l(-x)) / 2$ is linear. Then, for any sample $\calS$ and hypothesis $h \in \mathcal{H}$ the empirical $l$-risk can be written as
\begin{align*}
R_{\calS, l}(h) = \frac{1}{2} R_{\calS_{2x}, l}(h) + l_o(h(\meanop_{\calS})) \:\:,
\end{align*}
 where $\calS_{2x} \defeq \{ (\bm{x}_i, \sigma), i \in [m], \forall \sigma \in \mathcal{Y} \}$.
\end{theorem}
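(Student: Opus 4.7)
The plan is to decompose $l$ into its even and odd parts and handle each contribution separately. I would introduce the even companion $l_e(x) \defeq (l(x)+l(-x))/2$, giving the pointwise identity $l = l_e + l_o$ with $l_e$ automatically even and $l_o$ automatically odd. The hypothesis that $l_o$ is linear, combined with its built-in oddness, forces $l_o(x) = c\,x$ for some constant $c = l_o(1)$ (an odd affine map has zero intercept). Splitting the empirical risk along this decomposition gives
$$R_{\calS, l}(h) = \frac{1}{m}\sum_{i\in[m]} l_e(y_i h(\bm{x}_i)) \;+\; \frac{1}{m}\sum_{i\in[m]} l_o(y_i h(\bm{x}_i)).$$

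For the even piece, because $y_i \in \{-1,+1\}$ and $l_e$ is even the label disappears: $l_e(y_i h(\bm{x}_i)) = l_e(h(\bm{x}_i)) = \tfrac{1}{2}[l(h(\bm{x}_i)) + l(-h(\bm{x}_i))]$. Averaging over $i$ is exactly a risk-type quantity over the duplicated sample $\calS_{2x}$, in which each observation is paired with both possible labels; after unwinding the normalization convention for $\calS_{2x}$, this matches the term $\tfrac{1}{2}R_{\calS_{2x}, l}(h)$ in the statement.

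For the odd piece I would use $l_o(x) = cx$ together with linearity of $h$ to commute the sample average with the hypothesis evaluation:
$$\frac{1}{m}\sum_{i\in[m]} l_o(y_i h(\bm{x}_i)) = c \cdot \frac{1}{m}\sum_{i\in[m]} y_i h(\bm{x}_i) = c\cdot h\!\left(\frac{1}{m}\sum_{i\in[m]} y_i\bm{x}_i\right) = c \cdot h(\meanop_\calS) = l_o(h(\meanop_\calS)).$$
Adding the two contributions yields the claimed identity.

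No single step is deep; the argument is essentially bookkeeping built on the even/odd decomposition. The only mildly delicate points I expect to have to flag are (i) noting that ``linear'' combined with oddness of $l_o$ kills any additive constant, which is precisely what makes linearity of $h$ push the label-weighted average through to $h(\meanop_\calS)$, and (ii) reconciling the normalization of the empirical risk on $\calS_{2x}$ versus on $\calS$ so that the prefactor $\tfrac{1}{2}$ in front of $R_{\calS_{2x}, l}(h)$ lands exactly where the statement places it. The conceptual content of the theorem is simply that the label $y_i$ enters $l$ only through its odd part, and once the odd part is linear the $m$-fold sum over labels collapses into a single evaluation at the kernel mean $\meanop_\calS$.
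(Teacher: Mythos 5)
Your proposal is correct and follows essentially the same route as the paper: the paper's add-and-subtract step is exactly your even/odd decomposition $l = l_e + l_o$, the label is eliminated from the even part because $y \in \{-1,1\}$, and the odd part collapses to $l_o(h(\meanop_{\calS}))$ by linearity of $l_o$, $h$, and the sample average. Your explicit remarks on the zero intercept forced by oddness and on the normalization convention for $R_{\calS_{2x},l}$ are the right points to flag and are consistent with the paper's argument.
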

\begin{proof}
We write $R_{\calS, l}(h) =  \expect_{\calS} [ l(yh(\bm{x}))$ ] as 
\begin{align}
 & \frac{1}{2}  \expect_{\calS} \Big[ l(y h(\bm{x})) + l(-y h(\bm{x}))
 + l(y h(\bm{x})) - l(-y h(\bm{x})) \Big] \nonumber \\
&=  \frac{1}{2} \expect_{\calS} \left[ \sum\nolimits_{\sigma\in \mathcal{Y}} l(\sigma h(\bm{x})) \right] + \expect_{\calS} \Big[ l_o(y h(\bm{x})) \Big] \nonumber \\
&=  \frac{1}{2} \expect_{\calS_{2x}} \Big[ l(\sigma h(\bm{x})) \Big] + \expect_{\calS} \Big[ l_o(h(y \bm{x})) \Big] \label{eq:fact1} \:\:.
\end{align}
 Step \ref{eq:fact1} is due to the definition of $\calS_{2x}$ and linearity of $h$. The Theorem follows by linearity of $l_o$ and expectation.
\end{proof}
 Factorization splits $l$-risk in two parts. A first term is the $l$-risk computed \emph{on the same loss} on the ``doubled sample" $\calS_{2x}$ that contains each observation twice, labelled with opposite signs, and hence it is label free. A second term is a loss $l_o$ of $h$ applied to the mean operator $\meanop_{\calS}$, which aggregates all sample labels. Also observe that $l_o$ is by construction an odd function, \emph{i.e.} symmetric \emph{w.r.t.} the origin. We call the losses satisfying the Theorem linear-odd losses. 
\begin{definition}
A loss $l$ is $a$-linear-odd ($a$-\lol) when $l_o(x) = (l(x) - l(-x))/2 =ax$, for any $a \in \R$.
\end{definition}
Notice how this does not exclude losses that are not smooth, convex, or proper \citep{rwCB}. The definition puts in a formal shape the intuition behind \citep{pnsCF} for \textsc{pu} --although unrelated to factorization. From now on, we also consider $\mathcal{H}$ as the space linear hypotheses $h(\cdot) = \langle \bm{\theta}, \cdot \rangle $. (Theorem \ref{th:factorization} applies beyond \lol s and linear models; see Section \ref{sec:concl}.) As a consequence of Theorem \ref{th:factorization}, $\meanop$ is sufficient for all labels.

\begin{table}[t!]
\centering
$$
\begin{array}{|l|c|c|} \hline
\mbox{}  & \mbox{loss $l$}  & \mbox{odd term $l_o$} \\ \toprule \hline 
\mbox{\lol}  &  l(x)  & -ax \\ \hline
\mbox{$\rho$-loss}  & \rho |x|-\rho x+1 & -\rho x \:\:(\rho \geq 0) \\ \hline
\mbox{unhinged}  &  1-x & -x \\ \hline
\mbox{perceptron} & \max(0, -x) & -x \\ \hline
\mbox{2-hinge} & \max(-x, 1/2 \max (0,  1-x)) & -x \\ \hline
\mbox{\spl}  &  a_l+ l^\star(-x) / b_l & - x / (2 b_l)\\
\mbox{logistic}  & \log (1+e^{-x}) & - x /2\\ 
\mbox{square} & (1-x)^2 & -2x \\
\mbox{Matsushita} & \sqrt{1+x^2}-x & -x \\ \bottomrule
\end{array}
$$
\caption{Factorization of linear-odd losses: \spl~(including logistic, square and Matsushita) \citep{nnBD}, double ``2"-hinge and perceptron \citep{pnsCF}, unhinged \citep{rmwLW}. For $\rho$-loss see the text.
\label{table:1}}
\end{table}

\begin{corollary}\label{th:suff}
The mean operator $\meanop$ is a sufficient statistic for the label $y$ with regard to \lol s and $\mathcal{H}$.
\end{corollary}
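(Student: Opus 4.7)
The plan is to invoke Theorem \ref{th:factorization} directly; the corollary will follow with essentially no additional work because factorization already isolates the entire label dependence of the empirical $l$-risk into a single scalar formed from $\meanop$.

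First, I would apply the Factorization Theorem to two arbitrary samples $\calS$ and $\calS'$ and subtract the resulting expressions to obtain
\begin{equation*}
R_{\calS, l}(h) - R_{\calS', l}(h) \;=\; \tfrac{1}{2}\bigl[R_{\calS_{2x}, l}(h) - R_{\calS'_{2x}, l}(h)\bigr] \;+\; l_o(h(\meanop_{\calS})) - l_o(h(\meanop_{\calS'})).
\end{equation*}
The key observation is that the first bracket is label-free by construction of $\calS_{2x}$: each observation appears with both labels $\pm 1$, so no information about the labels of $\calS$ (or $\calS'$) survives. All label dependence of the difference is therefore concentrated in the second term, and via $h(\bm{x})=\langle\bm{\theta},\bm{x}\rangle$ and $l_o(x)=ax$, that term simplifies to $a\,\langle\bm{\theta},\, \meanop_{\calS} - \meanop_{\calS'}\rangle$.

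For the forward implication ($T(\calS)=T(\calS') \Rightarrow$ difference is label-free), I would note that $\meanop_{\calS}=\meanop_{\calS'}$ makes $a\,\langle\bm{\theta},\meanop_{\calS}-\meanop_{\calS'}\rangle=0$ for every $\bm{\theta}$ and every $a$, leaving only the label-free bracket. For the converse, if the difference is label-free for every $l\in\mathrm{LOL}$ and every $h\in\mathcal{H}$, then in particular taking any $a$-\lol{} with $a\neq 0$ and ranging $\bm{\theta}$ over a basis of $\mathbb{R}^d$ forces every coordinate of $\meanop_{\calS}-\meanop_{\calS'}$ to be label-independent, and in fact (by swapping labels within a sample while keeping observations fixed) equal to zero; hence $\meanop_{\calS}=\meanop_{\calS'}$.

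The only real subtlety, and the place where I would be most careful, is the precise interpretation of ``does not depend on $z$'' in the definition of sufficient statistic: one must read it as invariance under varying the labels while holding observations fixed, which is exactly what licenses the coordinate-wise argument in the converse direction. Aside from that bookkeeping, both directions are immediate consequences of Theorem \ref{th:factorization} and the linearity built into the \lol{} definition, so I expect no technical obstacle.
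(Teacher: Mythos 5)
Your proposal is correct and follows essentially the same route as the paper's own (very terse) proof: both directions are read off from the Factorization Theorem, with the label dependence of $R_{\calS,l}(h)-R_{\calS',l}(h)$ isolated in the term $a\langle\bm{\theta},\meanop_{\calS}-\meanop_{\calS'}\rangle$. Your converse direction (ranging $\bm{\theta}$ over a basis) is in fact spelled out more carefully than in the paper, which simply asserts that the difference is label independent only if the odd parts cancel.
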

(Proof in \ref{proof:suff}) The corollary is at the core of the applications in the paper: the single vector $\bm{\mu} \in \R^d$ summarizes all information concerning the linear relationship between $y$ and  $\bm{x}$ for losses that are \lol~(see also Section \ref{sec:concl}). Many known losses belong to this class; see Table~\ref{table:1}. For logistic loss it holds that (Figure \ref{fig:1-logistic}):
\begin{align*}
l_o(x) = \frac{1}{2} \log \frac{1+e^{-x}}{1+e^{x}} =
\frac{1}{2} \log \frac{e^{-\frac{x}{2}} (e^{\frac{x}{2}}+e^{-\frac{x}{2}})}{e^{\frac{x}{2}} (e^{-\frac{x}{2}}+e^{\frac{x}{2}}) }= -\frac{x}{2}
\end{align*}
 This ``symmetrization" is known in the literature \citep{jjBP, gwlzRM}. Another case of \lol~is unhinged loss $l(x) = 1-x$ \citep{rmwLW} --while standard hinge loss does not factor in a linear term.

The Factorization Theorem \ref{th:factorization} generalizes  \citep[Lemma 1]{pnrcAN} that works for \emph{symmetric proper losses} (\spl s), \emph {e.g.} logistic, square and Matsushita losses. Given a permissible generator $l$ \citep{kmOT, nnBD}, \textit{i.e.} $\mathrm{dom}(l) \supseteq [0,1]$, $l$ is strongly convex, differentiable and symmetric with respect to $1/2$, \spl s are defined as $l(x) = a_l+ l^\star(-x) / b_l $, where $l^\star$ is the convex conjugate of $l$. Then, since $l^\star(-x)=l^\star(x)-x$:
\begin{align*}
l_o(x) & = \frac{1}{2} \left( a_l+\frac{l^\star(-x)}{b_l}-a_l-\frac{l^\star(x)}{b_l} \right) = -\frac{x}{2b_l} \:\:.
\end{align*}
A similar result appears in \citep[Theorem 11]{msTD}. A natural question is whether the classes \spl~and \lol~are equivalent. We answer in the negative.

\begin{lemma}\label{th:1-to-1even}
The exhaustive class of linear-odd losses is in 1-to-1 mapping with a proper subclass of even functions, \emph{i.e.} $l_e(x) - ax$, with $l_e$ any even function.
\end{lemma}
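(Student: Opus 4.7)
The plan is to derive the lemma from the elementary fact that every real-valued function on $\mathbb{R}$ admits a unique decomposition into an even and an odd part, combined with the defining property of \lol s.

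First, I would recall the canonical decomposition: for any $l : \mathbb{R} \to \mathbb{R}$, setting
\[
l_e(x) \defeq \frac{l(x) + l(-x)}{2}, \qquad l_o(x) \defeq \frac{l(x) - l(-x)}{2}
\]
gives $l_e$ even, $l_o$ odd, and $l = l_e + l_o$. Uniqueness is standard: if $l = f + g$ with $f$ even and $g$ odd, evaluating at $-x$ yields $l(-x) = f(x) - g(x)$, and adding to (respectively subtracting from) $l(x)$ recovers $f = l_e$ and $g = l_o$.

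Second, by definition $l$ is an $a$-\lol~exactly when its odd component is linear, i.e. $l_o(x) = ax$ (equivalently $-ax$ in the sign convention of Table \ref{table:1}). Combining this with the uniqueness of the even/odd decomposition, one obtains a canonical bijection
\[
l \;\longleftrightarrow\; (l_e, a) \in \{\text{even functions}\} \times \mathbb{R},
\]
realized concretely by $l(x) = l_e(x) - ax$, the parameterization claimed in the statement. Surjectivity onto the class of such parameterizations is immediate from the reverse construction (every even $l_e$ and every $a$ produces a \lol); injectivity is precisely the uniqueness of the decomposition.

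Finally, to verify that the range is a \emph{proper} subclass --- i.e.\ it does not exhaust the ambient space of real functions, nor the class obtained from even functions by arbitrary odd perturbations --- it suffices to exhibit a single witness whose odd part is nonlinear. For instance, $l(x) = x^3$ has $l_o(x) = x^3$, which is not of the form $-ax$, hence $l$ is not a \lol. I do not anticipate any genuine obstacle in this argument: the only step that deserves care is the uniqueness of the even/odd decomposition, and that is textbook.
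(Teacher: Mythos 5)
Your proposal is correct and takes essentially the same route as the paper: both rest on the even/odd decomposition of $l$ and the observation that $l$ is a \lol~precisely when its odd part is linear, so that $l_e(x) = l(x) - ax$ is even. In fact your write-up is more complete than the paper's own one-line verification, since you also make explicit the uniqueness of the decomposition (hence injectivity and surjectivity of the correspondence) and exhibit a witness such as $x^3$ for the ``proper subclass'' claim, both of which the paper leaves implicit.
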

(Proof in \ref{proof:1-to-1even}) Interestingly, the proposition also let us engineer losses that always factor: choose any even function $l_e$ with desired properties --it need not be convex nor smooth. The loss is then $l(x) = l_e(x) - ax$, with $a$ to be chosen. For example, let $l_e(x) = \rho |x| + 1$, with $\rho > 0$. $l(x) = l_e(x) - \rho x$ is a \lol; furthermore, $l$ upperbounds $01$ loss and intercepts it in $l(0)=1$. Non-convex $l$ can be constructed similarly. Yet, not all non-differentiable losses can be crafted this way since they are not \lol s.  We provide in \ref{proof:upper} sufficient and necessary conditions to bound losses of interest, including hinge and Huber loss, by \lol s.

From the optimization viewpoint, we may be interested in keeping properties of $l$ after factorization. The good news is that we are dealing with the same $l$ plus a linear term. Thus, if the property of interest is closed under summation with linear functions, then it will hold true. An example is convexity: if $l$ is \lol~and convex, so is the factored loss.

The next Theorem sheds new light on generalization bounds on Rademacher complexity with linear hypotheses.
\begin{theorem}\label{th:generalization}
Assume $l$ is  $a$-\lol~and $L$-Lipschitz. Suppose $\R^d \supseteq \mathcal{X} = \{ \bm{x}: \| \bm{x} \|_2 \leq X < \infty \}$ and $\mathcal{H} = \{ \bm{\theta}: \|\bm{\theta}\|_2 \leq B < \infty \}$. Let $c(X, B) \defeq \max_{y \in \mathcal{Y}} l(y XB)$ and $\hat{\bm{\theta}} \defeq \argmin_{\bm{\theta} \in \mathcal{H}} R_{\calS, l}(\bm{\theta})$. Then for any $\delta > 0$, with probability at least $1 - \delta$:
\begin{align*}
&R_{\calD, l}(\hat{\bm{\theta}}) - \inf_{\bm{\theta} \in \mathcal{H}} R_{\calD, l}(\bm{\theta}) \leq \left( \frac{\sqrt{2} + 1}{4} \right) \cdot \frac{XBL}{\sqrt{m}} + \\
&\frac{c(X, B) L}{2} \cdot \sqrt{\frac{1}{m}\log\left(\frac{1}{\delta}\right)} + 2|a|B \cdot \| \meanop_{\calD} - \meanop_{\calS}\|_2\:\:,
\end{align*}
or more explicity 
\begin{align*}
&R_{\calD, l}(\hat{\bm{\theta}}) - \inf_{\bm{\theta} \in \mathcal{H}}  R_{\calD, l}(\bm{\theta}) \leq \left( \frac{\sqrt{2} + 1}{4} \right) \cdot \frac{XBL}{\sqrt{m}} + \\
&\left( \frac{c(X, B) L}{2} + 2|a|XB \sqrt{d \log{d}} \right) \sqrt{\frac{1}{m}\log\left(\frac{2}{\delta}\right)}\:\:.
\end{align*}
\end{theorem}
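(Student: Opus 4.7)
My plan is to apply Theorem~\ref{th:factorization} to both $R_{\calS,l}$ and $R_{\calD,l}$ (viewing the latter as an ``empirical'' risk with respect to the data-generating measure) and subtract, obtaining
\begin{equation*}
R_{\calD,l}(\bm{\theta}) - R_{\calS,l}(\bm{\theta}) \;=\; \tfrac{1}{2}\bigl(R_{\calD_{2x},l}(\bm{\theta}) - R_{\calS_{2x},l}(\bm{\theta})\bigr) \;+\; a\langle \bm{\theta},\, \meanop_{\calD}-\meanop_{\calS}\rangle,
\end{equation*}
since $l_o(x)=ax$ by hypothesis. Then the textbook ERM decomposition
\begin{equation*}
R_{\calD,l}(\hat{\bm{\theta}}) - \inf_{\bm{\theta}\in\mathcal{H}} R_{\calD,l}(\bm{\theta}) \;\leq\; \bigl[R_{\calD,l}(\hat{\bm{\theta}}) - R_{\calS,l}(\hat{\bm{\theta}})\bigr] + \bigl[R_{\calS,l}(\bm{\theta}^\star) - R_{\calD,l}(\bm{\theta}^\star)\bigr]
\end{equation*}
(with $\bm{\theta}^\star$ any true-risk minimizer, using optimality of $\hat{\bm{\theta}}$ on $\calS$) lets me apply the factorized identity to both terms independently, so that the two pieces of Theorem~\ref{th:factorization} propagate separately through the bound.

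\textbf{The linear piece is purely deterministic in $\bm{\theta}$.} Cauchy--Schwarz together with $\|\bm{\theta}\|_2 \leq B$ gives $|a\langle \bm{\theta}, \meanop_\calD - \meanop_\calS\rangle| \leq |a|B\|\meanop_\calD - \meanop_\calS\|_2$ uniformly in $\bm{\theta}$, and instantiating this for both $\hat{\bm{\theta}}$ and $\bm{\theta}^\star$ yields the $2|a|B\|\meanop_\calD - \meanop_\calS\|_2$ contribution. No symmetrization is needed here, which is precisely the payoff of Theorem~\ref{th:factorization}: the random-labels dependence has been condensed into a single $d$-dimensional object.

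\textbf{The label-free piece is standard Rademacher plus McDiarmid.} For the doubled-sample deviation $\tfrac{1}{2}(R_{\calD_{2x},l} - R_{\calS_{2x},l})$, I would invoke symmetrization to bound its uniform deviation over $\mathcal{H}$ by $2\hat{\mathfrak{R}}_{2m}(l\circ \mathcal{H})$, then Talagrand's contraction (using $L$-Lipschitzness of $l$) to peel off $l$, and finally the classical linear-class Rademacher bound $\hat{\mathfrak{R}}_n(\{\langle \bm{\theta},\cdot\rangle : \|\bm{\theta}\|_2 \leq B\}) \leq XB/\sqrt{n}$ at $n=2m$. This produces the $XBL/\sqrt{m}$-rate term; combining the factor $\tfrac{1}{2}$ from factorization with the $\sqrt{2}$ gained from $n=2m$ accounts for the unusual $(\sqrt{2}+1)/4$ constant. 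Concentration of $\calS \mapsto \sup_{\bm{\theta}}|R_{\calD_{2x},l}(\bm{\theta}) - R_{\calS_{2x},l}(\bm{\theta})|$ around its mean is then obtained from McDiarmid's inequality on the $m$ i.i.d.\ draws (not on the $2m$ formal points of $\calS_{2x}$): swapping a single $\bm{x}_i$ perturbs two summands of $\calS_{2x}$, each controlled by $c(X,B)$, giving a bounded difference of order $c(X,B)/m$ and the announced $\sqrt{\log(1/\delta)/m}$ rate.

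\textbf{From implicit to explicit, and main obstacle.} For the second inequality I would simply upper-bound $\|\meanop_\calD - \meanop_\calS\|_2$ in high probability: coordinate-wise Hoeffding gives $|(\meanop_\calS)_j - (\meanop_\calD)_j| \lesssim X\sqrt{\log(2d/\delta)/m}$ for each $j$, then union bound over the $d$ coordinates combined with $\|v\|_2 \leq \sqrt{d}\,\|v\|_\infty$ produces a bound of order $XB\sqrt{d\log d}\cdot\sqrt{\log(2/\delta)/m}$, and splitting the confidence as $\delta/2$ between the two pieces yields the closed form. The only delicate point in the whole proof is the bookkeeping around $\calS_{2x}$: concentration is driven by $m$ i.i.d.\ draws while Rademacher complexity is evaluated on $2m$ formal points, and it is the careful handling of this mismatch (bounded-difference coefficient, effective sample size, factor $\tfrac{1}{2}$) that produces the slightly unusual constants in the statement and explains the improvement over prior Rademacher bounds that do not exploit the factorization.
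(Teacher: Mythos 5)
Your overall architecture coincides with the paper's: optimality of $\hat{\bm{\theta}}$ on $\calS$ removes the empirical excess risk, the factorization identity splits the remaining deviations into a label-free doubled-sample term plus a linear term, Cauchy--Schwarz with $\|\bm{\theta}\|_2 \le B$ yields the $2|a|B\,\|\meanop_{\calD}-\meanop_{\calS}\|_2$ contribution, McDiarmid controls the uniform deviation of the doubled-sample risk, and the explicit form follows from coordinate-wise concentration, a union bound over the $d$ coordinates, $\|\cdot\|_2\le\sqrt{d}\,\|\cdot\|_\infty$ and a $\delta/2$ split --- all of which is exactly what the paper does (its Lemma~\ref{lemma2} is your Hoeffding-plus-union-bound step, carried out via McDiarmid).

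The genuine gap is the Rademacher step. You invoke the classical linear-class bound $\hat{\mathfrak{R}}_{2m}\le XB/\sqrt{2m}$ and assert that the factor $\tfrac12$ from factorization combined with the $\sqrt{2}$ from $n=2m$ ``accounts for'' the constant $(\sqrt{2}+1)/4$. It does not: tracking your own constants, the two halved deviation terms sum to $\sup_{\bm{\theta}}\bigl|R_{\calD_{2x},l}(\bm{\theta})-R_{\calS_{2x},l}(\bm{\theta})\bigr|\le 2L\cdot XB/\sqrt{2m}=\sqrt{2}\cdot XBL/\sqrt{m}$, i.e.\ a leading constant of $\sqrt{2}\approx 1.41$ rather than $(\sqrt{2}+1)/4\approx 0.60$. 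The stated constant is precisely where the paper departs from the textbook argument: its Lemma~\ref{lemma1} proves the strictly sharper bound $\mathcal{R}(\mathcal{H}\circ\calS_{2x})\le\bigl(\tfrac12+\tfrac12\sqrt{\tfrac12-\tfrac1m}\bigr)\cdot BX/\sqrt{2m}$ by exploiting the sign-paired structure of $\calS_{2x}$: since $\bm{x}_i=\bm{x}_{m+i}$, every Rademacher assignment with $\sigma_i=-\sigma_{m+i}$ cancels that observation from $\sum_i\sigma_i\bm{x}_i$, and grouping the $2^{2m}$ assignments by the surviving index set shrinks the expected norm below the generic estimate. Without this cancellation lemma you prove a true but weaker inequality, and you lose exactly the advertised improvement over \citep[Corollary 4]{kstOT}. (As an aside, even the paper's own appendix derivation lands on $(\sqrt{2}+1)/2$ rather than the $(\sqrt{2}+1)/4$ of the statement, so there is a factor-of-two bookkeeping discrepancy on the paper's side too; but your route cannot reach either constant.)
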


(Proof in \ref{proof:generalization}) The term $\frac{\sqrt{2} + 1}{4} \cdot \frac{XBL}{\sqrt{m}}$ is derived by an improved upperbound to the Rademacher complexity of $\mathcal{H}$ computed on $\calS_{2x}$ (\ref{proof:generalization}, Lemma \ref{lemma1}); we call it in short the \emph{complexity} term. The former expression displays the contribution of the non-linear part of the loss, keeping aside what is missing: a deviation of the empirical mean operator from its population mean. When $\meanop_{\calS}$ is not known because of partial label knowledge, the choice of any estimator would affect the bound only through that norm discrepancy. The second expression highlights the interplay of the two loss components. $c(X, B)$ is the only non-linear term, which may well be predominant in the bound for fast-growing losses, \emph{e.g.} strongly convex. Moreover, we confirm that the linear-odd part does not change the complexity term and only affects the usual statistical penalty by a linear factor. A last important remark comes from comparing the bound with the one due to \citep[Corollary 4]{kstOT}. Our factor in front of the complexity term is $(\sqrt{2} + 1)/4 \approx 0.6$ instead of $2$, that is three times smaller. A similar statement may be derived for \textsc{rkhs} on top of \citep{bmRA, asUD}.

\section{Weakly supervised learning}\label{sec:weakly}
\begin{algorithm}[t]
\caption{$\meanop$\textsc{sgd}}\label{algo:meta}
\begin{algorithmic}
\STATE \textbf{Input}: \fcolorbox{gray!35}{gray!35}{$\calS_{2x}, \meanop$}, $l \in \lol$; $\lambda > 0$; $T > 0$\;
\STATE $m' \leftarrow | \calS_{2x} |$\;
\STATE $\bm{\theta}^0 \leftarrow \bm{0}$\;
\STATE For any $t = 1, \dots, T$:
\STATE \qquad Pick $i^t \in [m']$ uniformly at random\;
\STATE \qquad $\eta^t \leftarrow (1 + \lambda t)^{-1}$\;
\STATE \qquad Pick any $\bm{v} \in \partial l(y_i \langle \bm{\theta}^{t}, \bm{x}_i \rangle)$\;
\STATE \qquad $\bm{\theta}^{t+1} \leftarrow (1 - \eta^t \lambda) \bm{\theta}^{t} - \eta^t (\bm{v}$ \fcolorbox{gray!35}{gray!35}{$+ a \meanop /2$}~)\; 
\STATE \qquad $\bm{\theta}^{t+1} \leftarrow \min \left\{ \bm{\theta}^{t+1}, \bm{\theta}^{t+1} \sqrt{\lambda^{-1}} / \| \bm{\theta}^{t+1} \|_2) \right\} $\;
\STATE \textbf{Output}: $\bm{\theta}^{t+1}$\;
\end{algorithmic}
\end{algorithm}

In the next two Sections we discuss applications to \textsc{wsl}. Recall that in this scenario we learn on $\tilde{\calS}$ with partially observable labels, but aim to generalize to $\calD$. Assume to know an algorithm that can only learn on $\calS$. By sufficiency (Corollary \ref{th:suff}), a principled two-step approach to use it on $\tilde{\calS}$ is: (1) estimate $\meanop$ from $\tilde{\calS}$; (2) run the algorithm with the \lol~computed on the estimated $\meanop$. This direction was explored by work on \textsc{llp} \citep[logistic loss]{qsclEL} and \citep[\spl]{pnrcAN} and in the setting of noisy labels \citep[unhinged loss]{rmwLW} and \citep[logistic loss]{gwlzRM}. The approach contrasts with \emph{ad-hoc} losses and optimization procedures, often trying to recover the latent labels by coordinate descent and EM \citep{jbAC}. Instead, the only difficulty here is to come up with a well-behaved estimator of $\meanop$ --a statistic independent on both $h$ and $l$. Thereom \ref{th:generalization} then assures bounded $l$-risk and, in turn, true risk. On stricter conditions on $l$ \citep[Section 4]{asUD} and \citep[Theorem 6]{pnrcAN} hold finite-sample guarantees.

Algorithm \ref{algo:meta}, $\meanop$\textsc{sgd}, adapts \textsc{sgd} for weak supervision. For the sake of presentation, we work on a simple version of \textsc{sgd} based on subgradient descent with $L_2$ regularization, inspired by PEGASO \citep{sssscPP}. Given $\meanop$ changes are trivial: (i) construct $\calS_{2x}$ from $\tilde{\calS}$ and (ii) sum $-a\meanop / 2$ to the subgradients of each example of $\calS_{2x}$. In Section \ref{sec:concl} upgrades proximal algorithms with the same minimal-effort strategy. The next Section shows an estimator of $\meanop$ in the case of noisy labels and specializes $\meanop$\textsc{sgd}. We also analyze the effect of noise through the lenses of Theorem \ref{th:generalization} and discuss a non-standard notion of noise robustness.

\section{Asymmetric label noise}\label{sec:noisy}

In learning with noisy labels, $\tilde{\calS}$ is a sequence of examples drawn from a distribution $\tilde{\calD}$, which samples from $\calD$ and flips labels at random. Each example $(\bm{x}_i, \tilde{y}_i)$ is $(\bm{x}_i, -y_i)$ with probability at most $1/2$ or it is $(\bm{x}_i, y_i)$ otherwise. The \emph{noise rates} are label dependent\footnote{While being independent on the observation.} by $(p_+, p_-) \in [0, 1/2)^2$ respectively for positive and negative examples, that is, \emph{asymmetric} label noise (\textsc{aln}) \citep{ndrtLW}.

Our first result builds on \citep[Lemma 1]{ndrtLW} that provides a recipe for unbiased estimators of losses. Thanks to the Factorization Theorem \ref{th:factorization}, instead of estimating the whole $l$ we act on the sufficient statistic:
\begin{align}
\hat{\meanop}_{\calS} \defeq \expect_{\calS} \left[ \frac{y - (p_- - p_+)}{1 - p_- - p_+} \bm{x} \right] \:\:. \label{eq:unbiasedmu}
\end{align}
 The estimator is unbiased, that is, its expectation over the noise distribution $\tilde{\calD}$ is the population mean operator: $\hat{\bm{\mu}}_{\tilde{\calD}} = \bm{\mu}_{\calD}$. Denote then the risk computed on the estimator as $\hat{R}_{\calS, l}(\bm{\theta}) \defeq \frac{1}{2} R_{\calS_{2x}, l}(\bm{\theta}) + a \langle \bm{\theta}, \hat{\meanop}_{\calS} \rangle$. Unbiasedness transfers to $l$-risk: $\hat{R}_{\tilde{\calD}, l}(\bm{\theta}) = R_{\calD, l}(\bm{\theta}),~\forall \bm{\theta}$ (Proofs in \ref{proof:unbiased}). We have therefore obtained a good candidate as input for any algorithm implementing our 2-step approach, like $\meanop$\textsc{sgd}. However, in the context of the literature, there is more. On one hand, the estimators of \citep{ndrtLW} may not be convex even when $l$ is so, but this is never the case with \lol s; in fact, $l(x) - l(-x) = 2ax$ may be seen as alternative sufficient condition to \citep[Lemma 4]{ndrtLW} for convexity, without asking $l$ differentiable, for the same reason in \citep{pnsCF}. On the other hand, we generalize the approach of \citep{rmwLW} to losses beyond unhinged and to asymmetric noise. We now prove that \emph{any} algorithm minimizing \lol s that uses the estimator in Equation \ref{eq:unbiasedmu} has a non-trivial generalization bound. We further assume that $l$ is Lipschitz.

\begin{theorem}\label{th:noisy1}
Consider the setting of Theorem \ref{th:generalization}, except that here $\hat{\bm{\theta}} = \argmin_{\bm{\theta} \in \mathcal{H}} \hat{R}_{\tilde{\calS}, l}(\bm{\theta})$.  Then for any $\delta > 0$, with probability at least $1 - \delta$:
\begin{align*}
&R_{\calD, l}(\hat{\bm{\theta}}) - \inf_{\bm{\theta} \in \mathcal{H}} R_{\calD, l}(\bm{\theta}) \leq \left( \frac{\sqrt{2} + 1}{4} \right) \cdot \frac{XBL}{\sqrt{m}} + \\ 
&\left( \frac{c(X, B) L}{2} + \frac{2|a|XB}{1 - p_- - p_+} \sqrt{d \log{d}} \right) \sqrt{\frac{1}{m}\log\left(\frac{2}{\delta}\right)}\:\:.
\end{align*}
\end{theorem}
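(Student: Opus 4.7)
The plan is to mirror the proof of Theorem~\ref{th:generalization}, isolating the one place where label noise enters. Let $\bm{\theta}^\star \in \argmin_{\bm{\theta} \in \mathcal{H}} R_{\calD,l}(\bm{\theta})$. The unbiasedness identity $\hat{R}_{\tilde{\calD},l}(\bm{\theta}) = R_{\calD,l}(\bm{\theta})$ together with the fact that $\hat{\bm{\theta}}$ minimizes $\hat{R}_{\tilde{\calS},l}$ gives the textbook
\begin{align*}
R_{\calD,l}(\hat{\bm{\theta}}) - R_{\calD,l}(\bm{\theta}^\star) \;\leq\; 2 \sup_{\bm{\theta} \in \mathcal{H}} \bigl| \hat{R}_{\tilde{\calS},l}(\bm{\theta}) - \hat{R}_{\tilde{\calD},l}(\bm{\theta}) \bigr|.
\end{align*}
Next, I would factor the inner deviation. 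Because the doubling operation $\cdot_{2x}$ overwrites labels, $R_{\tilde{\calD}_{2x},l}=R_{\calD_{2x},l}$, so
\begin{align*}
\hat{R}_{\tilde{\calS},l}(\bm{\theta}) - \hat{R}_{\tilde{\calD},l}(\bm{\theta}) \;=\; \tfrac{1}{2}\bigl(R_{\tilde{\calS}_{2x},l}(\bm{\theta}) - R_{\calD_{2x},l}(\bm{\theta})\bigr) \;+\; a\bigl\langle \bm{\theta},\, \hat{\meanop}_{\tilde{\calS}} - \meanop_{\calD} \bigr\rangle.
\end{align*}

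The first (even) term is completely label-free and thus structurally identical to the one handled inside Theorem~\ref{th:generalization}: the same Rademacher bound on $\mathcal{H}$ composed with the $L$-Lipschitz loss on the doubled sample --- using the improved constant from Lemma~\ref{lemma1} --- together with McDiarmid at confidence budget $\delta/2$ yields exactly the summands $\tfrac{\sqrt{2}+1}{4}\cdot\tfrac{XBL}{\sqrt{m}}$ and $\tfrac{c(X,B)L}{2}\sqrt{\tfrac{1}{m}\log\tfrac{2}{\delta}}$. Crucially, the noise parameters do not enter here, because the doubled sample never reads a label.

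The only genuinely new piece is the linear term. By Cauchy--Schwarz and $\|\bm{\theta}\|_2 \le B$,
\begin{align*}
\sup_{\bm{\theta}\in\mathcal{H}} \bigl| a \bigl\langle \bm{\theta}, \hat{\meanop}_{\tilde{\calS}} - \meanop_{\calD} \bigr\rangle \bigr| \;\leq\; |a|\,B\,\bigl\| \hat{\meanop}_{\tilde{\calS}} - \meanop_{\calD} \bigr\|_2.
\end{align*}
By construction $\hat{\meanop}_{\tilde{\calS}}$ is an i.i.d.\ average of vectors of the form $\tfrac{\tilde{y}_i - (p_- - p_+)}{1 - p_- - p_+}\bm{x}_i$ whose norm is bounded by a constant times $X/(1 - p_- - p_+)$, with mean $\meanop_{\calD}$. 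I would then rerun verbatim the concentration argument used in Theorem~\ref{th:generalization} (coordinate-wise Hoeffding with a union bound over the $d$ coordinates, converting an $\ell_\infty$ tail into an $\ell_2$ tail at the price of a $\sqrt{d}$ and absorbing the $\log d$). This merely inflates the effective per-summand radius from $X$ to $X/(1-p_--p_+)$, so with probability at least $1-\delta/2$
\begin{align*}
\bigl\| \hat{\meanop}_{\tilde{\calS}} - \meanop_{\calD} \bigr\|_2 \;\lesssim\; \frac{X}{1 - p_- - p_+}\,\sqrt{d\log d}\,\sqrt{\tfrac{1}{m}\log\tfrac{2}{\delta}}.
\end{align*}
Multiplying by $2|a|B$ produces the advertised linear factor $\tfrac{2|a|XB}{1-p_--p_+}\sqrt{d\log d}$, and a union bound at level $\delta$ combines the two halves.

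The main obstacle is not structural --- factorization forces the split and the even part is reused unchanged --- but quantitative bookkeeping: one must check that the noise rescaling $1/(1-p_--p_+)$ appears exactly once, inside the linear-part concentration, and does \emph{not} propagate into either the Rademacher/complexity term or $c(X,B)$. This is exactly why handling noise through the mean operator (rather than through a per-example loss correction, as in \citep{ndrtLW}) keeps the bound clean: noise only touches the sufficient statistic.
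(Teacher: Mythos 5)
Your proposal is correct and follows essentially the same route as the paper: transfer to the clean risk via unbiasedness, split the deviation through factorization into the label-free doubled-sample term (reusing Lemma~\ref{lemma1} and the Rademacher/McDiarmid machinery of Theorem~\ref{th:generalization}) and the linear mean-operator term, bound the latter by Cauchy--Schwarz plus coordinate-wise concentration with the $1/(1-p_--p_+)$ inflation, and finish with a union bound at $\delta/2$ each. The only cosmetic difference is that you open with the symmetric $2\sup$ uniform-deviation bound while the paper uses a one-sided decomposition with the empirical-minimizer term dropped as nonpositive; after the $\tfrac12$ on the even part is absorbed, both yield identical constants.
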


(Proof in \ref{proof:generalization-noisy}) Again, the complexity term is tighter than prior work. \citep[Theorem 3]{ndrtLW} proves a factor of $2L / (1 - p_- - p_+)$ that may even be unbounded due to noise, while our estimate shows a constant of about $ 0.6 < 2$ and it is noise free. In fact, \lol s are such that noise affects only the linear component of the bound, as a direct effect of factorization. Although we are not aware of any other such results, this is intuitive: Rademacher complexity is computed regardless of  sample labels and therefore is unchanged by label noise. Furthermore, depending on the loss, the effect of (limited) noise on generalization may be also be negligible since $c(X, B)$ could be very large for losses like strongly convex. This last remark fits well with the property of robustness that we are about to investigate.

\subsection{Every \lol~is approximately noise-robust}

The next result comes in pair with Theorem \ref{th:noisy1}: it holds regardless of algorithm and (linear-odd) loss of choice. In particular, we demonstrate that every learner enjoys a distribution-dependent property of robustness against asymmetric label noise. No estimate of $\meanop$ is involved and hence the theorem applies to any na{\"i}ve supervised learner on $\tilde{\calS}$. We first refine the notion of robustness of \citep{gmsMR, rmwLW} in a weaker sense.

\begin{definition}
Let $(\bm{\theta}^\star, \tilde{\bm{\theta}}^\star)$ respectively be the minimizers of $(R_{\calD, l}(\bm{\theta}), R_{\tilde{\calD}, l}(\bm{\theta}))$ in $\mathcal{H}$. $l$ is said $\epsilon$-\textsc{aln} robust if for any $\calD, \tilde{\calD}$, $R_{\tilde{\calD}, l}(\bm{\theta}^\star) - R_{\tilde{\calD}, l}(\tilde{\bm{\theta}}^\star) \leq \epsilon$.
\end{definition}

The distance of the two minimizers is measured by empirical $l$-risk under expected label noise. $0$-\textsc{aln} robust losses are also \textsc{aln} robust: in fact if  $R_{\tilde{\calD}, l}(\bm{\theta}^\star) = R_{\tilde{\calD},l}(\tilde{\bm{\theta}}^\star)$ then $\bm{\theta}^\star \in \argmin_{\bm{\theta}} R_{\tilde{\calD}, l}(\bm{\theta})$. And hence if $R_{\tilde{\calD}, l}(\bm{\theta})$ has a unique global minimum, that will be $\bm{\theta}^\star$. More generally

\begin{theorem}\label{th:noisy2}
Assume $\{ \bm{\theta} \in \mathcal{H}: ||\bm{\theta}||_2 \leq B \}$. Then every $a$-\lol~is $\epsilon$-\textsc{aln}. That is
\begin{align*}
R_{\tilde{\calD}, l}(\bm{\theta}^\star) - R_{\tilde{\calD}, l}(\tilde{\bm{\theta}}^\star)
\leq 4 |a| B  \max\{p_+, p_-\}  \|\meanop_{\calD}\|_2
\end{align*}
Moreover: (1) If $\| \meanop_{\calD} \|_2 = 0$ for $\calD$ then every \lol~is \textsc{aln} for any $\tilde{\calD}$. (2) Suppose that $l$ is also once differentiable and $\gamma$-strongly convex. Then $\| \bm{\theta}^\star - \tilde{\bm{\theta}}^\star \|^2_2 \leq 2 \epsilon / \gamma$ .
\end{theorem}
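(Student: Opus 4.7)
The plan is to use the Factorization Theorem~\ref{th:factorization} to reduce everything to the linear (label-carrying) part of the risk, so that the ``noise gap'' becomes a linear functional of $\meanop_{\tilde{\calD}} - \meanop_{\calD}$. Because $\calD_{2x}$ depends on $\calD$ only through the marginal on $\bm{x}$, and label-flipping noise leaves that marginal unchanged, the even component $\tfrac{1}{2}R_{\calS_{2x}, l}$ is identical in population under $\calD$ and $\tilde{\calD}$. Hence for every $\bm{\theta} \in \mathcal{H}$,
\[
R_{\tilde{\calD}, l}(\bm{\theta}) - R_{\calD, l}(\bm{\theta}) \;=\; a\langle\bm{\theta}, \meanop_{\tilde{\calD}} - \meanop_{\calD}\rangle.
\]

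Next I would deploy the usual ``add and subtract'' split around the two minimizers: write $R_{\tilde{\calD},l}(\bm{\theta}^\star) - R_{\tilde{\calD},l}(\tilde{\bm{\theta}}^\star)$ as the telescoping sum $[R_{\tilde{\calD},l}(\bm{\theta}^\star) - R_{\calD,l}(\bm{\theta}^\star)] + [R_{\calD,l}(\bm{\theta}^\star) - R_{\calD,l}(\tilde{\bm{\theta}}^\star)] + [R_{\calD,l}(\tilde{\bm{\theta}}^\star) - R_{\tilde{\calD},l}(\tilde{\bm{\theta}}^\star)]$. The middle bracket is $\leq 0$ by optimality of $\bm{\theta}^\star$ on $\calD$; the two outer brackets collapse, via the identity above, to $a\langle \bm{\theta}^\star - \tilde{\bm{\theta}}^\star, \meanop_{\tilde{\calD}} - \meanop_{\calD}\rangle$. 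Cauchy--Schwarz together with $\|\bm{\theta}^\star - \tilde{\bm{\theta}}^\star\|_2 \leq 2B$ yields the intermediate bound
\[
R_{\tilde{\calD}, l}(\bm{\theta}^\star) - R_{\tilde{\calD}, l}(\tilde{\bm{\theta}}^\star) \;\leq\; 2|a|B\,\|\meanop_{\tilde{\calD}} - \meanop_{\calD}\|_2.
\]

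The heart of the argument is then to rewrite the discrepancy of mean operators under the \textsc{aln} model. Conditioning on the true label, $\expect[\tilde{y}\mid y,\bm{x}] = y(1-2p_y)$, so with $\bm{m}_\sigma \defeq \expect_{\calD}[\indicator{y=\sigma}\bm{x}]$ one obtains $\meanop_{\tilde{\calD}} = (1-2p_+)\bm{m}_+ - (1-2p_-)\bm{m}_-$ and hence $\meanop_{\tilde{\calD}} - \meanop_{\calD} = -2p_+\bm{m}_+ + 2p_-\bm{m}_-$. Combined with $\meanop_{\calD} = \bm{m}_+ - \bm{m}_-$, the target norm inequality $\|\meanop_{\tilde{\calD}} - \meanop_{\calD}\|_2 \leq 2\max\{p_+,p_-\}\|\meanop_{\calD}\|_2$ plugs into the previous step to give the stated $4|a|B\max\{p_+,p_-\}\|\meanop_{\calD}\|_2$. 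Claim~(1) is then immediate: $\|\meanop_{\calD}\|_2 = 0$ makes the bound vanish, so $\bm{\theta}^\star$ and $\tilde{\bm{\theta}}^\star$ achieve the same $R_{\tilde{\calD},l}$, hence $\bm{\theta}^\star$ is also a minimizer on $\tilde{\calD}$. Claim~(2) follows from $\gamma$-strong convexity, which the factorization preserves (adding a linear term in $\bm{\theta}$ does not alter second-order behaviour), so $R_{\tilde{\calD}, l}$ is $\gamma$-strongly convex; the quadratic growth around its minimizer $\tilde{\bm{\theta}}^\star$ then gives $\tfrac{\gamma}{2}\|\bm{\theta}^\star - \tilde{\bm{\theta}}^\star\|_2^2 \leq R_{\tilde{\calD},l}(\bm{\theta}^\star) - R_{\tilde{\calD},l}(\tilde{\bm{\theta}}^\star) \leq \epsilon$.

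I expect the main obstacle to be the algebraic inequality $\|{-}2p_+\bm{m}_+ + 2p_-\bm{m}_-\|_2 \leq 2\max\{p_+,p_-\}\|\bm{m}_+ - \bm{m}_-\|_2$. A naive triangle inequality only yields $2(p_+\|\bm{m}_+\|_2 + p_-\|\bm{m}_-\|_2)$, which is not controlled by $\|\meanop_{\calD}\|_2$ alone, so some careful regrouping of $-2p_+\bm{m}_+ + 2p_-\bm{m}_-$ into a scalar multiple of $\meanop_{\calD}$ plus a residual---together with the \textsc{aln} constraint $p_+, p_- \in [0,1/2)$---is the place where the real work happens; everywhere else the Factorization Theorem does the heavy lifting.
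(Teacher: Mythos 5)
Your argument is correct, and essentially coincides with the paper's, up to the very last norm inequality. The identity $R_{\tilde{\calD}, l}(\bm{\theta}) - R_{\calD, l}(\bm{\theta}) = a\langle\bm{\theta}, \meanop_{\tilde{\calD}} - \meanop_{\calD}\rangle$ (even part unchanged by label noise), the telescoping split with optimality of $\bm{\theta}^\star$ killing the middle bracket, Cauchy--Schwarz with $\|\bm{\theta}^\star-\tilde{\bm{\theta}}^\star\|_2\le 2B$, and the computation $\meanop_{\tilde{\calD}}-\meanop_{\calD} = -2\left(p_+\bm{m}_+ - p_-\bm{m}_-\right)$ are all sound, and they land on exactly the quantity $4|a|B\,\|p_+\bm{m}_+ - p_-\bm{m}_-\|_2$ that the paper's appendix reaches by splitting the risk into class-conditional pieces (note $\pi_+\meanop_{\calD^+}=\bm{m}_+$ and $\pi_-\meanop_{\calD^-}=-\bm{m}_-$, so the paper's $\|p_+\pi_+\meanop_{\calD^+}+p_-\pi_-\meanop_{\calD^-}\|_2$ is your $\|p_+\bm{m}_+-p_-\bm{m}_-\|_2$). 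Your packaging via the difference of mean operators is arguably cleaner, but it is the same proof. Claims (1) and (2) are handled as in the paper.

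The gap is the step you yourself flag: you never prove $\|p_+\bm{m}_+ - p_-\bm{m}_-\|_2 \le \max\{p_+,p_-\}\,\|\bm{m}_+-\bm{m}_-\|_2$, deferring it to ``careful regrouping''. No regrouping will close it, because the inequality is false in general for asymmetric rates. Take $d=2$, $\bm{m}_+=(1,\varepsilon)$, $\bm{m}_-=(1,-\varepsilon)$, $p_+=0.4$, $p_-=0$: the left side is at least $0.4$ while the right side is $0.4\cdot 2\varepsilon\to 0$. The inequality does hold (with equality) in the symmetric case $p_+=p_-=p$, since then $p_+\bm{m}_+-p_-\bm{m}_- = p\,\meanop_{\calD}$; that is the case the paper treats first and rigorously, and your derivation specializes to it verbatim. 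For the asymmetric case the paper's own proof asserts the identical inequality without justification, so you have put your finger on a genuine soft spot of the theorem as stated rather than on a defect unique to your write-up. To make your proposal airtight you would either restrict the claim to symmetric noise, or keep the bound your derivation actually establishes, namely $R_{\tilde{\calD}, l}(\bm{\theta}^\star) - R_{\tilde{\calD}, l}(\tilde{\bm{\theta}}^\star) \le 4|a|B\,\|p_+\bm{m}_+ - p_-\bm{m}_-\|_2$, which is data-dependent in the same spirit but not controlled by $\|\meanop_{\calD}\|_2$ alone.
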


\begin{figure}[t]
\centering
\subfigure{\label{fig:2-phi} \includegraphics[width=.30\textwidth]{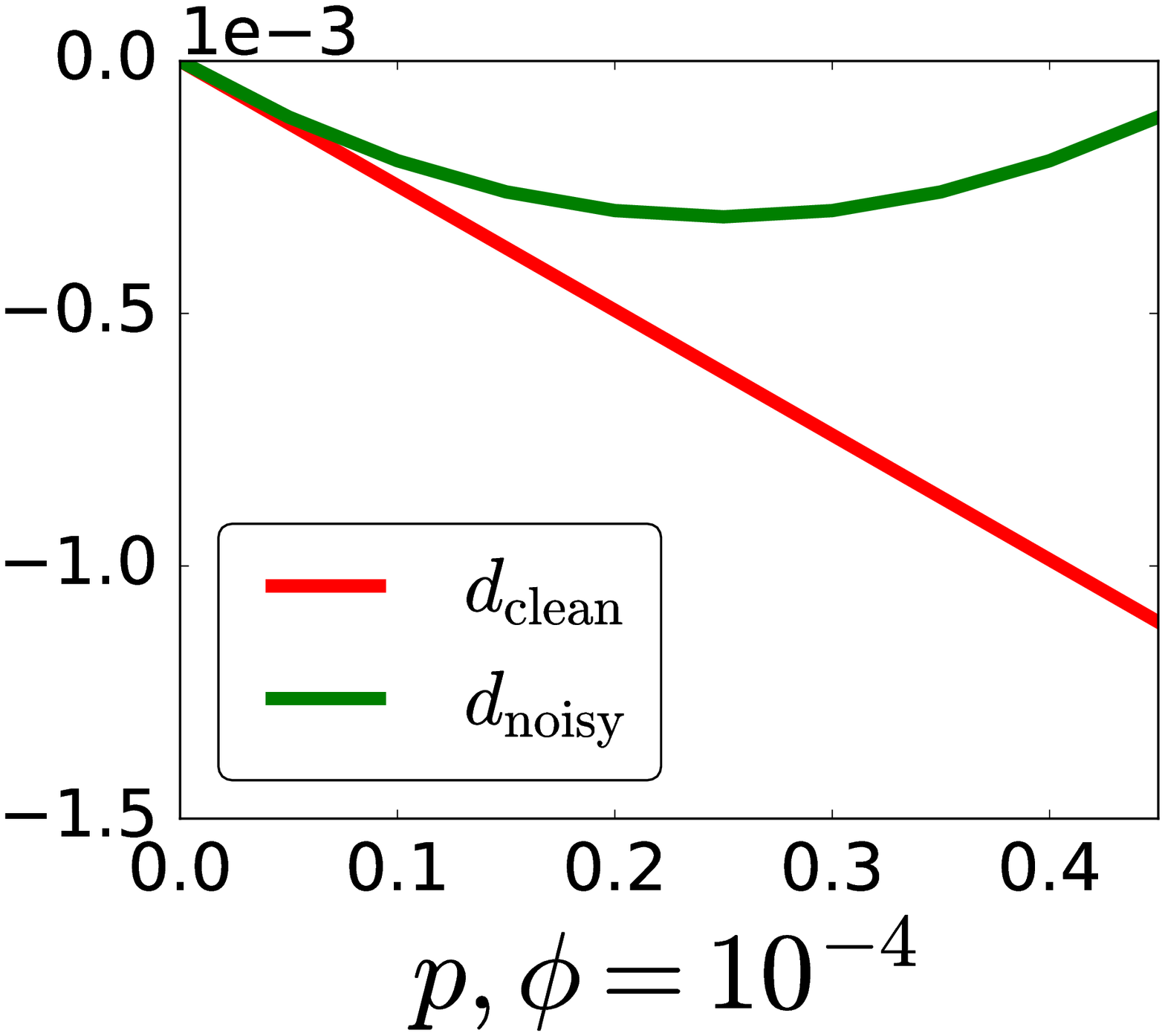}}
\subfigure{\label{fig:2-p} \includegraphics[width=.30\textwidth]{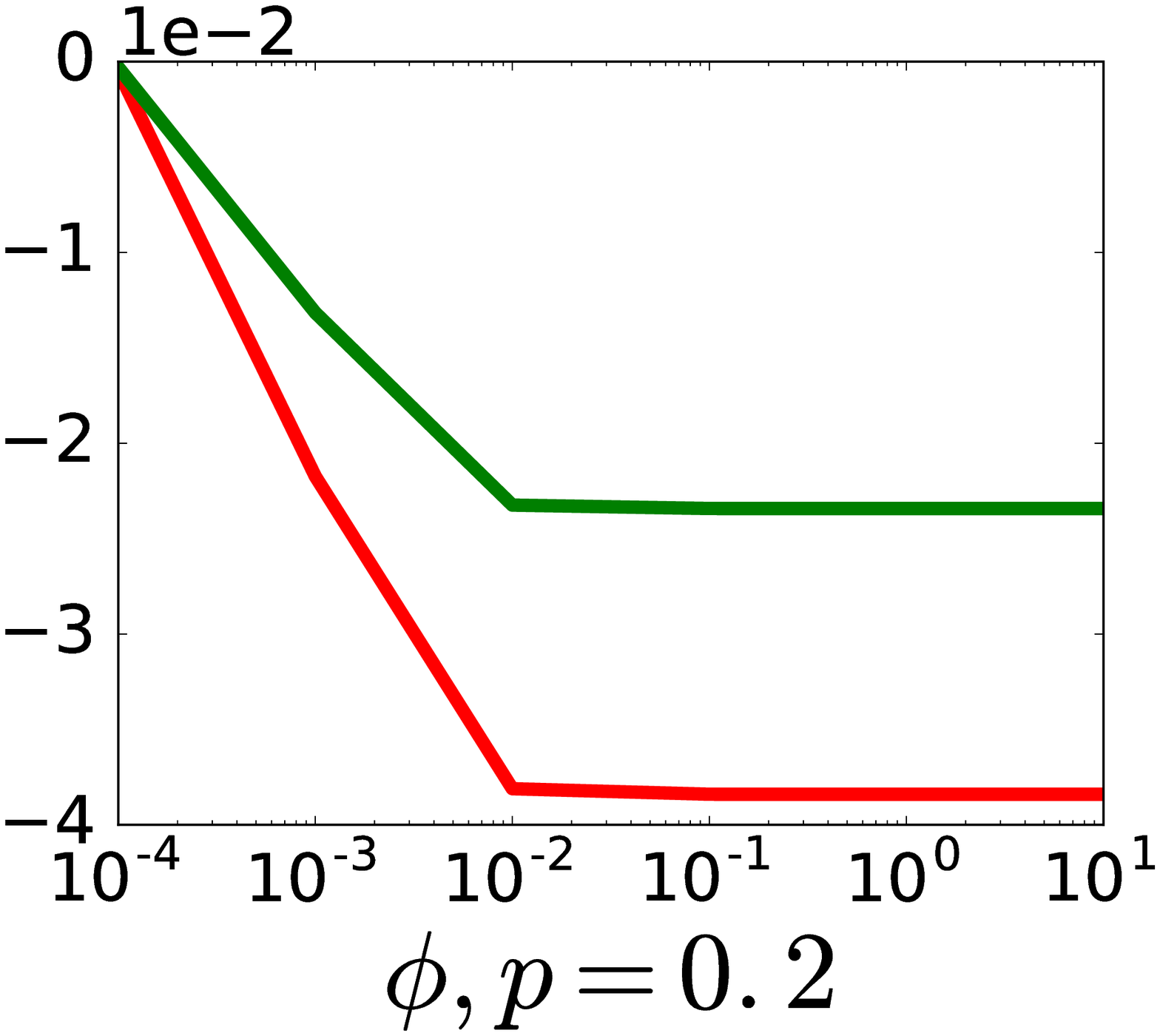}}
\vspace{-10px}
\caption{Behavior of Theorem \ref{th:noisy2} on synthetic data.}\label{fig:exp1}
\vspace{-10px}
\end{figure}

(Proof in \ref{proof:noisy1}) Unlike Theorem \ref{th:noisy1}, this bound holds in expectation over the noisy risk $R_{\tilde{D}, l}$. Its shape depends on the population mean operator, a \emph{distribution-dependent} quantity. There are two immediate corollaries. When $\|\meanop_{\calD}\|_2 = 0$, we obtain optimality for all \lol s. The second corollary goes further, limiting the minimizers' distance when losses are differentiable and strongly convex. But even more generally, under proper compactness assumptions on the domain of $l$, Theorem \ref{th:noisy2} tells us much more: in the case $R_{\tilde{\calD}, l}(\bm{\theta})$ has a unique global minimizer, the smaller $\| \meanop_{\calD} \|_2$, the closer the minimizer \emph{on noisy data} $\tilde{\bm{\theta}}^\star$ will be to the minimizer \emph{on clean data} $\bm{\theta}^\star$. Therefore, assuming to know an efficient algorithm that computes a model not far from the global optimum $\tilde{\bm{\theta}}^\star$, that will be not far from $\bm{\theta}^\star$ either. This is true in spite of the presence of local minima and/or saddle points.

\citep{lsRC} proves that no convex potential\footnote{A convex potential is a loss $l \in C^1$, convex, such that $l(0)<0$ and $l(x)\rightarrow 0$ for $x \rightarrow \infty$. Many convex potentials are \lol s but not all, but there is no inclusion. An example is $e^{-x}$.} is noise tolerant, that is, $0$-\textsc{ALN} robust. This is not a contradiction. To show the negative statement, the authors craft a case of $\calD$ breaking any of those losses. And in fact that choice of $\calD$ does not meet optimality in our bound, because $\|\meanop_{\calD}\|_2 = \frac{1}{4}(18\gamma^2 + 6\gamma + 1) > 0$, with $\gamma \in (0, 1/6)$. In contrast, we show that every element of the broad class of \lol s is approximately robust, as opposed to a \emph{worst-case} statement. Finally, compare our $\epsilon$-robustness to the one of \citep{gmsMR}: $R_{\calD, l}(\tilde{\bm{\theta}}^*) \leq (1 - 2\max(p_-,p_+))^{-1} R_{\calD, l}(\bm{\theta}^*) $. Such bound, while relating the (non-noisy) $l$-risks, is not data-dependent and may be not much informative for high noise rates.

\subsection{Experiments}

\begin{algorithm}[t]
\caption{$\meanop$\textsc{sgd} applied on noisy labels}\label{algo:noisy}
\begin{algorithmic}
\STATE \textbf{Input}: $\tilde{\calS}, l \in \lol$; $\lambda > 0$; $T > 0$\;
\STATE $\calS_{2x} \defeq \{ (\bm{x}_i, \sigma), i \in [m], \forall \sigma \in \mathcal{Y} \}$
\STATE $\hat{\meanop}_{\tilde{\calS}} \leftarrow $ Equation \ref{eq:unbiasedmu}
\STATE $\bm{\theta} \leftarrow \meanop$-\textsc{sgd}$(\calS_{2x}, \hat{\meanop}_{\tilde{\calS}}, \lambda, T)$\;
\STATE \textbf{Output}: $\bm{\theta}$
\end{algorithmic}
\end{algorithm}

We analyze experimentally the theory so far developed. From now on, we assume to know $p_+$ and $p_-$ at learning time. In principle they can be tuned as hyper-parameters \cite{ndrtLW}, at least for small $|\mathcal{Y}|$ \citep{sfLF}. While being out of scope, practical noise estimators are studied \citep{bkLN, ltCW, mvrowLF, sAR}.

\begin{figure*}[t]
\centering
\subfigure[$p\|\mu_{\calD}\|_2$ \emph{vs.} $d_{\mathrm{clean}}$]{\label{fig:3-a} \includegraphics[width=.39\textwidth]{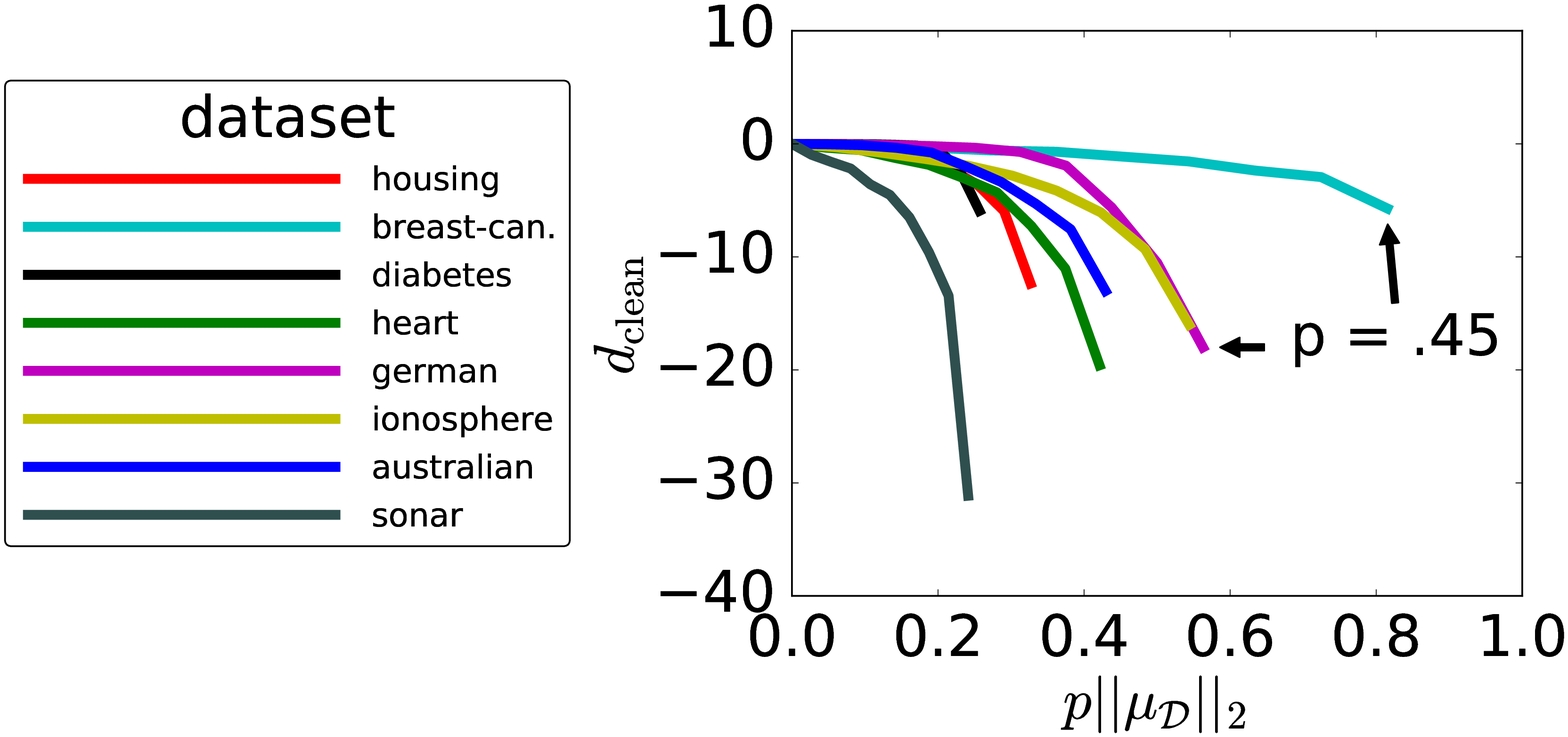}}
\subfigure[$p\|\mu_{\calD}\|_2$ \emph{vs.} $R_{\mathcal{D}, 01}$]{\label{fig:3-b} \includegraphics[width=.24\textwidth]{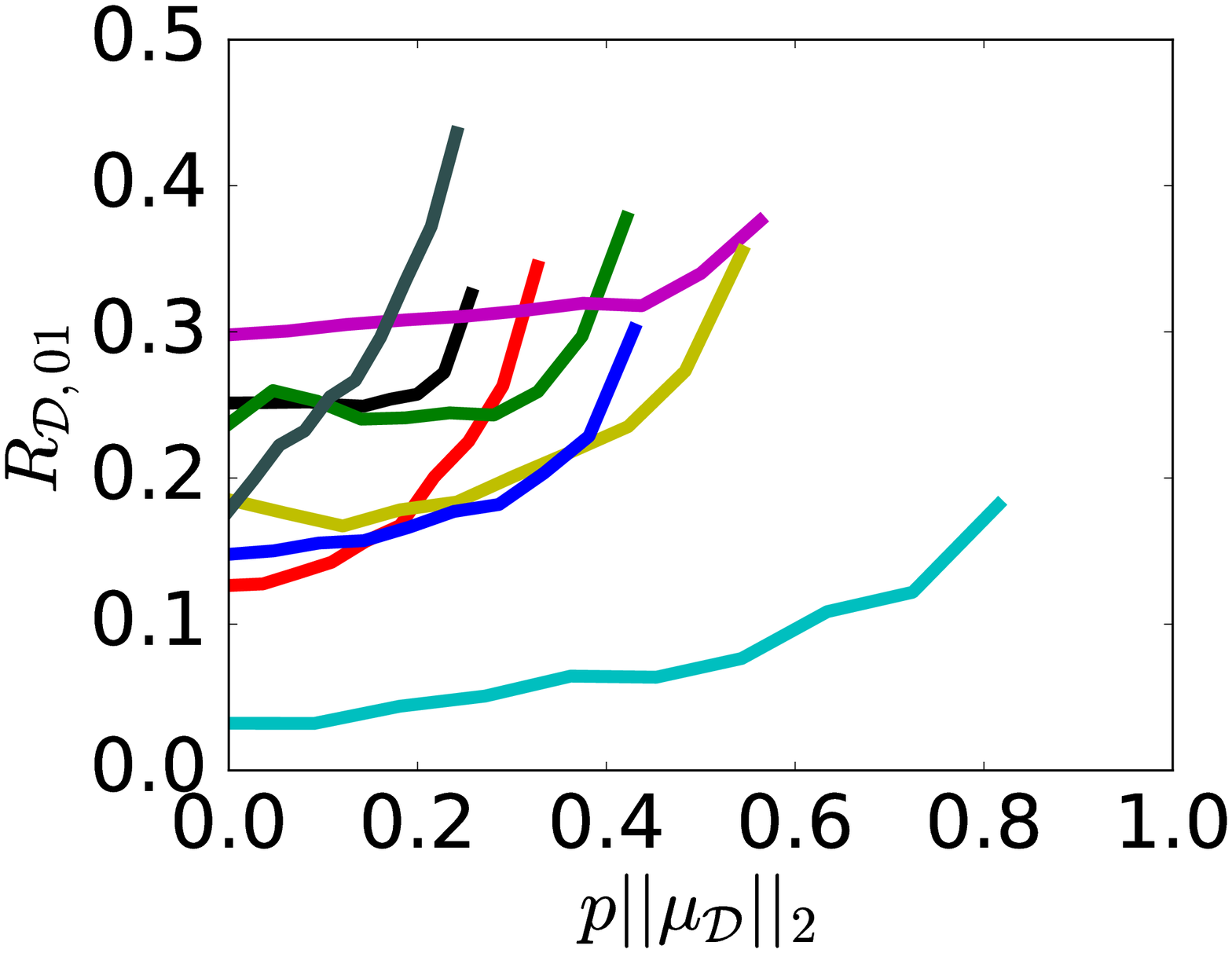}} 
\subfigure[$\|\mu_{\tilde{\calD}}\|_2$ \emph{vs.} $d_{\mathrm{clean}}$]{\label{fig:3-c} \includegraphics[width=.245\textwidth]{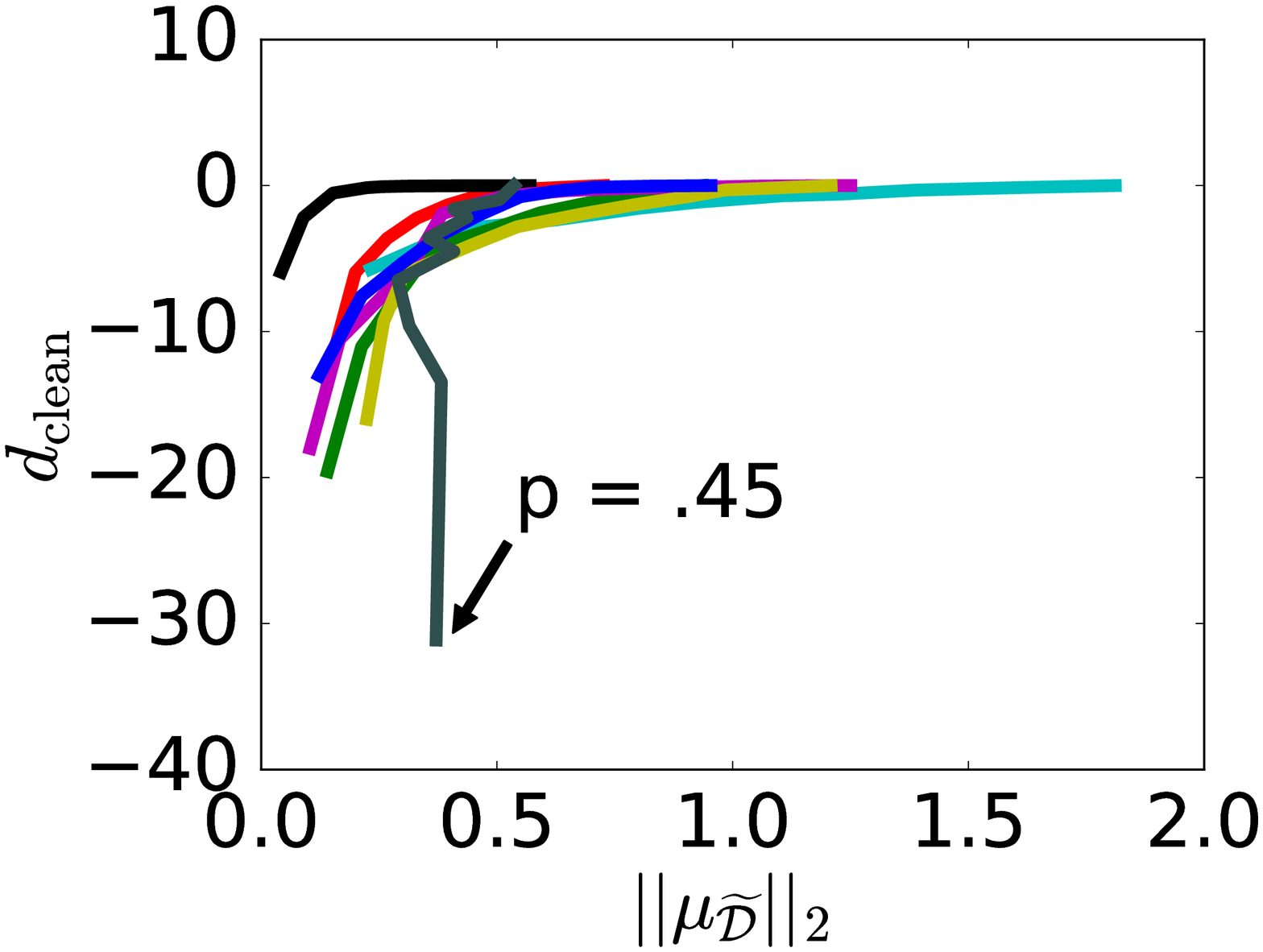}}
\vspace{-10px}
\caption{How mean operator and noise rate condition risks. $d_{\mathrm{clean}} \defeq R_{\calD, l}(\bm{\theta}^\star) - R_{\calD, l}(\tilde{\bm{\theta}}^\star)$.} \label{fig:exp2}
\vspace{-10px}
\end{figure*}

We begin by building a toy planar dataset to probe the behavior of Theorem \ref{th:noisy2}. It is made of four observations: $(0, 1)$ and  $(\phi / 3, 1/3)$ three times, with the first example the only negative, repeated 5 times. We consider this the distribution $\calD$ so as to calculate $\| \meanop_{\calD} \|_2 = \phi^2 / 4$. We fix $p_+, p_- = 0.2 = p$ and control $\phi$ to measure the discrepancy $d_{\mathrm{noisy}} \defeq R_{\tilde{\calD}, l}(\bm{\theta}^\star) - R_{\tilde{\calD}, l}(\tilde{\bm{\theta}}^\star)$, its counterpart $d_{\mathrm{clean}}$ computed on $\calD$, and how the two minimizers ``differ in sign" by $d_{\mathrm{models}} \defeq \langle \bm{\theta}^\star, \tilde{\bm{\theta}}^\star \rangle / \| \bm{\theta}^\star \|_2 \| \tilde{\bm{\theta}}^\star \|_2 $. The same simulation is run varying the noise rates with constant $\phi = 10^{-4}$. We learn with $\lambda = 10^{-6}$ by standard square loss. Results are in Figure \ref{fig:exp1}. The closer the parameters to $0$, the smaller $d_{\mathrm{clean}} -d_{\mathrm{noisy}}$, while they are equal when each parameter is individually $0$. $d_{\mathrm{models}}$ is negligible, which is good news for the $01$-risk on sight.

Algorithm \ref{algo:noisy} learns with noisy labels on the estimator of Equation \ref{eq:unbiasedmu} and by calling the black box of $\meanop$\textsc{sgd}. The next results are based on UCI datasets \citep{blUM}. We learn with logistic loss, without model's intercept and set $\lambda = 10^{-6}$ and $T = 4 \cdot 2m$ (4 epochs). We measure $d_{\mathrm{clean}}$ and $R_{\calD, 01}$, injecting symmetric label noise $p \in [0, 0.45)$ and averaging over 25 runs. Again, we consider \emph{the whole distribution} so as to play with the ingredients of Theorem \ref{th:noisy2}. Figure \ref{fig:3-a} confirms how the combined effect of $p \| \meanop_{\calD} \|_2$ can explain most variation of $d_{\mathrm{clean}}$. While this is not strictly implied by Theorem \ref{th:noisy2} that only involves $d_{\mathrm{noisy}}$, the observed behavior is expected. A similar picture is given in Figure \ref{fig:3-b} which displays the true risk $R_{\calD, 01}$ computed on the minimizer $\tilde{\bm{\theta}}^\star$ of $\tilde{\calS}$. From \ref{fig:3-a} and \ref{fig:3-b} we also deduce that large $\| \meanop_{\calD} \|_2$ is a good proxy for generalization with linear classifiers; see the relative difference between points at the same level of noise. Finally, we have also monitored $\meanop_{\tilde{\calD}}$. Figure \ref{fig:3-c} shows that large $\| \meanop_{\tilde{\calD}} \|_2$ indicates small $d_{\mathrm{clean}}$ as well. This remark can be useful in practice, when the norm can be accurately estimated from $\tilde{\calS}$, as opposite to $p$ and $\meanop_{\calD}$, and used to anticipate the effect of noise on the task at hand.

We conclude with a systematic study of hold-out error of $\meanop$\textsc{sgd}. The same datasets are now split in 1/5 test and 4/5 training sets once at random. In contrast with the previous experimental setting we perform cross-validation of $\lambda \in 10^{\{-3, \dots, +3 \}}$ on 5-folds in the training set. We compare with vanilla \textsc{sgd} run on corrupted sample $\tilde{\calS}$ and measure the gain from estimating $\hat{\meanop}_{\tilde{\calS}}$. The other parameters $l, T, \lambda$ are the same for both algorithms; the learning rate $\eta$ is untouched from \citep{sssscPP} and not tuned for $\meanop$\textsc{sgd}. The only differences are in input and gradient update. Table \ref{table:big} reports test error for \textsc{sgd} and its difference with $\meanop$\textsc{sgd}, for a range of values of $(p_-, p_+)$. $\meanop$\textsc{sgd} beats systematically \textsc{sgd} with large noise rates, and still performs in pair with its competitor under low or null noise. Interestingly, in the presence of very intense noise $p_+ \approx .5$, while the contender is often doomed to random guess, $\meanop$\textsc{sgd} still learns sensible models and improves up to $55\%$ relatively to the error of \textsc{sgd}.

\begin{table*}
\centering
\tiny
\begin{tabular}{@{}l@{}lr|rr|rr|rr|rr|rr|rrr}
\toprule
$(p_-, p_+) \rightarrow$                  & \multicolumn{2}{c}{$(.00,.00)$} & \multicolumn{2}{c}{$(.20,.00)$} & \multicolumn{2}{c}{$(.20,.10)$} & \multicolumn{2}{c}{$(.20,.20)$} & \multicolumn{2}{c}{$(.20,.30)$} & \multicolumn{2}{c}{$(.20,.40)$} & \multicolumn{2}{c}{$(.20,.49)$} \\
\emph{dataset}                  & \textsc{sgd} & $\meanop$\textsc{sgd} & \textsc{sgd} & $\meanop$\textsc{sgd} & \textsc{sgd} & $\meanop$\textsc{sgd} & \textsc{sgd} & $\meanop$\textsc{sgd} & \textsc{sgd} & $\meanop$\textsc{sgd} & \textsc{sgd} & $\meanop$\textsc{sgd} & \textsc{sgd} & $\meanop$\textsc{sgd} \\
\midrule
 australian &      0.13 &          $+.01$ &      0.15 &          $\bm{-.01}$ &      0.14 &          $\pm.00$ &      0.14 &          $+.01$ &      0.16 &          $\bm{-.01}$ &      0.26 &          $\bm{-.09}$ &      0.45 &          $\bm{-.25}$ \\
       breast-can. &      0.02 &          $+.01$ &      0.03 &          $\pm.00$  &      0.03 &          $\pm.00$  &      0.03 &          $\pm.00$  &      0.05 &          $\bm{-.01}$  &      0.11 &          $\bm{-.06}$ &      0.17 &          $\bm{-.08}$ \\
       diabetes &      0.28 &          $\bm{-.03}$ &      0.29 &          $\bm{-.03}$ &      0.29 &          $\bm{-.03}$ &      0.27 &          $\bm{-.02}$ &      0.28 &          $\bm{-.02}$ &      0.39 &          $\bm{-.13}$ &      0.59 &          $\bm{-.22}$ \\
       german &      0.27 &          $\bm{-.02}$ &      0.26 &          $\pm.00$ &      0.27 &          $\bm{-.02}$ &      0.29 &          $\bm{-.02}$ &      0.31 &          $\bm{-.01}$ &      0.31 &          $\pm.00$ &      0.31 &          $\pm.00$ \\
       heart &      0.15 &          $+.01$ &      0.17 &          $\bm{-.01}$ &      0.16 &          $\pm.00$ &      0.17 &          $\pm.00$ &      0.18 &          $\bm{-.01}$ &      0.26 &          $\bm{-.08}$ &      0.35 &          $\bm{-.15}$ \\
       housing &      0.17 &          $\bm{-.03}$ &      0.23 &          $\bm{-.05}$ &      0.22 &          $\bm{-.04}$ &      0.20 &          $\bm{-.02}$ &      0.22 &          $\bm{-.03}$ &      0.34 &          $\bm{-.12}$ &      0.41 &          $\bm{-.13}$ \\
       ionosphere &      0.14 &          $+05$ &      0.19 &          $\bm{-.05}$ &      0.20 &          $\bm{-.05}$ &      0.20 &          $\bm{-.03}$ &      0.21 &          $\bm{-.03}$ &      0.35 &          $\bm{-.13}$ &      0.54 &          $\bm{-.29}$ \\
       sonar &      0.27 &          $\pm.00$ &      0.29 &          $+.02$ &      0.29 &          $+.01$ &      0.34 &          $\bm{-.04}$ &      0.36 &          $\bm{-.03}$ &      0.43 &          $\bm{-.10}$ &      0.45 &          $\bm{-.05}$ \\

\bottomrule
\end{tabular}
\caption{Test error for \textsc{sgd} and $\meanop$\textsc{sgd} over 25 trials of artificially corrupted datasets.}
\label{table:big}
\end{table*}

\section{Discussion and conclusion}\label{sec:concl}
\textbf{Mean and covariance operators}~~The intuition behind the relevance of the mean operator becomes clear once we rewrite it as follows. 
\begin{lemma}\label{th:cov} Let $\pi_+ \defeq \expect_{\calS} 1\{y > 0\}$ be the positive label proportion of $\calS$. Then
$
\bm{\mu}_\mathcal{S} = \mathbb{C}\text{ov}_{\mathcal{S}}[\bm{x},y] + (2\pi_+ - 1) \expect_{\mathcal{S}}[\bm{x}]\:\:.\label{lemmacov}
$
\end{lemma}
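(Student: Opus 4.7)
The plan is to unfold the definition of covariance and use the fact that $y \in \{-1,+1\}$ to compute $\mathbb{E}_{\mathcal{S}}[y]$ explicitly. By the standard identity, $\mathbb{C}\text{ov}_{\mathcal{S}}[\bm{x},y] = \mathbb{E}_{\mathcal{S}}[y\bm{x}] - \mathbb{E}_{\mathcal{S}}[y]\,\mathbb{E}_{\mathcal{S}}[\bm{x}]$, and the first term is exactly $\bm{\mu}_{\mathcal{S}}$ by definition of the empirical mean operator.

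Next I would compute $\mathbb{E}_{\mathcal{S}}[y]$ using the binary nature of the label: since $y$ takes only the values $+1$ and $-1$, we have $\mathbb{E}_{\mathcal{S}}[y] = \pi_+ \cdot 1 + (1-\pi_+)\cdot(-1) = 2\pi_+ - 1$, where $\pi_+ = \mathbb{E}_{\mathcal{S}}\mathbf{1}\{y>0\}$ as in the statement. Plugging this into the covariance identity yields $\mathbb{C}\text{ov}_{\mathcal{S}}[\bm{x},y] = \bm{\mu}_{\mathcal{S}} - (2\pi_+ - 1)\mathbb{E}_{\mathcal{S}}[\bm{x}]$, and rearranging gives precisely $\bm{\mu}_{\mathcal{S}} = \mathbb{C}\text{ov}_{\mathcal{S}}[\bm{x},y] + (2\pi_+ - 1)\mathbb{E}_{\mathcal{S}}[\bm{x}]$.

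There is no real obstacle here: the entire argument is a two-line rewriting that leverages (i) the definition of the mean operator $\bm{\mu}_{\mathcal{S}} = \mathbb{E}_{\mathcal{S}}[y\bm{x}]$, and (ii) the elementary observation that on $\mathcal{Y}=\{-1,+1\}$ the first moment of $y$ collapses to $2\pi_+ - 1$. The only thing worth flagging is that the lemma is stated for the empirical distribution on $\mathcal{S}$, but exactly the same derivation goes through verbatim for the population mean operator under $\mathcal{D}$, which is useful since the rest of the paper alternates between the two.
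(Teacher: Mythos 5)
Your proof is correct and follows exactly the paper's own argument: expand $\mathbb{C}\text{ov}_{\mathcal{S}}[\bm{x},y] = \expect_{\mathcal{S}}[y\bm{x}] - \expect_{\mathcal{S}}[y]\expect_{\mathcal{S}}[\bm{x}]$, identify the first term as $\bm{\mu}_{\mathcal{S}}$, and use $\expect_{\mathcal{S}}[y] = 2\pi_+ - 1$ since $y \in \{-1,+1\}$. No gaps; your closing remark that the identity holds verbatim for $\calD$ is a reasonable observation, not a flaw.
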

(Proof in \ref{proof:cov}) We have come to the unsurprising fact that --when observations are centered-- the covariance $\mathbb{C}\text{ov}_{\mathcal{S}}[\bm{x},y]$ is what we need to know about the labels for learning linear models. The rest of the loss may be seen as a data-dependent regularizer. However, notice how the condition $\| \meanop_{\calD} \|_2 = 0$ does not implies $\mathbb{C}\text{ov}_{\mathcal{D}}[\bm{x},y] = 0$, which would make linear classification hard and limit Theorem \ref{th:noisy2}'s validity to degenerate cases. A kernelized version of this Lemma is given in \citep{shsfHS}. \\

\textbf{The generality of factorization}~~Factorization is ubiquitous for any (margin) loss, beyond the theory seen so far. A basic fact of real analysis supports it: a function $l$ is (uniquely) the sum of an even function $l_e$ and an odd $l_o$:
\begin{align*}
l(x) = \frac{1}{2} \left(l(x)+l(-x) + l(x)-l(-x) \right)= l_e(x) + l_o(x)\:\:.
\end{align*}
One can check that $l_e$ and $l_o$ are indeed even and odd (Figure \ref{fig:1}). This is actually all we need for the factorization of $l$.
\begin{theorem}[Factorization]\label{th:factorization2}
For any sample $\calS$ and hypothesis $h$ the empirical $l$-risk can be written as
\begin{align*}
R_{\calS, l}(h) = \frac{1}{2} \expect_{\calS} \left[ \sum\nolimits_{\sigma\in \mathcal{Y}} l(\sigma h(\bm{x})) \right] + \expect_{\calS} \Big[ l_o(y h(\bm{x})) \Big]
\end{align*}
 where $l_o(\cdot)$ is odd and $l_e(\cdot) \defeq \sum_{\sigma\in \mathcal{Y}} l(\sigma h(\cdot))$ is even and both uniquely defined.
\end{theorem}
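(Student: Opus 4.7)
The proof will essentially strip Theorem \ref{th:factorization} down to its bare algebraic core: the linearity of $l_o$ and the linearity of $h$ were only used in the last step of that earlier proof to pass from $\expect_{\calS}[l_o(yh(\bm{x}))]$ to $l_o(h(\meanop_{\calS}))$. For the general statement here we stop one line earlier, so no structural assumptions on $l$ or $h$ are needed.

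The plan is to start from the standard real-analysis identity
\begin{equation*}
l(x) \;=\; \frac{l(x)+l(-x)}{2} \;+\; \frac{l(x)-l(-x)}{2} \;=\; l_e(x) + l_o(x),
\end{equation*}
and check directly that $l_e$ is even and $l_o$ is odd. Applying this pointwise to the argument $yh(\bm{x})$ and taking the sample expectation gives
\begin{equation*}
R_{\calS,l}(h) \;=\; \expect_{\calS}\!\left[l_e(y h(\bm{x}))\right] + \expect_{\calS}\!\left[l_o(y h(\bm{x}))\right].
\end{equation*}
The only substantive observation is that, because $y \in \mathcal{Y} = \{-1,+1\}$, the even part does not depend on $y$: indeed
\begin{equation*}
l_e(yh(\bm{x})) \;=\; \tfrac{1}{2}\bigl(l(yh(\bm{x})) + l(-yh(\bm{x}))\bigr) \;=\; \tfrac{1}{2}\sum\nolimits_{\sigma\in\mathcal{Y}} l(\sigma h(\bm{x})),
\end{equation*}
which is exactly the first term in the statement. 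Substituting this back yields the claimed factorization; no assumption on $l$ (smoothness, convexity, linearity of $l_o$) or on $h$ (linearity, parametric form) is used anywhere.

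For the uniqueness claim, I would invoke the standard fact that the decomposition of a real function into an even and an odd part is unique: if $l = l_e + l_o = l_e' + l_o'$ with $l_e,l_e'$ even and $l_o,l_o'$ odd, then $l_e - l_e' = l_o' - l_o$ is simultaneously even and odd and hence identically zero. There is essentially no obstacle here — the ``hard part,'' if any, is purely conceptual: recognising that the mean-operator machinery in Theorem \ref{th:factorization} is the special case obtained by imposing $l_o(x)=ax$ and $h$ linear, so that the odd term $\expect_{\calS}[l_o(yh(\bm{x}))]$ collapses to $l_o(h(\meanop_{\calS}))$. Without those assumptions, the odd term simply remains an expectation, and the factorization still holds, making Theorem \ref{th:factorization2} a genuine generalization rather than a new computation.
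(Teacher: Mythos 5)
Your proposal is correct and follows essentially the same route as the paper: the paper's own justification is the even/odd decomposition $l = l_e + l_o$ combined with the first two lines of the proof of Theorem \ref{th:factorization}, stopping exactly where you stop, i.e.\ before linearity of $l_o$ and $h$ is invoked to collapse $\expect_{\calS}[l_o(yh(\bm{x}))]$ into $l_o(h(\meanop_{\calS}))$. Your explicit uniqueness argument (a function both even and odd is identically zero) is the standard one the paper implicitly relies on.
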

 Its range of validity is exemplified by $01$ loss, a non-convex discontinuous piece-wise linear function, which factors as
\begin{align*}
l_e(x) &= 
\begin{array}{lc}
  \Big\{ 
    \begin{array}{cc}
	\frac{1}{2} & x \neq 0 \\
      	1 & \text{otherwise}
    \end{array},
\end{array}
l_o(x) &= -\frac{1}{2} \sgn(x) \:\:.
\end{align*}
It follows immediately that $\expect_{\calS} [l_o(\cdot)]$ is sufficient for $y$. However,
$l_o$ is a function of model $\bm{\theta}$. This defeats the purpose a of sufficient statistic, which we aim to be computable from data only and it is the main reason to define \lol s. The Factorization Theorem \ref{th:factorization} can also be stated for \textsc{rkhs}. To show that, notice that we satisfy all hypotheses of the Representer Theorem \citep{ssLW}.
\vspace{-5px}
\begin{theorem}\label{th:kernel}
Let $h(\bm{x}): \mathcal{X} \rightarrow \mathcal{H}$ be a feature map into a Reproducing Kernel Hilbert Space (\textsc{rkhs}) $\mathcal{H}$ with symmetric positive definite kernel $k: \mathcal{X} \times \mathcal{X} \rightarrow \mathbb{R}$, such that $h:\bm{x} \rightarrow k(\cdot, \bm{x})$. For any learning sample $\calS$, the empirical $l$-risk $R_{\calS, l}(h)$ with $\Omega: ||h||_{\mathcal{H}} \rightarrow \mathbb{R}^+$ regularization can be written as 
\begin{align*}
\frac{1}{2} \expect_{\calS} \left[ \sum\nolimits_{\sigma\in \mathcal{Y}} l(\sigma h(\bm{x})) \right] + \expect_{\calS} \Big[ l_o(y h(\bm{x})) \Big] + \Omega(||h||_{\mathcal{H}})
\end{align*}
and the optimal hypothesis admits a representation of the form $h(\bm{x}) = \sum_{i \in [m]} \alpha_i k(\bm{x}, \bm{x}_i)$.
\end{theorem}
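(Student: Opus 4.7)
The decomposition stated in the theorem is nothing more than Theorem \ref{th:factorization2} transcribed with $h$ interpreted as an element of the RKHS rather than a linear functional on $\R^d$. The key observation is that the factorization $l = l_e + l_o$ is a purely algebraic identity on the real line: it never uses the specific parametric form of the argument $y h(\bm{x})$. So the first step of my plan is simply to rerun the three-line calculation from the proof of Theorem \ref{th:factorization} verbatim, treating $h(\bm{x})$ as an arbitrary real-valued quantity. Adding the regularizer $\Omega(\|h\|_{\mathcal{H}})$ to both sides, which does not depend on the labels, gives the claimed expression for the regularized $l$-risk. Crucially, no linearity or odd-ness of $l_o$ is used at this stage; we only use the even/odd split.

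For the representation of the optimizer, the plan is to invoke the (generalized) Representer Theorem of Sch\"olkopf, Herbrich and Smola. Its hypotheses are: (i) $\Omega$ is a strictly monotonically increasing function of $\|h\|_{\mathcal{H}}$, which is part of the setup; and (ii) the data-fitting functional depends on $h$ only through its evaluations at the training points. I would verify (ii) by inspecting the factored objective term by term: $\frac{1}{2}\expect_{\calS}[\sum_{\sigma} l(\sigma h(\bm{x}))]$ is a function of the multiset $\{h(\bm{x}_i)\}_{i\in[m]}$, and $\expect_{\calS}[l_o(y h(\bm{x}))]$ is a function of the pairs $\{(y_i, h(\bm{x}_i))\}_{i\in[m]}$. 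Therefore the entire regularized objective factors through the evaluation functionals $h \mapsto h(\bm{x}_i) = \langle h, k(\cdot, \bm{x}_i)\rangle_{\mathcal{H}}$, and the Representer Theorem yields $h^\star(\bm{x}) = \sum_{i\in[m]} \alpha_i k(\bm{x}, \bm{x}_i)$.

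I do not anticipate a serious obstacle: both ingredients are off-the-shelf. The only point that deserves a comment is that for the Representer Theorem we implicitly need $\Omega$ to be strictly monotone (otherwise the minimizer need not lie in the span of the kernel evaluations at the training points); since the theorem statement leaves $\Omega$ generic I would make this a mild standing assumption, matching the original hypothesis in \citep{ssLW}. If one wants to reach the \lol~specialization (so that the odd term collapses to a linear functional of $\bm{\mu}_{\calS}$ in $\mathcal{H}$), one would additionally use that $l_o(x) = ax$ implies $\expect_{\calS}[l_o(y h(\bm{x}))] = a\, h(\bm{\mu}_{\calS})$ by linearity of evaluation in an RKHS, exactly as in the Euclidean case; but this is an optional corollary and not needed for the theorem as stated.
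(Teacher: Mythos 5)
Your proposal is correct and follows essentially the same route as the paper, which simply observes that the even/odd split of Theorem \ref{th:factorization2} is a pointwise identity valid for any real-valued $h$, and that the resulting regularized objective depends on $h$ only through its evaluations at the training points, so the Representer Theorem of \citep{ssLW} applies. Your explicit verification of the Representer Theorem's hypotheses, including the remark that $\Omega$ should be strictly monotone in $\|h\|_{\mathcal{H}}$, fills in exactly the details the paper leaves implicit.
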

All paper may be read in the context of non-parametric models, with the \emph{kernel} mean operator as sufficient statistic. Finally, it is simple to show factorization for square loss for regression (\ref{sec:regression}). This finding may open further applications of our framework. \\

\textbf{The linear-odd losses of \citep{pnsCF}}~~This recent work in the context of \textsc{pu} shows how the linear-odd condition on a \emph{convex} $l$ allows one to derive a tractable, \emph{i.e.} still convex, loss for learning with \textsc{pu}. The approach is conceptually related to ours as it isolates a label-free term in the loss, with the goal of leveraging on the unlabelled examples too. Interestingly, the linear term of their Equation 4 can be seen as a mean operator estimator like $\hat{\meanop} \defeq \mathbb{P}(y=1) \cdot \expect_{\calS_+}[\bm{x}]$, where $\calS_+$ is the set of positive examples. Their manipulation of the loss \emph{is not } equivalent to the Factorization Theorem \ref{th:factorization} though, as explained with details in (\ref{sec:pu}. Beside that, since we reason at the higher level of \textsc{wsl}, we can frame a solution for \textsc{pu} simply calling $\meanop$\textsc{sgd} on $\hat{\meanop}$ defined above or on estimators improved by exploiting results of \citep{pnrcAN}. \\

\textbf{Learning reductions}~~Solving a machine learning problem by solutions to other learning problems is a \emph{learning reduction} \citep{bdlmLR}. Our work \emph{does} fit into this framework. Following \citep{bdhlzEL}, we define a \textsc{wsl} task as a triple $(\mathcal{K}, \mathcal{Y}, l)$, with weakly supervised advice $\mathcal{K}$, predictions space $\mathcal{Y}$ and loss $l$, and we reduce to binary classification $(\mathcal{Y}, \mathcal{Y}, l)$. Our reduction is somehow simple, in the sense that $\mathcal{Y}$ does not change \emph{and neither does} $l$. Although, Algorithm \ref{algo:meta} modifies the internal code of the ``oracle learner" which contrasts with the concept of reduction. Anyway, we could as well write subgradients as 
\begin{align*}
\frac{1}{2} \left( \partial  l(\langle \bm{\theta}^{t}, \bm{x}_i \rangle) + \partial l (-\langle \bm{\theta}^{t}, \bm{x}_i \rangle) + a \meanop \right)  \:\:,
\end{align*}
which equals $\partial l$, and thus the oracle would be untouched. \\

\textbf{Beyond $\meanop$\textsc{sgd}}~~\textsc{meta}-$\meanop$\textsc{sgd} is intimately similar to stochastic average gradient (\textsc{sag}) \citep{srbMF}. Let denote $g_{e}^{i, t}(\bm{\theta}) \in \partial l_{e}(y_i\dotpi)$ if $i = i^{t}$ (example $i$ picked at time $t$), otherwise $= g_{e}^{i, t-1}(\bm{\theta})$. Define the same for $l_o$ accordingly. Then, \textsc{\textsc{sag}}'s model update is:
\begin{align*}
\bm{\theta}^{t+1} \leftarrow \bm{\theta}^t - \frac{\eta}{m} \sum\nolimits_{i \in [m]} g_{e}^{i, t}(\bm{\theta}^{t}) - \frac{\eta}{m} \sum\nolimits_{i \in [m]} g_{o}^{i, t}(\bm{\theta}^{t}) \:\:,
\end{align*}
 and recalling that $a\meanop_{\calS} = \expect_{\calS} [ \partial l_{o}(\bm{\theta})]$, $\meanop$\textsc{sgd}'s update is
\begin{align*}
\bm{\theta}^{t+1} \leftarrow \bm{\theta}^t - \eta~\partial l_{e}^{i}(\bm{\theta}^{t}) - \frac{\eta}{m} \sum\nolimits_{i \in [m]}  \partial l_{o}^{i}(\bm{\theta}^{t}) \:\:.
\end{align*}
From this parallel, the two algorithms appear to be variants of a more general sampling mechanism of examples \emph{and} gradient components, at each step. More generally, stochastic gradient is just \emph{one} suit of algorithms that fits into our 2-step learning framework. Proximal methods \citep{bjmoOW} are another noticeable example. The same \emph{modus operandi} leads to a proximal step of the form:
\begin{align*}
\bm{\theta}^{t+1} \leftarrow \prox_{\Theta} \left( \bm{\theta}^t + \eta \left( \partial R_{\calS_{2x}, l}(\bm{\theta}^t) +  \frac{a}{2} \meanop \right) \right)
\end{align*}
 with $\prox_{g}(x) = \argmin_{x'} g(x') + \frac{1}{2} \|x - x' \|_2^2)$ and $\Theta(\cdot)$ the regularizer. Once again, the adaptation works by summing $\meanop$ in the gradient step and changing the input to $\calS_{2x}$.  \\

\textbf{A better (?) picture of robustness}~~The data-dependent worst-case result of \citep{lsRC}, like any extreme-case argument, should be handled with care. It does not give the big picture for all data we may encounter in a real world, but only the most pessimistic. We present such a global view which appears better than expected: learning the minimizer from noisy data does not necessarily reduce convex losses to a singleton \citep{rmwLW} but depends on the mean operator for a large number of them (not necessarily linear, convex or smooth). Quite surprisingly, factorization also marries the two opposite views in one formula\footnote{See \citep[Theorem 1]{gmsMR}.}:
\begin{align*}
l(x) = \frac{1}{2} (~\underbrace{l(x)+l(-x)}_{ = \mathrm{const}~\Rightarrow~0\text{-\textsc{aln}}} + \underbrace{l(x)-l(-x)}_{= ax~\Rightarrow~\epsilon\text{-\textsc{aln}}}~)\:\:.
\end{align*} \\

To conclude, we have seen how losses factor in a way that we can isolate the \emph{contribution of supervision}. This has several implications both on theoretical and practical grounds: learning theory, formal analysis of label noise robustness, and adaptation of algorithms to handle poorly labelled data. An interesting question is whether factorization would let one identify what really matters in learning that is instead \emph{completely unsupervised}, and to do so with more complex models than the ones considered here, as for example deep architectures.


\section*{Acknowledgements}
The authors thank Aditya Menon for insightful feedback on an earlier draft. NICTA is funded by the Australian Government through the Department of Communications and the Australian Research Council through the ICT Center of Excellence Program.

\bibliography{main}
\bibliographystyle{unsrtnat}

\appendix

\section*{Appendices}
\addcontentsline{toc}{section}{Appendices}

\section{Proofs}

\subsection{Proof of Lemma \ref{th:suff}}\label{proof:suff}
We need to show the double implication that defines sufficiency for $y$. \\
$\Rightarrow)$ By Factorization Theorem (\ref{th:factorization}), $R_{\calS,l}(h) - R_{\calS',l}(h)$ is label independent only if the odd part cancels out.\\
$\Leftarrow)$ If $\meanop_{\calS} = \meanop_{\calS}'$ then $R_{\calS,l}(h) - R_{\calS',l}(h)$ is independent of the label, because the label only appears in the mean operator due to Factorization Theorem (\ref{th:factorization}).

\subsection{Proof of Lemma \ref{th:1-to-1even}}\label{proof:1-to-1even}
Consider the class of \lol s satisfying $l(x) - l(-x) = 2ax$. For any element of the class, define $l_e(x)=l(x) - ax$, which is even. In fact we have $$l_e(-x)=l(-x)+ ax = l(x) - 2ax + ax = l(x) - ax = l_e(x)\:\:.$$

\subsection{Proof of Theorem \ref{th:generalization}}\label{proof:generalization}

We start by proving two helper Lemmas. The next one provides a bound to the Rademacher complexity computed on the sample $\calS_{2x} \defeq \{ (\bm{x}_i, \sigma), i \in [m], \forall \sigma \in \mathcal{Y} \}$.

\begin{lemma}\label{lemma1}
Suppose $m$ even. Suppose $\mathcal{X} = \{ \bm{x}: \| \bm{x} \|_2 \leq X \}$ be the observations space, and $\mathcal{H} = \{ \bm{\theta}: \|\bm{\theta}\|_2 \leq B \}$ be the space of linear hypotheses. Let $\mathcal{Y}^{2m} \defeq \times_{j \in [2m]} \mathcal{Y}$.
Then the empirical Rademacher complexity $$\mathcal{R}(\mathcal{H} \circ \mathcal{S}_{2x}) \defeq \expect_{\sigma \sim \mathcal{Y}^{2m}} \left[ \sup_{\bm{\theta} \in \mathcal{H}} \frac{1}{2m} \sum_{i \in [2m]} \sigma_i \langle \bm{\theta}, \bm{x}_i \rangle \right]$$ of $\mathcal{H}$ on $\calS_{2x}$ satisfies:
\begin{eqnarray}
\mathcal{R}(\mathcal{H} \circ \mathcal{S}_{2x}) & \leq & v \cdot \frac{BX}{\sqrt{2m}}\:\:,
\end{eqnarray}
with $ v \defeq \frac{1}{2} + \frac{1}{2}\sqrt{\frac{1}{2}-\frac{1}{m}}$.
\end{lemma}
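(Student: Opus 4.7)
My plan is to exploit the duplication in $\calS_{2x}$ to sharpen the naive Rademacher bound. I proceed in three stages: an $\ell_2$-duality reduction, a decoupling of the Rademacher signs into a sparse binomial structure, and a refined moment inequality for a binomial square root.

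\textbf{Stage 1 --- reduction to a Euclidean norm expectation.} Since $\mathcal{H}$ is an $\ell_2$-ball of radius $B$, $\sup_{\bm{\theta}\in\mathcal{H}}\langle\bm{\theta},\bm{v}\rangle = B\|\bm{v}\|_2$, so, reindexing $\calS_{2x}$ so that each $\bm{x}_i$ occupies positions $i$ and $i+m$,
\[
\mathcal{R}(\mathcal{H}\circ\calS_{2x}) = \frac{B}{2m}\,\expect_\sigma \Bigl\|\sum\nolimits_{i=1}^m (\sigma_i + \sigma_{i+m})\bm{x}_i\Bigr\|_2.
\]
The effective weights $\tau_i \defeq \sigma_i + \sigma_{i+m}$ are sparse: $\tau_i = 0$ with probability $1/2$, and $\tau_i = \pm 2$ each with probability $1/4$. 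This is exactly the distribution of $2B_i\eta_i$, with independent $B_i\sim\mathrm{Bern}(1/2)$ and $\eta_i$ Rademacher.

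\textbf{Stage 2 --- conditional Khintchine.} Condition on $B = (B_i)$ and set $N \defeq \sum_i B_i \sim \mathrm{Bin}(m,1/2)$. Given $B$, the inner expectation over $\eta$ is a standard Rademacher sum over the $N$ active coordinates; by Jensen (Khintchine constant $1$ in $\ell_2$),
\[
\expect_\eta \Bigl\|\sum\nolimits_i 2 B_i \eta_i \bm{x}_i\Bigr\|_2 \leq 2\sqrt{\sum\nolimits_{i:B_i=1}\|\bm{x}_i\|_2^2} \leq 2 X \sqrt{N},
\]
so $\mathcal{R}(\mathcal{H}\circ\calS_{2x}) \leq \tfrac{BX}{m}\,\expect\sqrt{N}$.

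\textbf{Stage 3 --- binomial square-root moment (the hard step).} The plain Jensen bound $\expect\sqrt{N}\leq\sqrt{m/2}$ returns only the standard $BX/\sqrt{2m}$, so the factor $v = \tfrac{1}{2} + \tfrac{1}{2}\sqrt{\tfrac{1}{2}-\tfrac{1}{m}}$ must come from a second-order refinement. I would sharpen Jensen by writing $N = m/2 + (N - m/2)$ and exploiting $\mathrm{Var}(N) = m/4$: either via a tight concavity inequality of the form $\sqrt{1+t}\leq 1 + t/2 - c\,t^2$ integrated against the distribution of $(N - m/2)/(m/2)$, or by splitting $N = N_+ + N_-$ with $N_\pm \defeq |\{i : \tau_i = \pm 2\}| \sim \mathrm{Bin}(m, 1/4)$, applying Jensen to each marginal and recombining through subadditivity $\sqrt{N_+ + N_-} \leq \sqrt{N_+} + \sqrt{N_-}$. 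Pinning the constant so that exactly $\sqrt{1/2 - 1/m}$ appears inside the radical (rather than the coarser $\sqrt{1 - 2^{-m}}$ from a Cauchy--Schwarz on $\sqrt{N}\cdot\mathbf{1}_{\{N\geq 1\}}$, or the pure asymptote $\sqrt{1/2}$) is the only substantive obstacle; the earlier reductions are routine bookkeeping.
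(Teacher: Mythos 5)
Your Stages 1 and 2 are correct and are in substance the same as the paper's argument: the paper also passes to $\frac{B}{2m}\,\expect\|\sum_i\sigma_i\bm{x}_i\|_2$ by duality over the $\ell_2$-ball, groups the $2^{2m}$ sign assignments by the subset $\mathcal{I}$ of indices on which the pair $(\sigma_i,\sigma_{m+i})$ does not cancel, and bounds the conditional expectation of the norm by $\sqrt{|\mathcal{I}|}\,X$ using $\sqrt{1+t}\le 1+t/2$ together with the vanishing of the cross terms --- which is exactly your conditional Khintchine/Jensen step. Both routes land on a bound of the form $\mathrm{const}\cdot\frac{BX}{m}\,\expect\sqrt{N}$ with $N\sim\mathrm{Bi}(m,1/2)$.

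The gap is Stage 3, and it cannot be closed as you hope. Your reduction gives $\mathcal{R}(\mathcal{H}\circ\mathcal{S}_{2x})\le\frac{BX}{m}\expect\sqrt{N}$ (the surviving coefficients are $\pm 2$, which is what turns $\frac{B}{2m}$ into $\frac{B}{m}$), so the claimed constant would require $\expect\sqrt{N}\le v\sqrt{m/2}$ with $v<\frac12+\frac{1}{2\sqrt{2}}\approx 0.854$. But $\expect\sqrt{N}=\sqrt{m/2}\,\bigl(1-O(1/m)\bigr)$, so the ratio $\expect\sqrt{N}/\sqrt{m/2}$ tends to $1$ and no constant $v<1$ can work for all even $m$: the second-order correction to Jensen you are after is only a relative $O(1/m)$, not a constant factor. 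Indeed, every inequality in your Stages 1--2 becomes an equality when the $m$ observations are pairwise orthogonal with $\|\bm{x}_i\|_2=X$, in which case $\mathcal{R}=\frac{BX}{m}\expect\sqrt{N}\rightarrow\frac{BX}{\sqrt{2m}}$, exceeding $v\,BX/\sqrt{2m}$. For comparison, the paper's final step splits the binomial sum at the median, bounding $\sqrt{k}$ by $\sqrt{m/2-1}$ on the lower half and by $\sqrt{m}$ on the upper half; this yields $\frac12\sqrt{m/2-1}+\frac12\sqrt{m}$, which is actually \emph{larger} than the Jensen bound $\sqrt{m/2}$, and the constant $v<1$ only emerges there because a factor $\sqrt{2}$ is carried where the pairing $\sigma_i+\sigma_{m+i}\in\{0,\pm 2\}$ produces a factor $2$. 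So your instinct that ``pinning the constant'' is the only substantive obstacle is right, but the obstacle is not a missing trick: by this (correct) decomposition the sharp constant is $1$, and the stated $v$ is not attainable.
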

\begin{proof}
Suppose without loss of generality that $\bm{x}_i = \bm{x}_{m+i}$. The proof relies on the observation that $\forall \bm{\sigma} \in
\mathcal{Y}^{2m}$,
\begin{eqnarray}
  \arg \sup_{\bm{\theta} \in {\mathcal{H}}}
 \left\{
   \expect_{{\mathcal{S}}}[\sigma(\bm{x}) \dotp]
   \right\} & = & \frac{1}{2m}\arg \sup_{\bm{\theta} \in {\mathcal{H}}}
 \left\{
   \sum_i {\sigma_i \dotpi}
   \right\}\nonumber \\
 & = & \frac{\sup_{{\mathcal{H}}}\|\bm{\theta}\|_2}{\|\sum_i {\sigma_i \bm{x}_i}\|_2} \sum_i {\sigma_i \bm{x}_i}\:\:.
\end{eqnarray}
So,
\begin{eqnarray}
  \mathcal{R}(\mathcal{H} \circ \mathcal{S}_{2x}) & = & \expect_{\mathcal{Y}^{2m}}
 \sup_{h \in {\mathcal{H}}}
 \left\{
   \expect_{{\mathcal{S}}_{2x}}[\sigma(\bm{x}) h(\bm{x})]
   \right\} \nonumber\\
 & = &\frac{\sup_{{\mathcal{H}}}\|\theta\|_2}{2m}\cdot \expect_{\mathcal{Y}^{2m}} \left[\frac{\left(\sum_{i=1}^{2m} {\sigma_i
     \bm{x}_i}\right)^\top \left(\sum_{i=1}^{2m} {\sigma_i
     \bm{x}_i}\right)}{\|\sum_{i=1}^{2m} {\sigma_i
     \bm{x}_i}\|_2}\right]\nonumber\\
 & = & \sup_{{\mathcal{H}}}\|\theta\|_2 \cdot \expect_{\mathcal{Y}^{2m}} \left[\frac{1}{2m} \cdot \left\|\sum_{i=1}^{2m} {\sigma_i
     \bm{x}_i}\right\|_2\right]\label{eq11}\:\:.
\end{eqnarray}
Now, remark that whenever $\sigma_i = -\sigma_{m+i}$, $\bm{x}_i$ disappears in the sum, and therefore the max norm for the sum may decrease as well. This suggests to split the $2^{2m}$ assignations into $2^m$ groups of size $2^m$, ranging over the possible number of observations taken into account in the sum. They can be factored by a weighted sum of contributions of each subset of indices ${\mathcal{I}} \subseteq [m]$ ranging over the non-duplicated observations:
\begin{eqnarray}
\expect_{\mathcal{Y}^{2m}} \left[\frac{1}{m} \cdot \left\|\sum_{i=1}^{2m} {\sigma_i
     \bm{x}_i}\right\|_2\right] & = & \frac{1}{2^{2m}} \sum_{{\mathcal{I}} \subseteq [m]} \frac{2^{m-|{\mathcal{I}}|}}{2m} \cdot \sum_{\bm{\sigma} \in \mathcal{Y}^{|{\mathcal{I}}|}} \sqrt{2} \left\|\sum_{i\in {\mathcal{I}}} {\sigma_i
     \bm{x}_i}\right\|_2\:\:.\label{eq111}\\
 & = & \frac{\sqrt{2}}{2^{m}} \sum_{{\mathcal{I}} \subseteq [m]} \frac{1}{2m}\cdot \underbrace{\frac{1}{2^{|{\mathcal{I}}|}} \cdot \sum_{\bm{\sigma} \in \mathcal{Y}^{|{\mathcal{I}}|}} \left\|\sum_{i\in {\mathcal{I}}} {\sigma_i
     \bm{x}_i}\right\|_2}_{u_{|{\mathcal{I}}|}}\:\:.\label{es001}
\end{eqnarray}
The $\sqrt{2}$ factor appears because of the fact that we now consider only the observations of ${\mathcal{S}}$. Now, for any \textit{fixed} ${\mathcal{I}}$, we renumber its observations in $[|{\mathcal{I}}|]$ for simplicity, and observe that, since $\sqrt{1+x} \leq 1 + x/2$,
\begin{eqnarray}
u_{|{\mathcal{I}}|} & = & \frac{1}{2^{|{\mathcal{I}}|}} \sum_{\bm{\sigma}\in \mathcal{Y}^{|{\mathcal{I}}|}} {\sqrt{\sum_{i\in {\mathcal{I}}} \|\bm{x}_i\|_2^2 + \sum_{i_1\neq i_2}
     {\sigma_{i_1} \sigma_{i_2} \bm{x}_{i_1}^\top \bm{x}_{i_2}}}}\\
 & = & \frac{\sqrt{\sum_{i\in {\mathcal{I}}} \|\bm{x}_i\|_2^2}}{2^{|{\mathcal{I}}|}} \sum_{\bm{\sigma}\in \mathcal{Y}^{|{\mathcal{I}}|}} \sqrt{1 + \frac{\sum_{i_1\neq i_2}
     {\sigma_{i_1} \sigma_{i_2} \bm{x}_{i_1}^\top \bm{x}_{i_2}}}{\sum_{i\in {\mathcal{I}}} \|\bm{x}_i\|_2^2}}\\
 & \leq & \frac{\sqrt{\sum_{i\in {\mathcal{I}}} \|\bm{x}_i\|_2^2}}{2^{|{\mathcal{I}}|}} \sum_{\bm{\sigma}\in \mathcal{Y}^{|{\mathcal{I}}|}} \left(1 + \frac{\sum_{i_1\neq i_2}
     {\sigma_{i_1} \sigma_{i_2} \bm{x}_{i_1}^\top \bm{x}_{i_2}}}{2\sum_{i\in {\mathcal{I}}} \|\bm{x}_i\|_2^2}\right) \\
 &  = & \sqrt{\sum_{i\in {\mathcal{I}}} \|\bm{x}_i\|_2^2} + \frac{1}{2^{|{\mathcal{I}}|} \cdot 2\sum_{i\in {\mathcal{I}}} \|\bm{x}_i\|_2^2} \cdot \sum_{\bm{\sigma}\in \mathcal{Y}^{|{\mathcal{I}}|}} \sum_{i_1\neq i_2}
     {\sigma_{i_1} \sigma_{i_2} \bm{x}_{i_1}^\top \bm{x}_{i_2}}\\
 &  = &\sqrt{\sum_{i\in {\mathcal{I}}} \|\bm{x}_i\|_2^2} + \frac{1}{2^{|{\mathcal{I}}|} \cdot 2\sum_{i\in {\mathcal{I}}} \|\bm{x}_i\|_2^2} \cdot \sum_{i_1\neq i_2}
     {\bm{x}_{i_1}^\top \bm{x}_{i_2}\cdot \underbrace{\left(\sum_{\bm{\sigma}\in \mathcal{Y}^{|{\mathcal{I}}|}} \sigma_{i_1} \sigma_{i_2}\right) }_{=0}}\\
 & = & \sqrt{\sum_{i\in {\mathcal{I}}} \|\bm{x}_i\|_2^2} \\
 & \leq & \sqrt{|\mathcal{I}|} \cdot X\:\:.
\end{eqnarray}
Plugging this in eq. (\ref{es001}) yields
\begin{eqnarray}
\frac{1}{X} \cdot \expect_{\mathcal{Y}^{2m}} \left[\frac{1}{m} \cdot \left\|\sum_{i=1}^{2m} {\sigma_i
     \bm{x}_i}\right\|_2\right] & \leq & \frac{\sqrt{2}}{2^{m}} \sum_{k = 0}^{m} \frac{\sqrt{k}}{2m} {m \choose k}\:\:.\label{es01}
\end{eqnarray}
Since $m$ is even:
\begin{eqnarray}
\expect_{\mathcal{Y}^{2m}} \left[\frac{1}{2m} \cdot \left\|\sum_{i=1}^{2m} {\sigma_i
     \bm{x}_i}\right\|_2\right] & \leq & \frac{\sqrt{2}}{2^{m}} \sum_{k = 0}^{(m/2)-1} \frac{\sqrt{k}}{2m} {m \choose k}+\frac{\sqrt{2}}{2^{m}} \sum_{k = m/2}^{m} \frac{\sqrt{k}}{2m} {m \choose k}\:\:.\label{eqq01}
\end{eqnarray}
Notice that the left one trivially satisfies
\begin{eqnarray}
\frac{\sqrt{2}}{2^{m}} \sum_{k = 0}^{(m/2)-1} \frac{\sqrt{k}}{2m} {m \choose
  k} & \leq & \frac{\sqrt{2}}{2^{m}} \sum_{k = 0}^{(m/2)-1} \frac{1}{2m}
\cdot \sqrt{\frac{m-2}{2}} {m \choose
  k} \nonumber\\
 & = & \frac{1}{2} \cdot \sqrt{\frac{1}{m} - \frac{2}{m^2}} \cdot \frac{1}{2^{m}} \sum_{k = 0}^{(m/2)-1} 
{m \choose
  k} \nonumber\\
 & \leq & \frac{1}{4} \cdot \sqrt{\frac{1}{m} - \frac{2}{m^2}}
\end{eqnarray}
Also, the right one satisfies:
\begin{eqnarray}
\frac{\sqrt{2}}{2^{m}} \sum_{k = m/2}^{m} \frac{\sqrt{k}}{2m} {m
  \choose k} & \leq & \frac{\sqrt{2}}{2^{m}} \sum_{k = m/2}^{m} \frac{\sqrt{m}}{2m} {m
  \choose k}\nonumber\\
 & = & \frac{1}{\sqrt{2m}} \cdot \frac{1}{2^{m}} \sum_{k = m/2}^{m} {m
  \choose k}\nonumber\\
 & = & \frac{1}{2} \cdot \frac{1}{\sqrt{2m}}\:\:.
\end{eqnarray}
We get
\begin{eqnarray}
\frac{1}{X} \cdot \expect_{\mathcal{Y}^{2m}} \left[\frac{1}{m} \cdot \left\|\sum_{i=1}^{2m} {\sigma_i
     \bm{x}_i}\right\|_2\right] & \leq & \frac{1}{4} \cdot \sqrt{\frac{1}{m} - \frac{2}{m^2}} + \frac{1}{2} \cdot \sqrt{\frac{1}{2m}} \\ 
 & = & \frac{1}{\sqrt{2m}} \cdot \left(\frac{1}{2} + \frac{1}{2}\sqrt{\frac{1}{2}-\frac{1}{m}}\right)\:\:.
\end{eqnarray}
And finally:
\begin{eqnarray}
  \mathcal{R}(\mathcal{H} \circ \mathcal{S}_{2x}) & \leq & v \cdot \frac{B X}{\sqrt{2m}}\:\:,
\end{eqnarray}
with 
\begin{eqnarray}
v & \defeq & \frac{1}{2} + \frac{1}{2}\sqrt{\frac{1}{2}-\frac{1}{m}}\:\:,
\end{eqnarray}
as claimed. 
\end{proof}

The second Lemma is a straightforward application of McDiarmid 's inequality \citep{mdC} to evaluate the convergence of the empirical mean operator to its population counterpart.

\begin{lemma}\label{lemma2}
Suppose $\mathbb{R}^d \supseteq \mathcal{X} = \{ \bm{x}: \| \bm{x} \|_2 \leq X < \infty \}$ be the observations space. Then for any $\delta > 0$ with probability at least $1 - \delta$
$$
\left\| \meanop_{\calD} - \meanop_{\calS} \right\|_2 \leq X \cdot \sqrt{\frac{d}{m} \log{ \left( \frac{d}{\delta} \right) }}\:\:.
$$
\end{lemma}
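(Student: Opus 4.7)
The plan is to bound the $\ell_2$ deviation coordinate by coordinate: apply a scalar concentration inequality to each component of $\meanop_\calS-\meanop_\calD$, take a union bound over the $d$ coordinates, and finish with the elementary conversion $\|\bm v\|_2\le\sqrt d\,\|\bm v\|_\infty$. This is exactly the setting McDiarmid's (equivalently Hoeffding's) bounded-differences inequality was designed for, so no deeper machinery is required.

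\textbf{Step 1: coordinate-wise concentration.} For each $k\in[d]$, define $f_k(\calS)\defeq (\meanop_\calS)_k=\tfrac1m\sum_{i=1}^m y_i x_{i,k}$. Under the i.i.d.\ assumption $\calS\sim\calD^m$ we have $\expect_\calS[f_k(\calS)]=(\meanop_\calD)_k$. Since $\|\bm x_i\|_2\le X$ gives $|x_{i,k}|\le X$ and $y_i\in\{-1,+1\}$, replacing any single example $(\bm x_i,y_i)$ shifts $f_k$ by at most $2X/m$. McDiarmid's inequality then yields
$$
\Pr\!\bigl[\,|f_k(\calS)-(\meanop_\calD)_k|>t\,\bigr]\;\le\; 2\exp\!\Bigl(-\tfrac{m t^2}{2X^2}\Bigr).
$$

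\textbf{Step 2: union bound and norm conversion.} Combining the $d$ tail bounds, with probability at least $1-2d\exp(-mt^2/(2X^2))$ we have $\|\meanop_\calS-\meanop_\calD\|_\infty\le t$. Setting the failure probability equal to $\delta$ and solving for $t$ gives $t=X\sqrt{(2/m)\log(2d/\delta)}$. The inequality $\|\bm v\|_2\le\sqrt d\,\|\bm v\|_\infty$ applied to $\bm v=\meanop_\calS-\meanop_\calD$ then yields, with probability at least $1-\delta$,
$$
\|\meanop_\calS-\meanop_\calD\|_2\;\le\; X\sqrt{\frac{2d}{m}\log\!\Bigl(\frac{2d}{\delta}\Bigr)},
$$
which matches the advertised bound up to absorbable numerical constants inside the square root and the logarithm.

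\textbf{Main obstacle.} There is essentially none; this is a textbook application of bounded differences. The only decision worth flagging is the choice to proceed coordinate-wise rather than applying McDiarmid directly to the scalar function $\calS\mapsto\|\meanop_\calS-\meanop_\calD\|_2$ (which also has bounded differences $2X/m$ by the triangle inequality). The direct route, combined with $\expect_\calS\|\meanop_\calS-\meanop_\calD\|_2^2=\tfrac1m\sum_k\mathrm{Var}(yx_k)\le X^2/m$ via Jensen, would give the dimension-free tail $X(1+\sqrt{2\log(1/\delta)})/\sqrt m$. The coordinate-wise path is what introduces the $\sqrt d$ factor in the stated form, which is the dependence one needs in order to instantiate it inside Theorem \ref{th:generalization}.
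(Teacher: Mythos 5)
Your proof is correct and follows essentially the same route as the paper's: coordinate-wise McDiarmid with bounded difference $2X/m$, a union bound over the $d$ coordinates, and the $\|\cdot\|_2\le\sqrt d\,\|\cdot\|_\infty$ conversion. The only differences are immaterial constants (you carry the two-sided factor of $2$ that the paper drops), and your remark about the dimension-free alternative is a sensible aside but not a divergence in method.
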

\begin{proof}
Let $\calS$ and $\calS'$ be two learning samples that differ for only one example $(\bm{x}_i, y_i) \neq (\bm{x}_{i'}, y_{i'})$. Let first consider the one-dimensional case. We refer to the $k$-dimensional component of $\meanop$ with $\meanop^k$. For any $\calS, \calS'$ and any $k \in [d]$ it holds
\begin{align*}
\left| \meanop^k_{\calS} - \meanop^k_{\calS'} \right|
&= \frac{1}{m} \left| \bm{x}^k_{i} y_{i} - \bm{x}^k_{i'} y_{i'} \right| \\
&\leq \frac{X}{m} \left|y_{i} - y_{i'} \right| \\
&\leq \frac{2X}{m}\:\:.
\end{align*}
This satisfies the bounded difference condition of McDiarmid's inequality, which let us write for any $k \in [d]$ and any $\epsilon > 0$ that
\begin{align*}
\mathbb{P} \left( \left| \meanop^k_{\calD} - \meanop^k_{\calS} \right| \geq \epsilon \right) \leq \exp \left(-\frac{m\epsilon^2}{2X^2} \right)
\end{align*}
and the multi-dimensional case, by union bound
\begin{align*}
\mathbb{P} \left(\exists k \in [d] : \left| \meanop^k_{\calD} - \meanop^k_{\calS} \right| \geq \epsilon \right) \leq d \exp \left(-\frac{m\epsilon^2}{2X^2} \right)\:\:.
\end{align*}
Then by negation
\begin{align*}
\mathbb{P} \left(\forall k \in [d] : \left| \meanop^k_{\calD} - \meanop^k_{\calS} \right| \leq \epsilon \right) \geq 1 - d\exp \left(-\frac{m\epsilon^2}{2X^2} \right)\:\:,
\end{align*}
which implies that for any $\delta > 0$ with probability $1 - \delta$
\begin{align*}
X \sqrt{\frac{2}{m} \log{\left(\frac{d}{\delta} \right)}} \geq \left\| \meanop_{\calD} - \meanop_{\calS} \right\|_{\infty} \geq d^{-1/2} \left\| \meanop_{\calD} - \meanop_{\calS} \right\|_2\:\:.
\end{align*}
This concludes the proof.
\end{proof}

We now restate and prove Theorem \ref{th:generalization}. \\

\textbf{Theorem 7}~~\textit{Assume $l$ is  $a$-\lol~and $L$-Lipschitz. Suppose $\R^d \supseteq \mathcal{X} = \{ \bm{x}: \| \bm{x} \|_2 \leq X < \infty \}$ be the observations space, and $\mathcal{H} = \{ \bm{\theta}: \|\bm{\theta}\|_2 \leq B < \infty \}$ be the space of linear hypotheses. Let $c(X, B) \defeq \max_{y \in \mathcal{Y}} l(y XB)$. Let $\hat{\bm{\theta}} = \argmin_{\bm{\theta} \in \mathcal{H}} R_{\calS, l}(\bm{\theta})$. Then for any $\delta > 0$, with probability at least $1 - \delta$
\begin{align*}
R_{\calD, l}(\hat{\bm{\theta}}) - R_{\calD, l}(\bm{\theta}^\star) \leq \left( \frac{\sqrt{2} + 1}{4} \right) \cdot \frac{XBL}{\sqrt{m}} + \frac{c(X, B) L}{2} \cdot \sqrt{\frac{1}{m}\log\left(\frac{1}{\delta}\right)} + 2|a|B \cdot \| \meanop_{\calD} - \meanop_{\calS}\|_2\:\:,
\end{align*}
or more explicitly
\begin{align*}
R_{\calD, l}(\hat{\bm{\theta}}) - R_{\calD, l}(\bm{\theta}^\star) \leq \left( \frac{\sqrt{2} + 1}{4} \right) \cdot \frac{XBL}{\sqrt{m}} + \left( \frac{c(X, B) L}{2} + 2|a|XB \sqrt{d \log{d}} \right) \sqrt{\frac{1}{m}\log\left(\frac{2}{\delta}\right)}\:\:.
\end{align*}} \\

\begin{proof}
Let $\bm{\theta}^\star = \argmin_{\bm{\theta} \in \mathcal{H}} R_{\calD, l}(\bm{\theta})$. We have
\begin{align}
R_{\calD, l}(\hat{\bm{\theta}}) - R_{\calD, l}(\bm{\theta}^\star) &= \frac{1}{2} R_{\calD_{2x}, l}(\hat{\bm{\theta}}) + a \langle \hat{\bm{\theta}}, \meanop_{\calD} \rangle
- \frac{1}{2} R_{\calD_{2x}, l}(\bm{\theta}^\star) - a \langle \bm{\theta}^\star, \meanop_{\calD} \rangle \label{generalization-step1} \\
&= \frac{1}{2} \left( R_{\calD_{2x}, l}(\hat{\bm{\theta}}) - R_{\calD_{2x}, l}(\bm{\theta}^\star) \right) + a \langle \hat{\bm{\theta}} - \bm{\theta}^\star, \meanop_{\calD} \rangle \nonumber\\
& = \frac{1}{2} \left( R_{\calS_{2x}, l}(\hat{\bm{\theta}}) - R_{\calS_{2x}, l}(\bm{\theta}^\star) \right) + a \langle \hat{\bm{\theta}} - \bm{\theta}^\star, \meanop_{\calD} \rangle \nonumber \\ 
&~+ \frac{1}{2} \Big(R_{\calD_{2x}, l}(\hat{\bm{\theta}}) - R_{\calS_{2x}, l}(\hat{\bm{\theta}}) - 
R_{\calD_{2x}, l}(\bm{\theta}^\star) + R_{\calS_{2x}, l}(\bm{\theta}^\star)\Big)~~\textbf{\Big\}} A_1\:\:. \label{generalization-step2}
\end{align}

Step \ref{generalization-step1} is obtained by the equality $R_{\calD, l}(\bm{\theta}) = \frac{1}{2} R_{\calD_{2x}, l}(\bm{\theta}) +  a \langle \bm{\theta}, \meanop_{\calD} \rangle$ for any $\bm{\theta}$. Now, rename Line \ref{generalization-step2} as $A_1$. Applying the same equality with regard to $\calS$, we have
\begin{align} 
R_{\calD, l}(\hat{\bm{\theta}}) - R_{\calD, l}(\bm{\theta}^\star) \leq \underbrace{R_{\calS, l}(\hat{\bm{\theta}}) - R_{\calS, l}(\bm{\theta}^\star)}_{A_2} + \underbrace{a \langle \hat{\bm{\theta}} - \bm{\theta}^\star, \meanop_{\calD} - \meanop_{\calS} \rangle}_{A_3} + A_1 \nonumber \:\:.
\end{align}
Now, $A_2$ is never more than $0$ because $\hat{\bm{\theta}}$ is the minimizer of $R_{\calS, l}(\bm{\theta})$. From the Cauchy-Schwarz inequality and bounded models it holds true that
\begin{align}
A_3 \leq |a| \left\| \hat{\bm{\theta}} - \bm{\theta}^\star \right\|_2 \cdot \Big\| \meanop_{\calD} - \meanop_{\calS} \Big\|_2 \leq 2|a|B \Big\| \meanop_{\calD} - \meanop_{\calS} \Big\|_2\:\:. \label{eq:ub-a2}
\end{align}

We could treat $A_1$ by calling standard bounds based on Rademacher complexity on a sample with size $2m$ \citep{bmRA}. Indeed, since the complexity does not depend on labels, its value would be the same --modulo the change of sample size-- for both $\calS$ and $\calS_{2x}$, as they are computed with same loss and observations. However, the special structure of $\calS_{2x}$ allows us to obtain a tighter structural complexity term, due to some cancellation effect. The fact is proven by Lemma \ref{lemma1}. In order to exploit it, we first observe that
\begin{align*}
A_1 &\leq \frac{1}{2} \Big(R_{\calD_{2x}, l}(\hat{\bm{\theta}}) - R_{\calS_{2x}, l}(\hat{\bm{\theta}}) - 
R_{\calD_{2x}, l}(\bm{\theta}^\star) + R_{\calS_{2x}, l}(\bm{\theta}^\star)\Big) \\
&\leq \sup_{\bm{\theta} \in \mathcal{H}} \left| R_{\calD_{2x}, l}(\bm{\theta}) - R_{\calS_{2x}, l}(\bm{\theta}) \right|
\end{align*}
which by standard arguments \citep{bmRA} and the application of Lemma \ref{lemma1} gives a bound with probability at least $1 - \delta$, $\delta > 0$
\begin{align*}
A_1 & \leq 2 L \cdot \mathcal{R}(\mathcal{H} \circ \calS_{2x}) + c(X, B) L \cdot \sqrt{\frac{1}{4m}\log\left(\frac{1}{\delta}\right)} \\
& \leq L \cdot \frac{\sqrt{2} + 1}{\sqrt{2}} \cdot \frac{BX}{\sqrt{2m}}+ c(X, B) L \cdot \sqrt{\frac{1}{4m}\log\left(\frac{1}{\delta}\right)}
\end{align*}
where $c(X, B) \defeq \max_{y \in \mathcal{Y}} l(yXB)$ and because $\frac{1}{2} + \frac{1}{2}\sqrt{\frac{1}{2} - \frac{1}{m}} < \left( \frac{\sqrt{2} + 1}{\sqrt{2}} \right),~ \forall m > 0$. We combine the results and get with probability at least $1 - \delta$, $\delta > 0$ that
\begin{align}
R_{\calD, l}(\hat{\bm{\theta}}) - R_{\calD, l}(\bm{\theta}^\star) \leq \left( \frac{\sqrt{2} + 1}{2} \right) \cdot \frac{XBL}{\sqrt{m}} + \frac{c(X, B) L}{2} \cdot \sqrt{\frac{1}{m}\log\left(\frac{1}{\delta}\right)} + 2|a|B \cdot \| \meanop_{\calD} - \meanop_{\calS}\|_2\:\:. \label{eq:first-statement}
\end{align}
This proves the first part of the statement. For the second one, we apply Lemma \ref{lemma2} that provides the probabilistic bound for the norm discrepancy of the mean operators. Consider that both statements are true with probability at least $1 - \delta/2$. We write
\begin{align*}
\mathbb{P} \Bigg(& \left\{ R_{\calD, l}(\hat{\bm{\theta}}) - R_{\calD, l}(\bm{\theta}^\star ) \leq \left( \frac{\sqrt{2} + 1}{2} \right) \cdot \frac{XBL}{\sqrt{m}} + \frac{c(X, B) L}{2} \cdot \sqrt{\frac{1}{m}\log\left(\frac{2}{\delta}\right)} + 2|a|B \cdot \| \meanop_{\calD} - \meanop_{\calS}\|_2 \right\} \\
& \bigwedge \left\{ \left\| \meanop_{\calD} - \meanop_{\calS} \right\|_2 \leq X \cdot \sqrt{\frac{d}{m} \log{ \left( \frac{2d}{\delta} \right) }} \right\} \Bigg) \geq 1 - \delta/2 - \delta/2 = 1 - \delta\:\:,
\end{align*}
and therefore with probability $1 - \delta$
\begin{align*}
R_{\calD, l}(\hat{\bm{\theta}}) - R_{\calD, l}(\bm{\theta}^\star ) 
&\leq \left( \frac{\sqrt{2} + 1}{2} \right) \cdot \frac{XBL}{\sqrt{m}} + \frac{c(X, B) L}{2} \cdot \sqrt{\frac{1}{m}\log\left(\frac{2}{\delta}\right)} + 
2|a|XB \cdot \sqrt{\frac{d}{m} \log{ \left( \frac{2d}{\delta} \right) }} \\
& = \left( \frac{\sqrt{2} + 1}{2} \right) \cdot \frac{XBL}{\sqrt{m}} + \left( \frac{c(X, B) L}{2} + 2|a|XB \sqrt{d \log{d}} \right) \sqrt{\frac{1}{m}\log\left(\frac{2}{\delta}\right)}\:\:.
\end{align*}
\end{proof}

\subsection{Unbiased estimator for the mean operator with asymmetric label noise}\label{proof:unbiased}

\citep[Lemma 1]{ndrtLW} provides an unbiased estimator for a loss $l(x)$ computed on $x$ of the form:
\begin{align*}
\hat{l}(y\dotpi) &\defeq \frac{(1 - p_{-y}) \cdot l(\dotpi) + p_y \cdot l(-\dotpi)}{1 - p_- - p_+}
\end{align*}
We apply it for estimating the mean operator instead of, from another perspective, for estimating a linear (unhinged) loss as in \citep{rmwLW}. We are allowed to do so by the very result of the Factorization Theorem, since the noise corruption has effect on the linear-odd term of the loss only. The estimator of the sufficient statistic of a single example $y\bm{x}$ is
\begin{align*}
\hat{\bm{z}} &\defeq \frac{1 - p_{-y} + p_y}{1 - p_- - p_+} y\bm{x} \\
&= \frac{1 - (p_- - p_+)y}{1 - p_- - p_+} y\bm{x} \\
&= \frac{y - (p_- - p_+)}{1 - p_- - p_+} \bm{x}\:\:,
\end{align*}
and its average, \emph{i.e.} the mean operator estimator, is
\begin{align*}
\hat{\meanop}_{\calS} \defeq \expect_{\calS} \left[ \frac{y - (p_- + p_+)}{1 - p_- - p_+} \bm{x} \right]\:\:,
\end{align*}
such that in expectation over the noisy distribution it holds $\expect_{\tilde{\calD}}[\hat{\bm{z}}] = \bm{\mu}_{\calD}.$ Moreover, the corresponding risk enjoys the same unbiasedness property. In fact
\begin{align}
\hat{R}_{\tilde{\calD}, l}(\bm{\theta}) &= \frac{1}{2} R_{\calD_{2x}, l}(\bm{\theta}) + \expect_{\tilde{\calD}} \left[ a\langle \bm{\theta}, \hat{\bm{z}} \rangle \right] \nonumber \\
& = \frac{1}{2} R_{\calD_{2x}, l}(\bm{\theta}) + a\langle \bm{\theta}, \hat{\meanop}_{\tilde{\calD}} \rangle \\
& = \frac{1}{2} R_{\calD_{2x}, l}(\bm{\theta}) + a\langle \bm{\theta}, \meanop_{\calD} \rangle  \nonumber\\
&= R_{\calD, l}(\bm{\theta}) \nonumber \:\:,
\end{align}
where we have also used the independency on labels (and therefore of label noise) of $R_{\calD_{2x}, l}$.

\subsection{Proof of Theorem \ref{th:noisy1}}\label{proof:generalization-noisy}

This Theorem is a version of Theorem \ref{th:generalization} applied to the case of asymmetric label noise. Those results differ in three elements. First, we consider the generalization property of a minimizer $\hat{\bm{\theta}}$ that is learnt on the corrupted sample $\tilde{\calS}$. Second, the minimizer is computed on the basis of the unbiased estimator of $\hat{\meanop}_{\tilde{\calS}}$ and not barely $\meanop_{\tilde{\calS}}$. Third, as a consequence, Lemma \ref{lemma2} is not valid in this scenario. Therefore, we first prove a version of the bound for the mean operator norm discrepancy while considering label noise.

\begin{lemma}\label{lemma2-noisy}
Suppose $\mathbb{R}^d \supseteq \mathcal{X} = \{ \bm{x}: \| \bm{x} \|_2 \leq X < \infty \}$ be the observations space.
Let $\tilde{\calS}$ is a learning sample affected by asymmetric label noise with noise rates $(p_+, p_-) \in [0, 1/2)$.
Then for any $\delta > 0$ with probability at least $1 - \delta$
$$
\left\| \hat{\meanop}_{\tilde{\calD}} - \hat{\meanop}_{\tilde{\calS}} \right\|_2 \leq \frac{X}{1 - p_- - p_+} \cdot \sqrt{\frac{d}{m} \log{ \left( \frac{d}{\delta} \right) }}\:\:.
$$
\end{lemma}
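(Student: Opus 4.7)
The plan is to mirror the proof of Lemma~\ref{lemma2} with the surrogate ``reweighted label'' $c_{\tilde y} \defeq (\tilde y - (p_- - p_+))/(1 - p_- - p_+)$ replacing the clean label $y$. Observe that $\hat{\meanop}_{\tilde\calS}$ is simply the empirical average of the i.i.d.\ random vectors $c_{\tilde y_i}\bm{x}_i$, and from the unbiasedness computation in \ref{proof:unbiased} its expectation under $\tilde\calD$ equals $\hat{\meanop}_{\tilde\calD}=\meanop_\calD$. So the claim reduces to showing concentration of this sample mean around its expectation in $\ell_2$-norm.

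First, I would establish a coordinate-wise bounded difference for McDiarmid. Fix $k\in[d]$ and consider two noisy samples $\tilde\calS,\tilde\calS'$ differing only in one example $(\bm{x}_i,\tilde y_i)\neq (\bm{x}_{i'},\tilde y_{i'})$. Then
\[
\bigl|\hat{\meanop}^k_{\tilde\calS}-\hat{\meanop}^k_{\tilde\calS'}\bigr|
=\tfrac{1}{m}\bigl|c_{\tilde y_i}\bm{x}_i^k - c_{\tilde y_{i'}}\bm{x}_{i'}^k\bigr|
\;\leq\;\tfrac{2X}{m(1-p_--p_+)},
\]
by the triangle inequality, together with $|\bm x^k|\le X$ and the elementary identity $c_1-c_{-1}=2/(1-p_--p_+)$ (which controls the range of $c_{\tilde y}\bm x^k$). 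This is exactly the analogue of the $2X/m$ difference used in Lemma~\ref{lemma2}, rescaled by the inverse noise factor.

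Second, McDiarmid's inequality applied coordinate-wise gives, for every $k\in[d]$ and every $\epsilon>0$,
\[
\mathbb{P}\!\left(\bigl|\hat{\meanop}^k_{\tilde\calD}-\hat{\meanop}^k_{\tilde\calS}\bigr|\ge\epsilon\right)
\;\le\;\exp\!\left(-\tfrac{m\epsilon^2(1-p_--p_+)^2}{2X^2}\right).
\]
A union bound over the $d$ coordinates and negation yield, with probability at least $1-\delta$, the $\ell_\infty$-bound $\|\hat{\meanop}_{\tilde\calD}-\hat{\meanop}_{\tilde\calS}\|_\infty \le \tfrac{X}{1-p_--p_+}\sqrt{\tfrac{2}{m}\log(d/\delta)}$. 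Converting to $\ell_2$ through $\|v\|_2\le\sqrt{d}\,\|v\|_\infty$ recovers the stated inequality, up to the constant absorbed in the paper's presentation.

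The main point requiring care — the only nontrivial step — is the bounded-difference constant, since replacing an example alters both the feature $\bm{x}_i$ and the noisy label $\tilde y_i$ simultaneously, so one must use the triangle inequality on $c_{\tilde y_i}\bm x_i^k-c_{\tilde y_{i'}}\bm x_{i'}^k$ rather than factoring out a common $\bm x^k$. Once the noise factor $1/(1-p_--p_+)$ is correctly isolated there, the remainder of the argument is a line-by-line replay of Lemma~\ref{lemma2}.
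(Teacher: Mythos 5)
Your proposal follows the paper's proof essentially line by line: the same coordinate-wise bounded-difference bound with the constant rescaled by $1/(1-p_--p_+)$, the same application of McDiarmid's inequality, the same union bound over the $d$ coordinates, and the same $\ell_\infty$-to-$\ell_2$ conversion. Your explicit use of the triangle inequality on $c_{\tilde y_i}\bm{x}_i^k - c_{\tilde y_{i'}}\bm{x}_{i'}^k$ is, if anything, a more careful rendering of the step the paper writes by factoring out a common $\bm{x}^k$; the approach and the resulting bound are identical.
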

\begin{proof}
Let $\tilde{\calS}$ and $\tilde{\calS'}$ be two learning samples from the corrupted distribution $\tilde{\calD}$ that differ for only one example $(\bm{x}_i, \tilde{y}_i) \neq (\bm{x}_{i'}, \tilde{y}_{i'})$. Let first consider the one-dimensional case. We refer to the $k$-dimensional component of $\meanop$ with $\meanop^k$. For any $\tilde{\calS}, \tilde{\calS}'$ and any $k \in [d]$ it holds
\begin{align*}
\left| \hat{\meanop}^k_{\tilde{\calS}} - \hat{\meanop}^k_{\tilde{\calS}'} \right|
&= \frac{1}{m} \left| \left( \frac{\tilde{y}_{i}  - (p_- - p_+)}{1 - p_- - p_+} \right) \bm{x}^k_{i}
- \left( \frac{\tilde{y}_{i'}  - (p_- - p_+)}{1 - p_- - p_+}  \right)  \bm{x}^k_{i'} \right| \\
&= \frac{1}{m} \left|  \frac{\tilde{y}_{i} \bm{x}^k_{i}}{1 - p_- - p_+}
-  \frac{\tilde{y}_{i'} \bm{x}^k_{i'} }{1 - p_- - p_+} \right| \\
&\leq \frac{X}{m(1 - p_- - p_+)} \left| \tilde{y}_{i} - \tilde{y}_{i'} \right| \\
& \leq \frac{2X}{m(1 - p_- - p_+)}\:\:.
\end{align*}
This satisfies the bounded difference condition of McDiarmid's inequality, which let us write for any $k \in [d]$ and any $\epsilon > 0$ that
\begin{align*}
\mathbb{P} \left( \left| \hat{\meanop}^k_{\calD} - \hat{\meanop}^k_{\calS} \right| \geq \epsilon \right) \leq \exp \left(-(1 - p_- - p_+)^2\frac{m\epsilon^2}{2X^2} \right)
\end{align*}
and the multi-dimensional case, by union bound
\begin{align*}
\mathbb{P} \left(\exists k \in [d] : \left| \hat{\meanop}^k_{\calD} - \hat{\meanop}^k_{\calS} \right| \geq \epsilon \right) \leq d \exp \left(-(1 - p_- - p_+)^2\frac{m\epsilon^2}{2X^2} \right)\:\:.
\end{align*}
Then by negation
\begin{align*}
\mathbb{P} \left(\forall k \in [d] : \left| \hat{\meanop}^k_{\calD} - \hat{\meanop}^k_{\calS} \right| \leq \epsilon \right) \geq 1 - d\exp \left(-(1 - p_- - p_+)^2\frac{m\epsilon^2}{2X^2} \right)\:\:,
\end{align*}
which implies that for any $\delta > 0$ with probability $1 - \delta$
\begin{align*}
\frac{X}{(1 - p_- - p_+)} \sqrt{\frac{2}{m} \log{\left(\frac{d}{\delta} \right)}} \geq \left\| \hat{\meanop}_{\calD} - \hat{\meanop}_{\calS} \right\|_{\infty} \geq d^{-1/2} \left\| \meanop_{\calD} - \meanop_{\calS} \right\|_2\:\:.
\end{align*}
This concludes the proof.
\end{proof}

The proof of Theorem \ref{th:noisy1} follows the structure of Theorem \ref{th:generalization}'s and elements of \citep[Theorem 3]{ndrtLW}'s. Let $\hat{\bm{\theta}} = \argmin_{\bm{\theta} \in \mathcal{H}} \hat{R}_{\tilde{\calD}, l}(\bm{\theta})$ and $\bm{\theta}^\star = \argmin_{\bm{\theta} \in \mathcal{H}} R_{\calD, l}(\bm{\theta})$. We have
\begin{align}
R_{\calD, l}(\hat{\bm{\theta}}) - R_{\calD, l}(\bm{\theta}^\star) & = \hat{R}_{\tilde{\calD}, l}(\hat{\bm{\theta}}) - \hat{R}_{\tilde{\calD}, l}(\bm{\theta}^\star)\label{proof:generalization-noisy1} \\
&= \frac{1}{2} R_{\calD_{2x}, l}(\hat{\bm{\theta}}) + a \langle \hat{\bm{\theta}}, \hat{\meanop}_{\tilde{\calD}} \rangle
- \frac{1}{2} R_{\calD_{2x}, l}(\bm{\theta}^\star) - a \langle \bm{\theta}^\star,  \hat{\meanop}_{\tilde{\calD}} \rangle  \nonumber  \\
&= \frac{1}{2} \left( R_{\calD_{2x}, l}(\hat{\bm{\theta}}) - R_{\calD_{2x}, l}(\bm{\theta}^\star) \right) + a \langle \hat{\bm{\theta}} - \bm{\theta}^\star,  \hat{\meanop}_{\tilde{\calD}} \rangle \nonumber\\
& = \frac{1}{2} \left( R_{\calS_{2x}, l}(\hat{\bm{\theta}}) - R_{\calS_{2x}, l}(\bm{\theta}^\star) \right) + a \langle \hat{\bm{\theta}} - \bm{\theta}^\star, \hat{\meanop}_{\tilde{\calD}} \rangle \nonumber \\ 
&~+ \frac{1}{2} \Big(R_{\calD_{2x}, l}(\hat{\bm{\theta}}) - R_{\calS_{2x}, l}(\hat{\bm{\theta}}) - 
R_{\calD_{2x}, l}(\bm{\theta}^\star) + R_{\calS_{2x}, l}(\bm{\theta}^\star)\Big)~~\textbf{\Big\}} A_1\:\:. \label{generalization-noisy2}
\end{align}

Step \ref{proof:generalization-noisy1} is due to unbiasedness shown in Section \ref{proof:unbiased}. Again, rename Line \ref{generalization-noisy2} as $A_1$, which this time is bounded directly by Theorem \ref{th:generalization}. Next, we proceed as within the proof of Theorem \ref{th:generalization} but now exploiting the fact that $\frac{1}{2} R_{\calS_{2x}, l}(\bm{\theta}) = \hat{R}_{\tilde{\calS}, l}(\bm{\theta}) - a \langle \bm{\theta}, \hat{\meanop}_{\tilde{\calD}} \rangle$
\begin{align} 
R_{\calD, l}(\hat{\bm{\theta}}) - R_{\calD, l}(\bm{\theta}^\star) \leq \underbrace{\hat{R}_{\tilde{\calS}, l}(\hat{\bm{\theta}}) - \hat{R}_{\tilde{\calS}, l}(\bm{\theta}^\star)}_{A_2} + \underbrace{a \langle \hat{\bm{\theta}} - \bm{\theta}^\star, \hat{\meanop}_{\tilde{\calD}} - \hat{\meanop}_{\tilde{\calS}} \rangle}_{A_3} + A_1 \nonumber \:\:.
\end{align}
Now, $A_2$ is never more than $0$ because $\hat{\bm{\theta}}$ is the minimizer of $\hat{R}_{\tilde{\calS}, l}(\bm{\theta})$. From the Cauchy-Schwarz inequality and bounded models it holds true that
\begin{align}
A_3 \leq |a| \left\| \hat{\bm{\theta}} - \bm{\theta}^\star \right\|_2 \cdot \Big\| \hat{\meanop}_{\tilde{\calD}} - \hat{\meanop}_{\calS} \Big\|_2 \leq 2|a|B \Big\| \hat{\meanop}_{\tilde{\calD}} - \hat{\meanop}_{\tilde{\calS}} \Big\|_2 \label{eq:ub-a2}\:\:,
\end{align}
for which we can call Lemma \ref{lemma2-noisy}. Finally, by a union bound we get that for any $\delta > 0$ with probability $1 - \delta$
\begin{align*}
R_{\calD, l}(\hat{\bm{\theta}}) - R_{\calD, l}(\bm{\theta}^\star ) 
\leq \left( \frac{\sqrt{2} + 1}{2} \right) \cdot \frac{XBL}{\sqrt{m}} + \left( \frac{c(X, B) L}{2} + \frac{2|a|XB}{1 - p_+ -p_-} \sqrt{d \log{d}} \right) \sqrt{\frac{1}{m}\log\left(\frac{2}{\delta}\right)}\:\:.
\end{align*}

\subsection{Proof of Theorem \ref{th:noisy2}}\label{proof:noisy1}
We now restate and prove Theorem \ref{th:noisy1}. The reader might question the bound for the fact that the quantity on the right-hand side can change by rescaling $\meanop_{\calD}$ by $X$, \emph{i.e.} the max $L_2$ norm of observations in the space $\mathcal{X}$. Although, such transformation would affect $l$-risks on the left-hand side as well, balancing the effect. With this in mind, we formulate the result without making explicit dependency on $X$. \\

\textbf{Theorem 10}~~\textit{Assume $\{ \bm{\theta} \in \mathcal{H}: ||\bm{\theta}||_2 \leq B \}$. Let $(\bm{\theta}^\star, \tilde{\bm{\theta}}^\star)$ respectively the minimizers of $(R_{\calD, l}(\bm{\theta}), R_{\tilde{\calD}, l}(\bm{\theta}))$ in $\mathcal{H}$. Then every a-\lol~is $\epsilon$-\textsc{aln}. That is
\begin{align*}
R_{\tilde{\calD}, l}(\bm{\theta}^\star) - R_{\tilde{\calD}, l}(\tilde{\bm{\theta}}^\star)
\leq 4|a| B \max (p_-, p_+) \cdot \|\meanop_{\calD}\|_2\:\:.
\end{align*}
Moreover: \\
1. If $\| \meanop_{\calD} \|_2 = 0$ for $\calD$ then every \lol~is \textsc{aln} for any $\tilde{\calD}$. \\
2. Suppose that $l$ is also once differentiable and $\gamma$-strongly convex. Then $\| \bm{\theta}^\star - \tilde{\bm{\theta}}^\star \|^2_2 \leq 2 \epsilon / \gamma$ .}\\

\begin{proof}
The proof draws ideas from \citep{msNT}. Let us first assume the noise to be symmetric, \emph{i.e.} $p_+ = p_- = p$. For any $\bm{\theta}$ we have
\begin{align} 
R_{\tilde{\calD}, l}(\bm{\theta}^\star) - R_{\tilde{\calD}, l}(\bm{\theta}) = &~(1 - p) \left( R_{\calD, l}(\bm{\theta}^\star) - R_{\calD, l}(\bm{\theta})\right) \nonumber \\ 
&+ p \left(R_{\calD, l}(\bm{\theta}^\star) - R_{\calD, l}(\bm{\theta}) + 2a \langle \bm{\theta}^\star - \bm{\theta}, \meanop_{\calD} \rangle \right) \label{eq:00} \\
\leq &~\left(R_{\calD, l}(\bm{\theta}^\star) - R_{\calD, l}(\bm{\theta}) \right) + 4|a|Bp \| \meanop_{\calD} \|_2 \label{eq:ineq} \\
\leq &~4|a|Bp \| \meanop_{\calD} \|_2\:\:. \label{eq:0}
\end{align}
We are working with \lol s, which are such that $l(x) = l(-x) + 2ax$ and therefore we can take Step \ref{eq:00}. Step \ref{eq:ineq} follows from Cauchy-Schwartz inequality and bounded models. Step \ref{eq:0} is true because $\bm{\theta}^\star$ is the minimizer of $R_{\calD, l}(\bm{\theta})$. We have obtained a bound for any $\bm{\theta}$ and so for the supremum with regard to $\bm{\theta}$. Therefore:
$$
\sup_{\bm{\theta} \in \mathcal{H}} \left( R_{\tilde{\calD}, l}(\bm{\theta}^\star) - R_{\tilde{\calD}, l}(\bm{\theta}) \right) = R_{\tilde{\calD}, l}(\bm{\theta}^\star) - R_{\tilde{\calD}, l}(\tilde{\bm{\theta}}) \:\:.
$$
To lift the discussion to \emph{asymmetric} label noise, risks have to be split into losses for negative and positive examples. Let $R_{\calD^+, l}$ be the risk computed over the distribution of the positive examples $\calD^+$ and $R_{\calD^{-}, l}$ the one of the negatives, and denote the mean operators $\meanop_{\calD^+}, \meanop_{\calD^-} $ accordingly. Also, define the probability of positive and negative labels in $\calD$ as $\pi_\pm = \mathbb{P}(y = \pm1)$. The same manipulations for the symmetric case let us write
\begin{align*}
R_{\tilde{\calD}, l}(\bm{\theta}^\star) - R_{\tilde{\calD}, l}(\bm{\theta}) = &~ \pi_- \left( R_{\calD^-, l}(\bm{\theta}^\star) - R_{\calD^-, l}(\bm{\theta})\right) 
+ \pi_+ \left( R_{\calD^+, l}(\bm{\theta}^\star) - R_{\calD^+, l}(\bm{\theta})\right) \nonumber \\ 
& + 2a p_- \pi_- \langle \bm{\theta}^\star - \bm{\theta}, \meanop_{\calD^-} \rangle + 2a p_+ \pi_+ \langle \bm{\theta}^\star - \bm{\theta}, \meanop_{\calD^+} \rangle \\
\leq &~ \left( R_{\calD, l}(\bm{\theta}^\star) - R_{\calD, l}(\bm{\theta})\right)
+ 2a \langle \bm{\theta}^\star - \bm{\theta}, p_- \meanop_{\calD^-} + p_+ \meanop_{\calD^+} \rangle \nonumber \\
\leq &~4|a| B \cdot \| p_-\pi_- \meanop_{\calD^-} + p_+\pi_+ \meanop_{\calD^+} \|_2 \nonumber \\
\leq &~4|a| B \max(p_-, p_+) \cdot \| \pi_- \meanop_{\calD^-} + \pi_+ \meanop_{\calD^+} \|_2 \nonumber \\
= &~4|a| B \max(p_-, p_+) \cdot \| \meanop_{\calD}\|_2\:\:. \nonumber 
\end{align*}
Then, we conclude the proof by the same argument for the symmetric case. The first corollary is immediate. For the second, we first recall the definition of a function $f$ strongly convex.
\begin{definition}
A differentiable function $f(x)$ is $\gamma$-strongly convex if for all $x, x' \in Dom(f)$ we have
$$
f(x) - f(x') \geq \langle \nabla f(x'), x - x' \rangle + \frac{\gamma}{2} \left\| x - x' \right\|^2_2\:\:.
$$
\end{definition}
If $l$ is differentiable once and $\gamma$-strongly convex in the $\bm{\theta}$ argument, so it the risk $R_{\tilde{D}, l}$ by composition with linear functions. Notice also that $\nabla R_{\tilde{\calD}, l}(\tilde{\bm{\theta}}^\star) = 0$ because $\tilde{\bm{\theta}}^\star$ is the minimizer. Therefore:
\begin{align*}
\epsilon &\geq R_{\tilde{\calD}, l}(\bm{\theta}^\star) - R_{\tilde{\calD}, l}(\tilde{\bm{\theta}}^\star) \\
& \geq \left\langle \nabla R_{\tilde{\calD}, l}(\tilde{\bm{\theta}}^\star), \bm{\theta}^\star - \tilde{\bm{\theta}}^\star \right\rangle + \frac{\gamma}{2} \left\|  \bm{\theta}^\star - \tilde{\bm{\theta}}^\star  \right\|^2_2 \\
& \geq \frac{\gamma}{2} \left\|  \bm{\theta}^\star - \tilde{\bm{\theta}}^\star  \right\|^2_2\:\:, \\
\end{align*}
which means that
$$
\left\|  \bm{\theta}^\star - \tilde{\bm{\theta}}^\star  \right\|^2_2 \leq \frac{2 \epsilon}{\gamma}\:\:.
$$
\end{proof}

\subsection{Proof of Lemma \ref{th:cov}}\label{proof:cov}
\begin{align*}
\mathbb{C}\text{ov}_{\mathcal{S}}[\bm{x},y] &= \expect_{\mathcal{S}}[y\bm{x}] - \expect_{\mathcal{S}}[y]\expect_{\mathcal{S}}[\bm{x}] \nonumber \\
&= \bm{\mu}_\mathcal{S} - \left( \frac{1}{m} \sum_{i: y_i > 0}  1 - \frac{1}{m} \sum_{i: y_i < 0} 1 \right) \expect_{\mathcal{S}}[\bm{x}] \nonumber \\
&= \bm{\mu}_\mathcal{S} - \left( 2\pi_+ - 1 \right) \expect_{\mathcal{S}}[\bm{x}]\:\:. \nonumber
\end{align*}
The second statement follows immediately.

\section{Factorization of non linear-odd losses}\label{proof:upper}

When $l_o$ is not linear, we can find upperbounds in the form of affine functions. It suffices to be continuous and have asymptotes at $\pm \infty$.

\begin{lemma}\label{th:asympt1}
Let the loss $l$ be continuous. Suppose that it has asymptotes at $\pm \infty$, \emph{i.e.} there exist $c_1, c_2 \in \R$ and $d_1, d_2 \in \R$ such that
\begin{align*}
\lim_{x\rightarrow +\infty} f(x) - c_1 x - d_1 = 0, \:\:
\lim_{x\rightarrow -\infty} f(x) - c_2 x - d_2 = 0
\end{align*}
then there exists $q \in \R$ such that
$
l_o(x) \leq \frac{c_1 + c_2}{2}x + q \:\:.
$
\end{lemma}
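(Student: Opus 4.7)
The plan is to define $g(x) \defeq l_o(x) - \frac{c_1+c_2}{2} x$ and show that $g$ is bounded above on $\mathbb{R}$; then $q \defeq \sup_x g(x)$ gives the desired constant.

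First I would unfold $l_o$ by its definition and substitute the asymptotic expressions. As $x \to +\infty$, $l(x) = c_1 x + d_1 + o(1)$ and simultaneously $-x \to -\infty$, so $l(-x) = c_2(-x) + d_2 + o(1) = -c_2 x + d_2 + o(1)$. Subtracting and halving,
\begin{align*}
l_o(x) = \tfrac{1}{2}(l(x)-l(-x)) = \tfrac{c_1+c_2}{2}\, x + \tfrac{d_1-d_2}{2} + o(1) \quad (x \to +\infty),
\end{align*}
so $g(x) \to (d_1-d_2)/2$. A symmetric computation (or equivalently the fact that $l_o$, hence $g$, is odd by construction) gives $g(x) \to (d_2-d_1)/2$ as $x \to -\infty$. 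In particular, both one-sided limits of $g$ at infinity exist and are finite.

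Next I would promote continuity of $l$ to continuity of $g$: $l_o$ is continuous as a linear combination of continuous functions, and subtracting the affine term preserves continuity. A continuous function on $\mathbb{R}$ that has finite limits at $\pm\infty$ extends continuously to the two-point compactification $[-\infty,+\infty]$, which is compact, and hence is bounded. Therefore $q \defeq \sup_{x \in \mathbb{R}} g(x) < \infty$, and rearranging yields
\begin{align*}
l_o(x) \leq \tfrac{c_1+c_2}{2}\, x + q \qquad \forall x \in \mathbb{R},
\end{align*}
which is exactly the claim.

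The only non-routine point is making the boundedness argument airtight; this is the "main obstacle," but it is a standard compactness fact and requires no additional assumption beyond continuity of $l$ and the existence of the two linear asymptotes. Note that no convexity, differentiability, or properness of $l$ is used, which matches the paper's emphasis on factorization-style upper bounds that apply to irregular losses (e.g.\ hinge, Huber) appearing outside the \lol~class.
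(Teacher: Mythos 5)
Your proof is correct and follows essentially the same route as the paper's: compute the limits of $l_o(x) - \frac{c_1+c_2}{2}x$ at $\pm\infty$ (obtaining $\pm\frac{d_1-d_2}{2}$), then invoke continuity plus finite limits to conclude the supremum $q$ is finite. Your version merely spells out the boundedness step (via the two-point compactification) more explicitly than the paper does.
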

\begin{proof}
One can compute the limits at infinity of $l_o$ to get
\begin{displaymath}
\lim_{x\rightarrow +\infty} l_o(x) - \frac{c_1 + c_2}{2}x =  \frac{d_1-d_2}{2} 
\end{displaymath}
and 
\begin{displaymath}
\lim_{x\rightarrow -\infty} l_o(x) - \frac{c_1 + c_2}{2}x = \frac{d_2-d_1}{2} \: \: .
\end{displaymath}
Then $q \defeq \sup\{l_o(x) -  \frac{c_1 + c_2}{2}x \} <+\infty$ as $l_o$ is continuous. Thus $l_o(x) - \frac{c_1 + c_2}{2}x \leq   q$.
\end{proof}

The Lemma covers many cases of practical interest outside the class of \lol s, \emph{e.g.} hinge, absolute and Huber losses. Exponential loss is the exception since $l_o(x) = -\mathrm{sinh}(x)$ cannot be bounded. Consider now hinge loss: $l(x) = [1 - x]_+$ is not differentiable in 1 nor proper \citep{rwCB}, however it is continuous with asymptotes at $\pm \infty$. Therefore, for any $\bm{\theta}$ its empirical risk is bounded as
\begin{align*}\label{eq:ub-hinge}
R_{\calS, hinge}(\bm{\theta}) \leq \frac{1}{2} R_{\calS_{2x}, hinge}(\bm{\theta}) - \frac{1}{2} \dotmu + q\:\:,
\end{align*}
since $c_1 =0$ and $c_2 = 1$. An alternative proof of this result on hinge is provided next, giving the exact value of $q=1/2$. The odd term for hinge loss is
\begin{align*}
l_o(x) &= \frac{1}{2} \left( [1-x]_+ - [1+x]_+ \right) \nonumber \\
&= \frac{1}{4} \left( -2x + |1-x| - |1+x| \right)
\end{align*}
 due to an arithmetic trick for the $\max$ function: $\max(a,b) = (a+b)/2 + |b-a|/2$. Then for any $x$
\begin{align*}
|1-x| &\leq |x| + 1, \\
|1+x| &\geq |x| - 1
\end{align*}
 and therefore
$$
l_o(x) \leq \frac{1}{4} (-2x + |x| + 1 - |x| +1) = \frac{1}{2}(1-x) \:\:.
$$

We also provide a ``if-and-only-if" version of Lemma \ref{th:asympt1} fully characterizing which family of losses can be upperbounded by a \lol.

\begin{lemma}
Let $l: \R \rightarrow \R$ a continuous function. Then there exists $c_1, d_1, d_2 \in \R$ such that
\begin{equation}\label{asy1}
\limsup_{x\rightarrow +\infty} l_o(x) - c_1 x - d_1= 0
\end{equation}
and
\begin{equation}\label{asy2}
\limsup_{x\rightarrow -\infty} l_o(x) - c_1 x - d_2= 0\:\:,
\end{equation}
if and only if there exists $q, q' \in \R$ such that $
l_o(x) \leq q'x + q$ for every $x\in \R$.
\end{lemma}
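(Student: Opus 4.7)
The interesting direction is $(\Leftarrow)$ because it is where the oddness of $l_o$ (a structural consequence of its definition as the odd part of $l$) does the real work: a one-sided affine upper bound with slope $q'$ is automatically promoted to a two-sided control with the same slope. This is also why the statement uses a single slope $c_1$ in the two limsup conditions rather than distinct slopes $c_1, c_2$ as in the earlier Lemma.

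First I would handle $(\Leftarrow)$. Assume $l_o(x) \leq q' x + q$ for every $x \in \R$. Since $l_o$ is odd, applying the bound at $-x$ yields $-l_o(x) = l_o(-x) \leq -q' x + q$, hence $l_o(x) \geq q' x - q$. Combining, $|l_o(x) - q' x| \leq q$ for all $x \in \R$, so the function $x \mapsto l_o(x) - q' x$ is bounded on $\R$. I would then set $c_1 \defeq q'$ and define
\begin{align*}
d_1 \defeq \limsup_{x \to +\infty} \bigl(l_o(x) - q' x\bigr), \qquad d_2 \defeq \limsup_{x \to -\infty} \bigl(l_o(x) - q' x\bigr),
\end{align*}
both lying in $[-q,q]$ and therefore finite. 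By construction of $d_1$ and $d_2$, both target limsups equal $0$, which finishes this direction.

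For $(\Rightarrow)$, I would argue by a standard compactness-plus-asymptotics split. Fix any $\epsilon > 0$. The assumption $\limsup_{x \to +\infty}(l_o(x) - c_1 x - d_1) = 0$ produces an $N_1 > 0$ such that $l_o(x) \leq c_1 x + d_1 + \epsilon$ for all $x > N_1$; symmetrically, there exists $N_2 > 0$ with $l_o(x) \leq c_1 x + d_2 + \epsilon$ for all $x < -N_2$. On the compact interval $[-N_2, N_1]$ the continuous function $x \mapsto l_o(x) - c_1 x$ attains a finite maximum $M$. Setting $q' \defeq c_1$ and $q \defeq \max\{d_1 + \epsilon,\, d_2 + \epsilon,\, M\}$ then gives $l_o(x) \leq q' x + q$ on all of $\R$.

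The only real subtlety to watch for is the $(\Leftarrow)$ direction: without invoking $l_o(-x) = -l_o(x)$ one merely has an upper bound and no reason for $l_o(x) - q'x$ to be bounded below, in which case the limsups at $\pm\infty$ could be $-\infty$ and the chosen $d_1, d_2$ would not be admissible real numbers. Oddness is exactly the ingredient that rules this out and forces a common asymptotic slope.
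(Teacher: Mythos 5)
Your proof is correct and follows essentially the same route as the paper: the $(\Rightarrow)$ direction is the same compactness-plus-tails bound on $\sup_x(l_o(x)-c_1x)$, and the $(\Leftarrow)$ direction rests on the same key observation that oddness of $l_o$ turns the affine upper bound at $-x$ into a lower bound at $x$ (the paper packages this as a contradiction via the sequence $x_n' = -x_n$, whereas you give a direct construction of $d_1, d_2$ as the limsups of the bounded function $l_o(x)-q'x$). Your direct version is, if anything, a cleaner and more explicit rendering of the same argument.
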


\begin{proof}
$\Rightarrow)$ Suppose that such limits exist and they are zero for some $c_1, d_1, d_2$. Let prove that $l_o$ is bounded from above by a line.\\
\begin{displaymath}
q = \sup_{x\in \R} \left\{l_o(x) - c_1 x\right\}  <\infty\:\:,
\end{displaymath}
because $l_o$ is continuous. So for every $x\in \R$
\begin{displaymath}
l_o(x) \leq c_1 x  + q\:\:.
\end{displaymath}
In particular we can take $c_1$ as the angular coefficient of the line.\\
$\Leftarrow)$ Vice versa we proceed by contradiction. Suppose that there exists $q,q' \in \R$ such that $l_o$ is bounded from above by $l(x) = q'x + q$. Suppose in addition that the conditions on the asymptotes $(\ref{asy1})$ and $(\ref{asy2})$ are false. This implies either the existence of a sequence $x_n \rightarrow +\infty$ such that
\begin{displaymath}
\lim_{n\rightarrow \infty} l_o(x_n) - q' x_n  \rightarrow \pm\infty\:\:,
\end{displaymath}  
or the existence of another sequence $x'_n \rightarrow -\infty$
\begin{displaymath}
\lim_{n\rightarrow \infty} l_o(y_n) - q' x'_n  \rightarrow \pm\infty \:\:.
\end{displaymath}  
On one hand, if at least one of these two limits is $+\infty$ then we already reach a contradiction, because $l_o(x)$ is supposed to be bounded from above by $l(x) = q'x + q$. Suppose on the other hand that $x_n \rightarrow +\infty$ is such that
\begin{displaymath}
\lim_{n\rightarrow +\infty} l_o(x_n) - q' x_n  \rightarrow -\infty\:\:.
\end{displaymath}  
Then defining $x'_n = -x_n$ we have 
\begin{displaymath}
\lim_{n\rightarrow +\infty} l_o(w_n) - m x'_n  \rightarrow +\infty\:\:,
\end{displaymath}  
and for the same reason as above we reach a contradiction.
\end{proof}

\section{Factorization of square loss for regression}\label{sec:regression}

We have formulated the Factorization Theorem for classification problems. However, a similar property holds for regression with square loss: $f(\dotpi, y) = (\dotpi - y_i)^2 $ factors as
\begin{align*}
 \expect_{\calS}[ (\dotp - y)^2] = \expect_{\calS} \left[ \dotp^2 \right] + \expect_{\calS} \left[ y^2 \right] - 2\langle \bm{\theta},  \meanop \rangle\:\:.
\end{align*}
Taking the minimizers on both sides we obtain
\begin{align*}
\argmin_{\bm{\theta}} \expect_{\calS}[f(\dotp, y)] & = \argmin_{\bm{\theta}} \expect_{\calS} \left[ \dotp^2 \right] - 2 \langle \bm{\theta},  \meanop \rangle \\
& = \argmin_{\bm{\theta}} \| X^\top \bm{\theta} \|^2_2 - 2 \langle \bm{\theta},  \meanop \rangle\:\:.
\end{align*}

\section{The role of \lol s in \citep{pnsCF}}\label{sec:pu}

Let $\pi_+ \defeq \mathbb{P}(y = 1)$ and let $\calD_+$ and $\calD_-$ respectively the set of positive and negative examples in $\calD$. Consider first
\begin{align}
\label{pu-first}
\expect_{(\bm{x}, \cdot) \sim \calD} \left[ l(-\dotp) \right] = \pi_+ \expect_{(\bm{x}, \cdot) \sim \calD_+} \left[ l(-\dotp) \right] + (1 - \pi_+) \expect_{(\bm{x}, \cdot) \sim \calD_-} \left[ l(-\dotp) \right]\:\:.
\end{align}
Then, it is also true that 
\begin{align}
\expect_{(\bm{x}, y) \sim \calD} \left[ l(y\dotp) \right] = \pi_+ \expect_{(\bm{x}, y) \sim \calD_+} \left[ l(y\dotp) \right] + (1 - \pi_+) \expect_{(\bm{x}, y) \sim \calD_-} \left[ l(y\dotp) \right]\:\:. \label{pu:1-2}
\end{align}
Now, solve Equation \ref{pu-first} for $(1 - \pi_+)\expect_{(\bm{x}, y) \sim \calD_-} \left[ l(y\dotp) \right] = (1 - \pi_+)\expect_{(\bm{x}, y) \sim \calD_-} \left[-l(-\dotp) \right] $ and substitute it into \ref{pu:1-2} so as to obtain:
\begin{align}
& \expect_{(\bm{x}, y) \sim \calD} \left[ l(y\dotp) \right] = \nonumber \\
&= \pi_+ \expect_{(\bm{x}, y) \sim \calD_+} \left[ l(y\dotp) \right] + \expect_{(\bm{x}, \cdot) \sim \calD} \left[ l(-\dotp) \right] - \pi_+ \expect_{(\bm{x}, \cdot) \sim \calD_+} \left[ l(-\dotp) \right] \nonumber \\
&= \pi_+ \left( \expect_{(\bm{x}, y) \sim \calD_+} \left[ l(+\dotp) \right] - \expect_{(\bm{x}, \cdot) \sim \calD_+} \left[ l(-\dotp) \right] \right) + \expect_{(\bm{x}, \cdot) \sim \calD} \left[ l(-\dotp) \right] \nonumber \\
&= 2 \pi_+ \expect_{(\bm{x}, y) \sim \calD_+} \left[ l_o(+\dotp) \right]+ \expect_{(\bm{x}, \cdot) \sim \calD} \left[ l(-\dotp) \right]\:\: \nonumber \label{pu:second} 
\end{align}
By our usual definition of $l_o(x) = \frac{1}{2}(l(x) - l(-x))$. Recall that one of the goal of the authors is to conserve the convexity of this new crafted loss function. Then, \citep[Theorem 1]{pnsCF} proceeds stating that when $l_o$ is convex, it must also be linear.  And therefore they must focus on \lol s. The result of \citep[Theorem 1]{pnsCF} is immediate from the point of view of our theory: in fact, an odd function can be convex or concave only if it also linear. The resulting expression based on the fact $l(x) - l(-x) = 2ax$ simplifies into
\begin{align*}
\expect_{(\bm{x}, y) \sim \calD} \left[ l(y\dotp) \right] & =  a \pi_+ \expect_{(\bm{x}, y) \sim \calD_+} \left[ y\dotp \right]+ \expect_{(\bm{x}, \cdot) \sim \calD} \left[ l(-\dotp) \right] \\
&= a \pi_+ \meanop_{\calD_+} + \expect_{(\bm{x}, \cdot) \sim \calD} \left[ l(-\dotp) \right]\:\:.
\end{align*}
where $\meanop_{\calD_+}$ is a mean operator computed on positive examples only. Notice how the second term is instead label independent, although it is not an even function as in our Factorization Theorem.

\section{Additional examples of loss factorization}

\begin{table*}[h!]
\centering
$$
\begin{array}{|l|l|c|c|} \hline
\mbox{}  & \mbox{loss}  & \mbox{even function $l_e$} & \mbox{odd function $l_o$} \\ \toprule \hline 
\mbox{generic}  &  l(x) & \frac{1}{2}(l(x)+l(-x)) & \frac{1}{2}(l(x)-l(-x)) \\ \hline
\mbox{01}  &  1\{x \leq 0\} & 1 - \frac{1}{2} \{x \neq 0\}  & -\frac{1}{2} \sgn(x) \\ \hline
\mbox{exponential}  &  e^{-x} & \cosh(x) & -\sinh(x)\\ \hline 
\mbox{hinge}  &  [1 - x]_+ & \frac{1}{2} ([1 - x]_+ - [1 - x]_+) & \frac{1}{2} ([1 - x]_+ - [1 + x]_+) ^{\dagger} \\  \hline
\mbox{\lol}  &  l(x) & \frac{1}{2}(l(x)+l(-x))  & -ax \\ 
\mbox{$\rho$-loss}  & \rho |x|-\rho x+1 & \rho |x|+1 & -\rho x \:\:(\rho \geq 0) \\ 
\mbox{unhinged}  & 1-x & 1 & -x \\ 
\mbox{perceptron} & \max(0, -x) & x~\sign(x) & -x \\ 
\mbox{2-hinge} & \max(-x, 1/2 \max (0,  1-x)) &  \dagger\dagger   & -x \\
\mbox{\spl}  &  a_l+ l^\star(-x) / b_l& a_l + \frac{1}{2 b_l}(l^\star(x)+l^\star(-x)) & - x / (2 b_l)\\
\mbox{logistic}  & \log (1+e^{-x}) & \frac{1}{2}\log (2+e^x+e^{-x}) & - x /2\\ 
\mbox{square} & (1-x)^2 & 1+x^2 & -2x \\
\mbox{Matsushita} & \sqrt{1+x^2}-x & \sqrt{1+x^2} & -x \\ \bottomrule
\end{array}
$$
\caption{Factorization of losses in light of Theorem \ref{th:factorization2}. $^{\dagger}$The odd term of hinge loss is upperbounded by $(1 - x)/2$ in \ref{proof:upper}. $^{\dagger\dagger} = \max(-x, 1/2 \max (0,  1-x)) + \max(x, 1/2 \max (0,  1+x))$.
\label{table:1}}
\end{table*}

\begin{figure}[h]
\centering
\subfigure[0-1 loss]{\label{fig:01} \includegraphics[width=.35\textwidth]{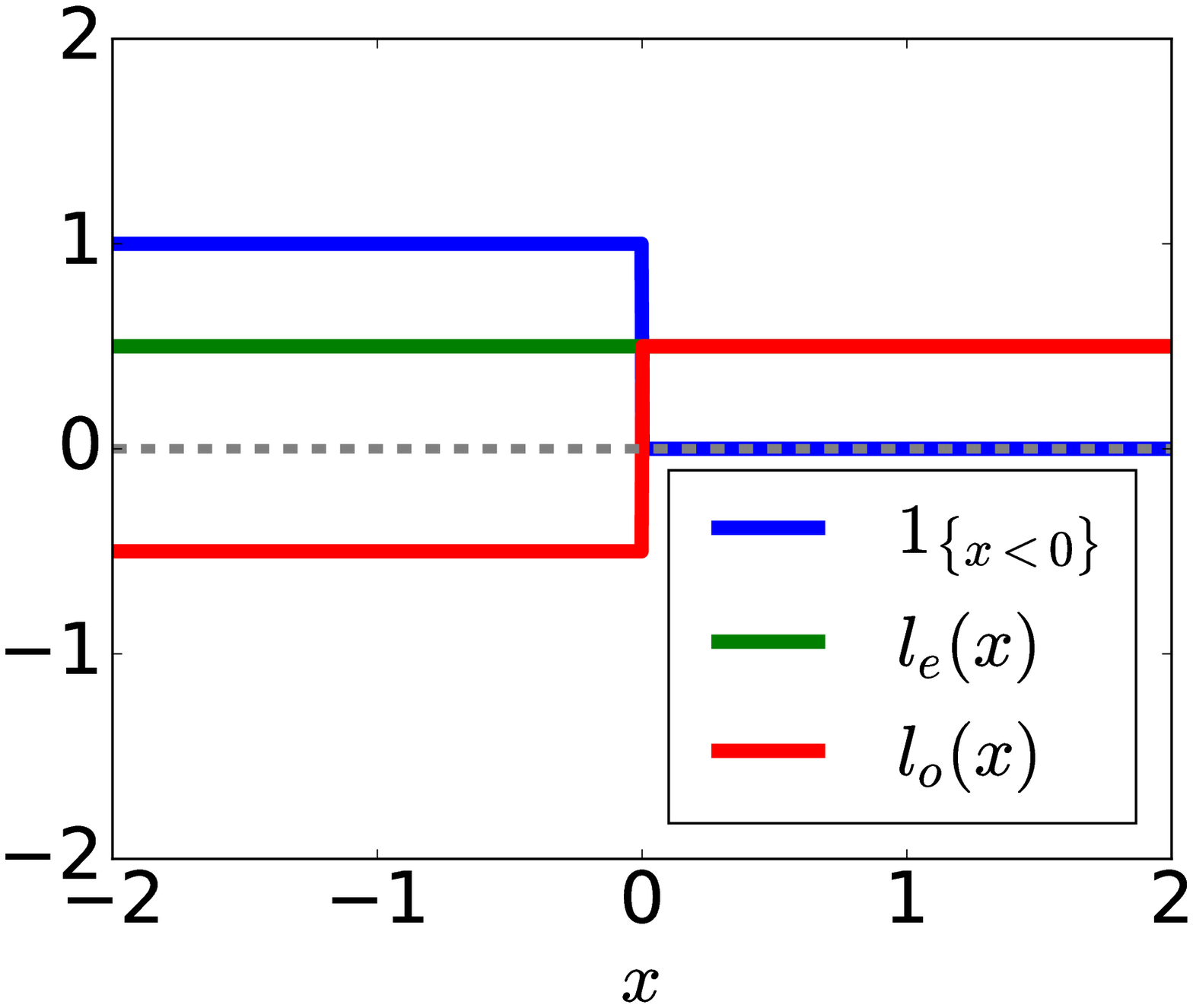}}
\subfigure[Matsushita loss]{\label{fig:matsushita} \includegraphics[width=.35\textwidth]{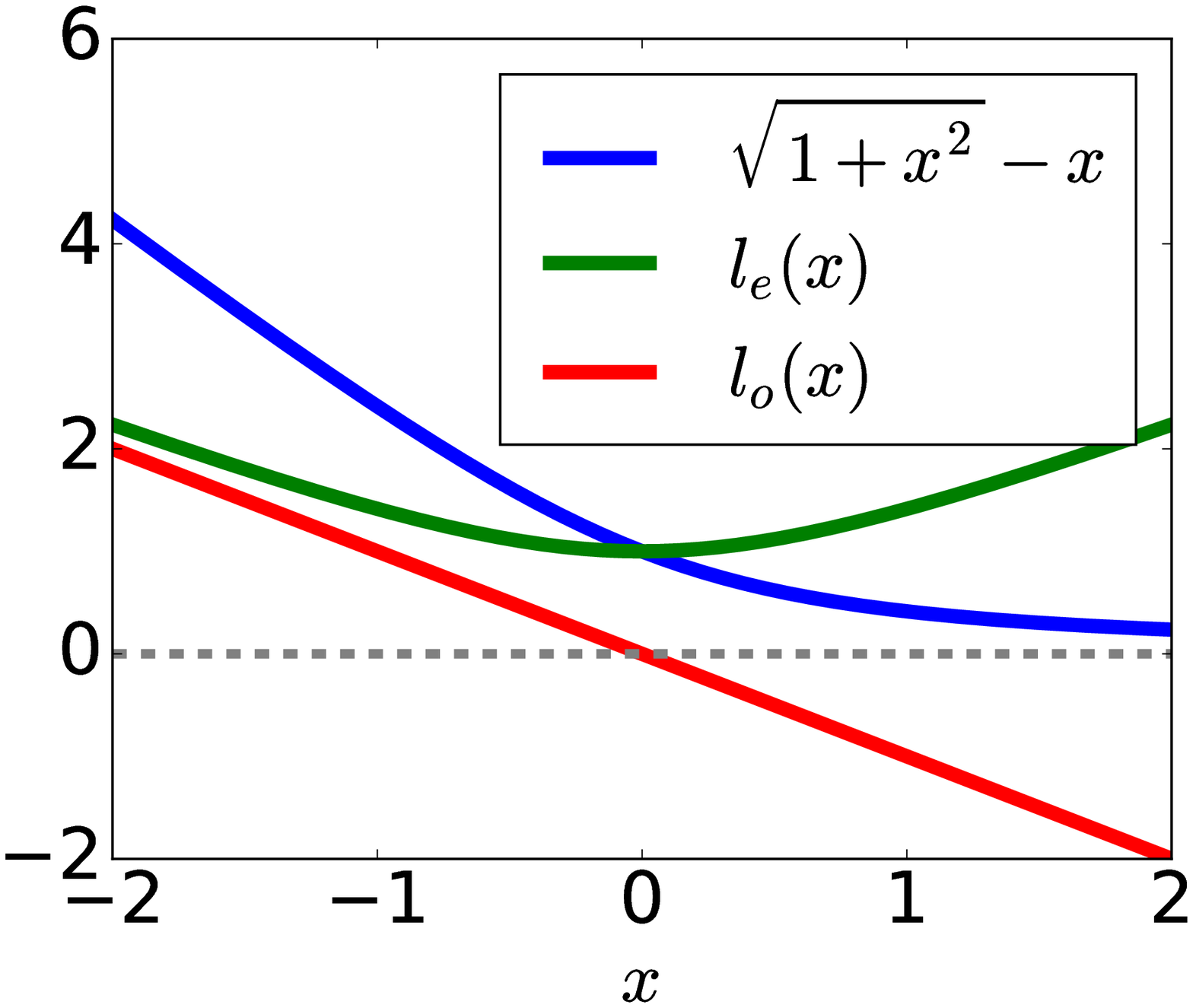}} \\
\subfigure[$\rho$-loss, $\rho$=1]{\label{fig:abs} \includegraphics[width=.35\textwidth]{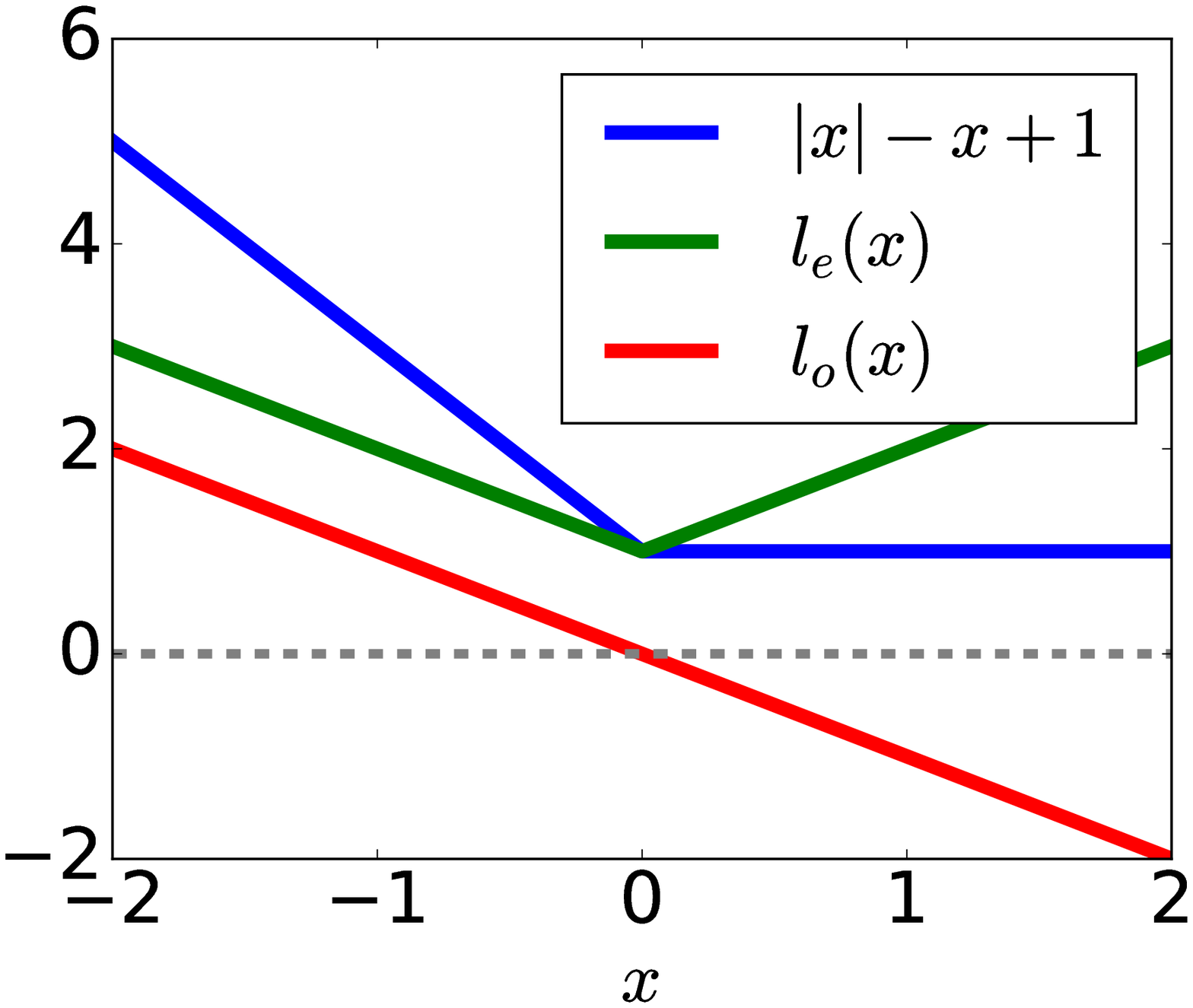}} 
\subfigure[2-hinge loss]{\label{fig:2-hinge2} \includegraphics[width=.35\textwidth]{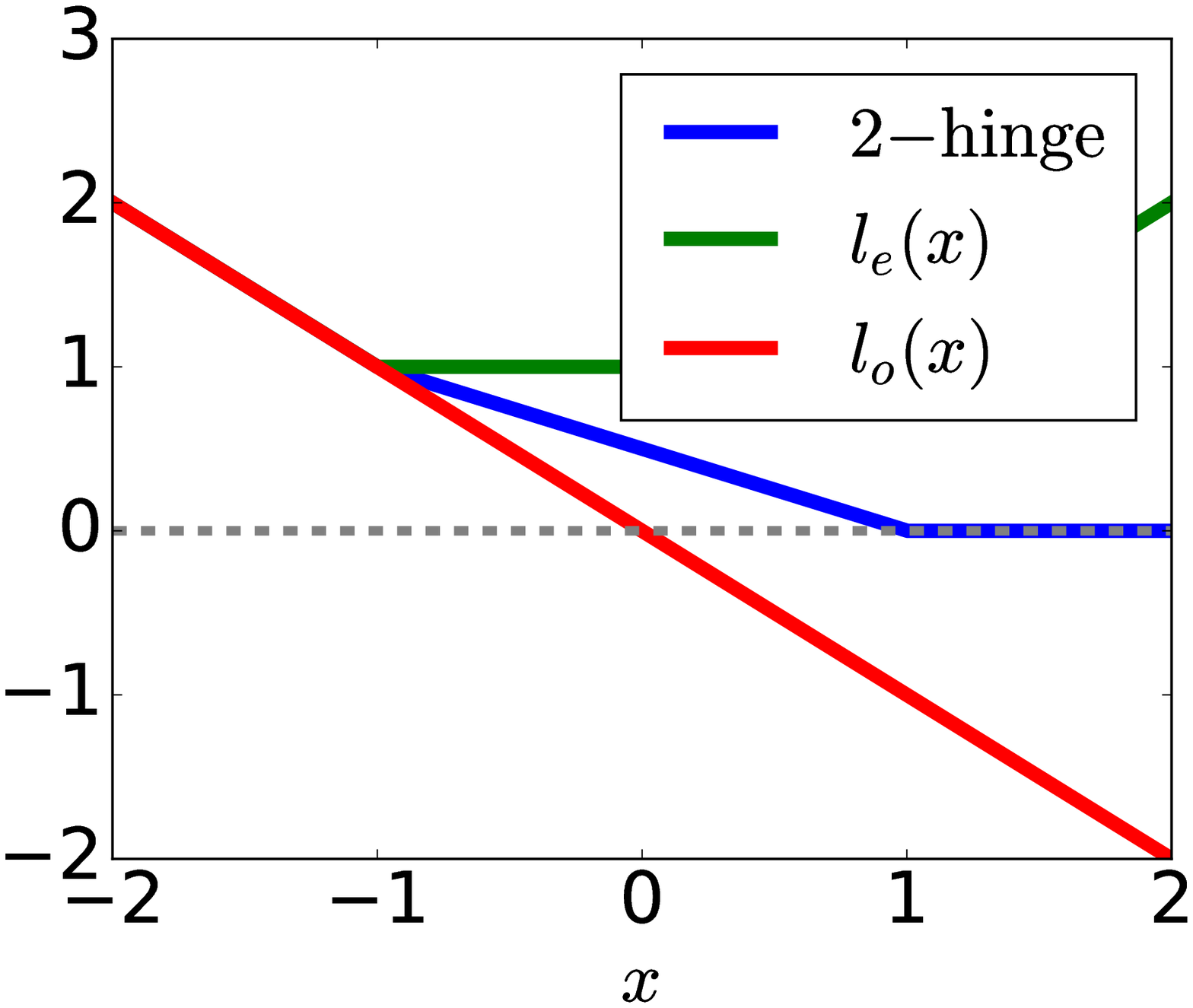}} \\
\subfigure[hinge loss]{\label{fig:hinge} \includegraphics[width=.35\textwidth]{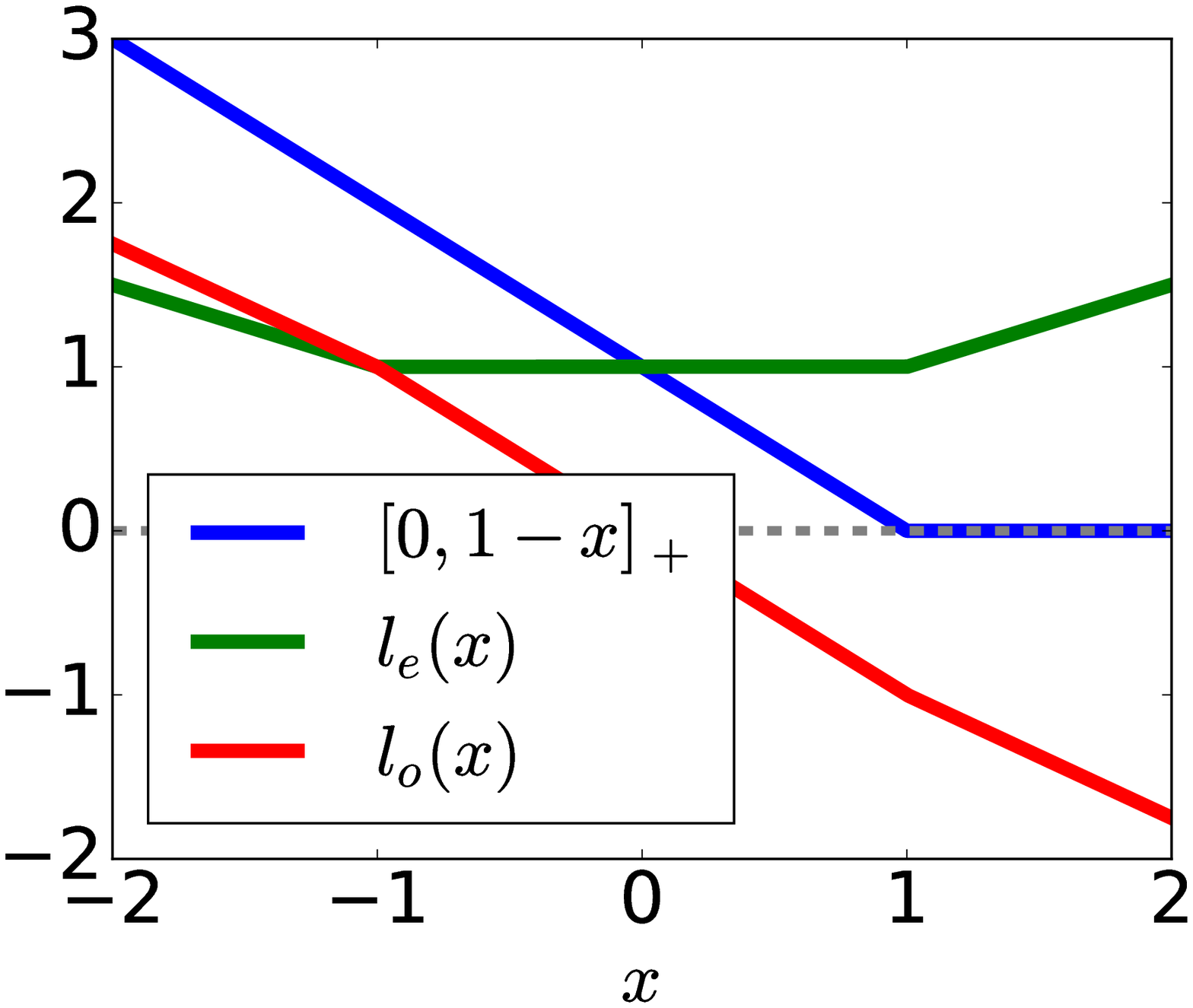}}
\subfigure[Huber loss]{\label{fig:huber} \includegraphics[width=.35\textwidth]{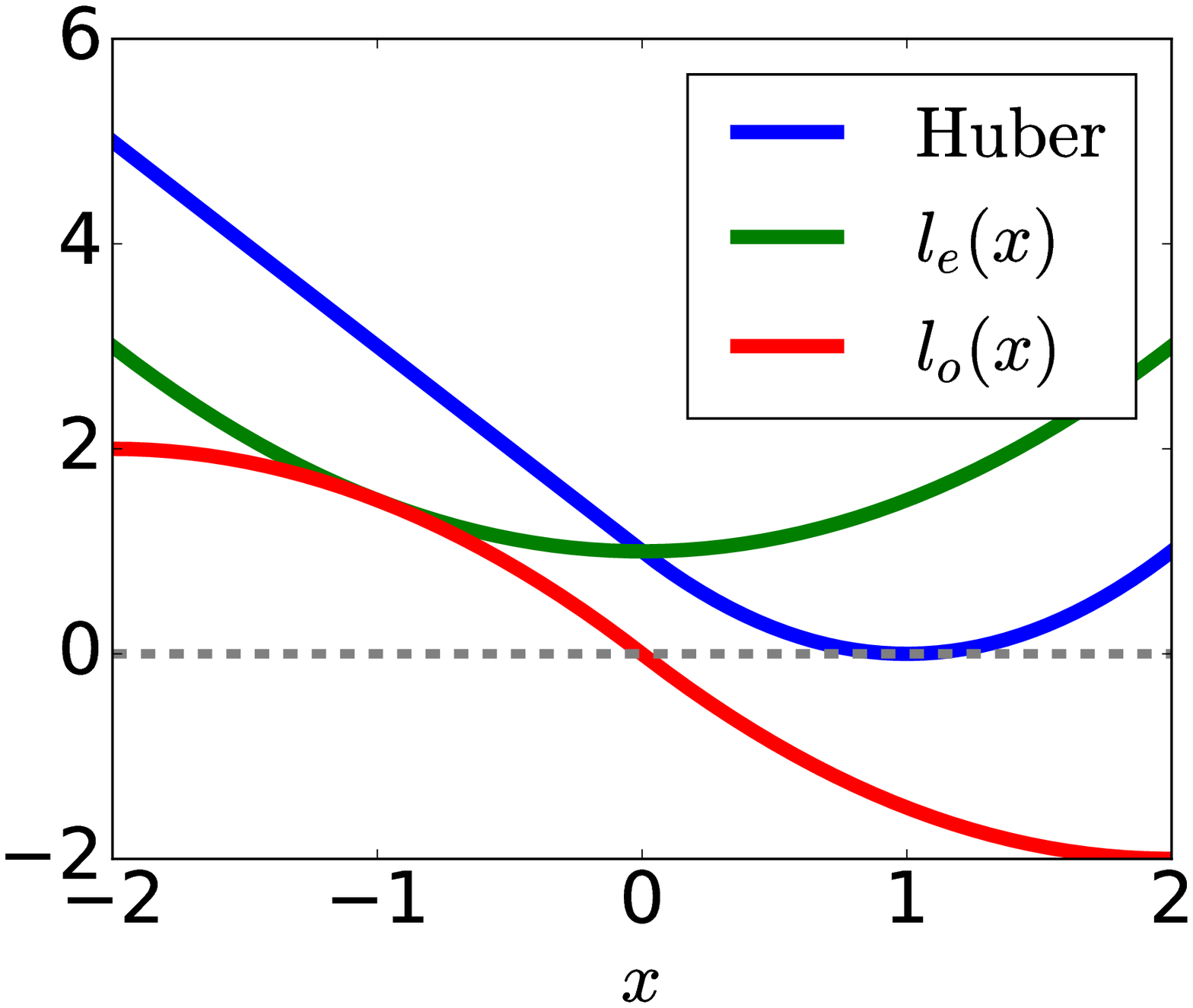}}
\end{figure}

\end{document}